\newtheorem{theorem}{Theorem}
\newtheorem{definition}[theorem]{Definition}
\newtheorem{lemma}[theorem]{Lemma}
\newtheorem{remark}[theorem]{Remark}
\numberwithin{theorem}{section}
\newcommand{\Lip}{\text{Lip}}
\newcommand{\esssup}{\text{ess\,sup}}
\newcommand{\req}[1]{(\ref{#1})}
\title{Function-space regularized R{\'e}nyi divergences}
\author{  Jeremiah Birrell\\
 Department of Mathematics and Statistics \\
University of Massachusetts Amherst \\
Amherst, MA 01003, USA \\
  \texttt{birrell@math.umass.edu} \\
  %% examples of more authors
       \And
 Yannis Pantazis \\
  Institute of Applied and Computational Mathematics\\
  Foundation for Research and Technology - Hellas\\
  Heraklion, GR-70013, Greece \\
  \texttt{pantazis@iacm.forth.gr}\\
   \And
 Paul Dupuis\\
  Division of Applied Mathematics\\
  Brown University\\
  Providence, RI 02912, USA \\
  \texttt{dupuis@dam.brown.edu} \\
  \And
    Markos A. Katsoulakis\\
    Department of Mathematics and Statistics\\
  University of Massachusetts Amherst\\
  Amherst, MA 01003,  USA \\
  \texttt{markos@math.umass.edu} \\
\And
    Luc Rey-Bellet\\
    Department of Mathematics and Statistics\\
  University of Massachusetts Amherst\\
  Amherst, MA 01003,  USA \\
  \texttt{luc@math.umass.edu} 
}
\begin{document}
\maketitle

\begin{abstract}
We propose a new family of regularized Rényi divergences parametrized not only by the order $\alpha$ but also by a variational function space. These new objects are defined by taking the infimal convolution of the standard R\'enyi divergence with the integral probability metric (IPM) associated with the chosen function space. We derive a novel dual variational representation that can be used to construct numerically tractable divergence estimators. This representation avoids risk-sensitive terms and therefore exhibits lower variance, making it well-behaved  when $\alpha>1$; this addresses a notable weakness of prior approaches. We prove several properties of these new divergences, showing that they interpolate between the classical R\'enyi divergences and IPMs. We also study the $\alpha\to\infty$ limit, which leads to a regularized worst-case-regret and a new variational representation in the classical case. Moreover, we show that the proposed regularized R\'enyi divergences inherit features from IPMs such as the ability to compare distributions that are not absolutely continuous, e.g., empirical measures and distributions with low-dimensional support. We present numerical results on both synthetic and real datasets, showing the utility of these new divergences in both estimation and GAN training applications; in particular, we demonstrate significantly reduced variance and improved training performance.
\end{abstract}

\keywords{R{\'e}nyi divergence \and Worst-case regret \and Regularization \and Integral probability metric  \and Variational representations  \and GANs}

\vspace{-2mm}
\section{Introduction}
\vspace{-2mm}
R\'enyi divergence, \cite{Renyi1961}, is a significant extension of Kullback-Leibler (KL) divergence for numerous applications; see, e.g., \cite{van2014renyi}. 
The recent neural-based estimators for divergences \cite{belghazi2018mutual} along with generative adversarial networks (GANs) \cite{GPMXWOCB14} accelerated the use of divergences in the field of deep learning.
The neural-based divergence estimators are feasible through the utilization of variational representation formulas. These formulas are essentially lower bounds (and, occasionally, upper bounds) which are approximated by tractable statistical averages.
The estimation of a divergence based on variational formulas is a notoriously difficult problem. Challenges include  potentially high bias that may require an exponential number of samples \cite{pmlr-v108-mcallester20a} or the exponential statistical variance for certain variational estimators  \cite{2019arXiv191006222S}, rendering  divergence estimation both data inefficient and computationally expensive. This is especially prominent for R\'enyi divergences with order larger than 1. Indeed, numerical simulations have shown that, unless the distributions $P$ and $Q$ are very close to one another, the R\'enyi divergence $R_\alpha(P\|Q)$ is almost intractable  to estimate when $\alpha > 1$  due to the high variance of the statistically-approximated risk-sensitive observables \cite{doi:10.1137/20M1368926}, { see also the recent analysis in  \cite{RenyiCL:2022}}. A similar issue has also been observed for the KL divergence, \cite{2019arXiv191006222S}. { Overall, the lack of estimators with low variance for R\'enyi divergences has prevented wide-spread and accessible experimentation with this class of information-theoretic tools, except in very special cases. We hope our results here  will provide a suitable set of tools to address this gap in the methodology.}

One approach to variance reduction is the development of new variational formulas. This direction is especially fruitful for the estimation of mutual information \cite{DBLP:journals/corr/abs-1807-03748,cheng2020club}. 
Another approach is to regularize the divergence by restricting the function space of the variational formula. Indeed, instead of directly attacking the variance issue, the function space of the variational formula can be restricted, for instance, by bounding the test functions or more appropriately by bounding the derivative of the test functions. The latter regularization leads to Lipschitz continuous function spaces which are also foundational to integral probability metrics (IPMs) and more specifically to the duality property of the Wasserstein metric.  In this paper we combine the above two approaches, first deriving a new variational representation of the classical R{\'e}nyi divergences and then regularizing via an infimal-convolution as follows
\begin{align}\label{eq:Gamma_renyi_intro_def}
    R_\alpha^{\Gamma,IC}(P\|Q)\coloneqq \inf_{\eta}\{R_\alpha(P\|\eta)+W^\Gamma(Q,\eta)\}\,,
\end{align}
where $P$ and $Q$ are the probability distributions being compared, the infimum is over the space of probability measures, $R_\alpha$ is the classical  R{\'e}nyi  divergence, and $W^\Gamma$ is the IPM corresponding to the chosen regularizing function space, $\Gamma$.

% However, caution is required when the restriction of the function space is applied. Indeed, it is not adequate to just replace the function space but appropriate modifications to the objective functional are required as it was shown in \cite{JMLR:v23:21-0100} for the $f$-divergences.

% Here, instead of directly attacking the variance issue, we define new class of divergences that interpolate between Renyi and IPMs. The new divergences combine advantages or Renyi divergences (e.g. capturing discrepancies in the tail of a distribution) and Wasserstein (e.g. the ability to compare mutually singular distributions).

The new family of regularized R{\'e}nyi divergences that are developed here address the risk-sensitivity issue inherent in prior approaches.  More specifically, our contributions are as follows.
\begin{itemize}
\item We define a new family of function-space regularized R{\'e}nyi divergences via the infimal convolution operator between the classical Rényi divergence and an arbitrary IPM \req{eq:Gamma_renyi_intro_def}. The new regularized Rényi divergences 
inherit their function space from the IPM. {For instance, they inherit mass transport properties when one regularizes using the 1-Wasserstein metric.}

\item We derive a dual variational representation \req{eq:Renyi_inf_conv} of the regularized R{\'e}nyi divergences which avoids risk-sensitive terms and can therefore be used to construct lower-variance statistical estimators.

\item We prove a series of properties for the new object: (a) the divergence property, (b) being bounded by the minimum of the R{\'e}nyi divergence and IPM, thus allowing for the comparison of non-absolutely continuous distributions, (c) limits as $\alpha\to 1$ from both left and right, (d) regimes in which the limiting cases $R_\alpha(P\|Q)$ and $W^\Gamma(Q,P)$ are recovered.
\item We propose a rescaled version of the regularized Rényi divergences \req{eq:D_alpha_Gamma_IC_def} which lead to a new variational formula for the worst-case regret (i.e., $\alpha\to\infty$). This new variational formula does not involve the essential supremum of the density ratio as in the classical definition of worst-case regret, thereby avoiding risk-sensitive terms.

\item We present a series of illustrative examples and counterexamples that further motivate the proposed definition for the function-space regularized Rényi divergences. 
%Especially in comparison with the DV variational representation. \tdi{I'd remove this last sentence}

\item We present numerical experiments that show (a) that we can estimate the new divergence for large values of the order $\alpha$ without variance issues and (b) train GANs using regularized function spaces.

\end{itemize}

{\bf Related work.} 
  The order of R\'enyi divergence controls the weight put on the tails, with the limiting cases being mode-covering and mode-selection \cite{minka2005divergence}. R\'enyi divergence estimation is used in a number of applications, including  \cite{neco_a_01484} (behavioural sciences),  \cite{RDP:Mironov} (differential privacy), and \cite{li2016renyi} (variational inference);  in the latter the variational formula is an adaptation of the evidence lower bound. R\'enyi divergences have been also applied in the training of GANs \cite{Bhatia2020RenyiGA} (loss function for binary classification - discrete case) and in  \cite{CUMGAN} (continuous case, based on the R\'enyi-Donsker-Varahdan variational formula in  \cite{doi:10.1137/20M1368926}).
  R\'enyi divergences with  $\alpha >1$ are also used in contrastive representation learning,
\cite{RenyiCL:2022}, as well as in PAC-Bayesian Bounds, \cite{PAC-Renyi}.
In the context of uncertainty quantification and sensitivity analysis, R\'enyi divergences provide confidence bounds for rare events, \cite{dupuis:renyi, DupuisKatsoulakisPantazisRey-Bellet}, with higher rarity corresponding to larger $\alpha$.

Reducing the variance of divergence estimators through  control of the function space have been recently proposed. In \cite{2019arXiv191006222S} an explicit bound to the output restricts the divergence values. A systematic theoretical framework on how to regularize through the function space has been developed in \cite{Dupuis:Mao,JMLR:v23:21-0100} for the KL and $f$-divergences. Despite not covering the R\'enyi divergence, the theory in \cite{Dupuis:Mao,JMLR:v23:21-0100} and particularly the infimal-convolution formulation clearly inspired the current work. 
%However, adapting the infimal-convolution method to the R{\'e}nyi divergence setting requires several technical innovations, including the development of a new variational formula for the classical R{\'e}nyi divergences and worst-case-regret; see Theorems \ref{thm:Renyi_var_LT_main} and \ref{thm:worst_case_regret_var} below.
%
However, adapting the infimal-convolution method to the R{\'e}nyi divergence setting requires two new technical innovations: 
(a) We develop a new low-variance convex-conjugate variational formula for the classical R{\'e}nyi divergence in Theorem \ref{thm:Renyi_var_LT_main} (see also Fig.~\ref{fig:var_example}), allowing us to apply infimal-convolution tools to develop the new $\Gamma$-R{\'e}nyi divergences in 
Theorem \ref{thm:Renyi_inf_conv}.
(b) We study the $\alpha\to\infty$ limit of (a) to obtain a new low-variance variational representation of worst-case regret in Theorem \ref{thm:worst_case_regret_var} and study its $\Gamma$-regularization in Theorem \ref{thm:alpha_infinity_limit}.

\vspace{-2mm}
\section{New variational representations of  classical R{\'e}nyi divergences}
\vspace{-2mm}
The R{\'e}nyi divergence of order $\alpha\in(0,1)\cup(1,\infty)$ between $P$ and $Q$, denoted $R_\alpha(P\|Q)$, can be defined as follows: Let $\nu$ be a sigma-finite positive measure with  $dP=pd\nu$ and $dQ=qd\nu$. Then
\begin{align}\label{eq:Renyi_formula}
&R_\alpha(P\|Q)=\begin{cases} 
     \frac{1}{\alpha(\alpha-1)}\log\!\left[\int_{q>0} p^\alpha q^{1-\alpha} d\nu\right] &\begin{array}{l}\text{if }0<\alpha<1 \text{ or }\\
\alpha>1 \text{ and } P\ll Q \end{array}\,,\\
       +\infty &\text{ if }\alpha>1\text{ and } P\not\ll Q\,.
         \end{cases}
\end{align}
There always exists such a $\nu$ (e.g., $\nu=P+Q$) and one can show that the definition \req{eq:Renyi_formula} does not depend on the choice of $\nu$. The $R_\alpha$ provide a notion of `distance' between $P$ and $Q$ in that they satisfy the divergence property, i.e., they are non-negative and equal zero iff $Q=P$. The limit of $R_\alpha$ as $\alpha$ approaches $1$ or $0$ equals the KL or reverse KL divergence respectively \cite{van2014renyi}.

Next we present two different variational representations of the R{\'e}nyi divergences, one of which is new.   First we recall the R{\'e}nyi-Donsker-Varadhan variational formula derived in \cite{doi:10.1137/20M1368926}. This is a generalization of the Donsker-Varadhan variational representation of the KL divergence, thus we call it the \textbf{Donsker-Varadhan R{\'e}nyi variational formula} ({\bf DV-R{\'e}nyi}).
\begin{theorem}[Donsker-Varadhan R{\'e}nyi Variational Formula]\label{thm:gen_DV}
Let  $P$ and $Q$ be probability measures on $(\Omega,\mathcal{M})$ and $\alpha\in\mathbb{R}$, $\alpha\neq 0,1$. Then for any set of functions, $\Phi$, with $\mathcal{M}_b(\Omega)\subset\Phi\subset\mathcal{M}(\Omega)$  we have
\begin{align}\label{eq:Renyi_var}
&R_\alpha(P\|Q)=\sup_{\phi\in \Phi}\left\{\frac{1}{\alpha-1}\log\int e^{(\alpha-1)\phi}dP-\frac{1}{\alpha}\log\int e^{\alpha \phi}dQ\right\},
\end{align}
where we interpret $\infty-\infty\equiv-\infty$ and $-\infty+\infty\equiv-\infty$.  

If in addition $(\Omega,\mathcal{M})$ is a metric space with the Borel $\sigma$-algebra then \req{eq:Renyi_var} holds for all $\Phi$ that satisfy $\Lip_b(\Omega)\subset\Phi\subset\mathcal{M}(\Omega)$, where $\Lip_b(\Omega)$ is the space of bounded Lipschitz functions on $\Omega$ (i.e., Lipschitz for any Lipschitz constant $L\in(0,\infty)$). 
\end{theorem}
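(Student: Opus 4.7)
I will establish the equality by proving both directions separately. For the upper bound, fix any $\phi\in\mathcal{M}(\Omega)$, set $f=e^{(\alpha-1)\phi}$ (so $e^{\alpha\phi}=f^{\alpha/(\alpha-1)}$), and write $dP=p\,d\nu$, $dQ=q\,d\nu$. When $\alpha>1$ and $P\ll Q$, I will decompose $fp=(fq^{(\alpha-1)/\alpha})(pq^{-(\alpha-1)/\alpha})$ on $\{q>0\}$ and apply H\"older's inequality with conjugate exponents $\alpha/(\alpha-1)$ and $\alpha$ to obtain $\int f\,dP\leq\bigl(\int f^{\alpha/(\alpha-1)}dQ\bigr)^{(\alpha-1)/\alpha}\bigl(\int p^\alpha q^{1-\alpha}d\nu\bigr)^{1/\alpha}$. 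Taking logarithms, dividing by $\alpha-1>0$, and rearranging yields exactly the required bound by $R_\alpha(P\|Q)$. The cases $0<\alpha<1$ and $\alpha<0$ are handled analogously using the reverse H\"older inequality, whose direction flips back upon dividing by $\alpha-1<0$; when $\alpha>1$ and $P\not\ll Q$, $R_\alpha=+\infty$ and the bound is automatic.

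\textbf{Matching lower bound on $\mathcal{M}_b(\Omega)$.} The natural optimizer is $\phi^\star=\log(p/q)$ on $\{q>0\}$: direct substitution gives $\int e^{(\alpha-1)\phi^\star}dP=\int e^{\alpha\phi^\star}dQ=\int p^\alpha q^{1-\alpha}d\nu$, and since $\tfrac{1}{\alpha-1}-\tfrac{1}{\alpha}=\tfrac{1}{\alpha(\alpha-1)}$ the variational expression evaluates to exactly $R_\alpha(P\|Q)$. Because $\phi^\star$ may be unbounded or take the value $-\infty$, I will use the double truncation $\phi^\star_{n,m}=\min(n,\max(-m,\phi^\star))\in\mathcal{M}_b(\Omega)$, pass $m\to\infty$ first (using the dominating constant $e^{(\alpha-1)n}$ and dominated convergence on one integral, together with monotone convergence on the other), and then $n\to\infty$ by monotone convergence, recovering $R_\alpha$ in the limit. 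For the divergent case $\alpha>1$, $P\not\ll Q$, I pick a measurable $A$ with $P(A)>0$, $Q(A)=0$ and take $\phi_n=n\mathbf{1}_A$: the first log-term grows as $n+\tfrac{1}{\alpha-1}\log P(A)\to+\infty$ while the second remains bounded, driving the expression to $+\infty=R_\alpha(P\|Q)$.

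\textbf{Extension to $\Lip_b(\Omega)$ and principal obstacle.} To upgrade to the metric-space statement I must approximate each bounded Borel $\phi^\star_{n,m}$ by elements of $\Lip_b(\Omega)$ while preserving a uniform $L^\infty$ bound so that dominated convergence carries through both exponential integrals simultaneously. My plan is to invoke Lusin's theorem to replace $\phi^\star_{n,m}$ by a bounded continuous function on a set of $(P+Q)$-measure arbitrarily close to full, then to mollify or use a McShane-type Lipschitz extension to produce admissible approximants with the same $L^\infty$ bound. For the divergent case, $\mathbf{1}_A$ is replaced by a bounded Lipschitz tent supported in an open $A'\supset A$ with $Q(A')$ arbitrarily small (by outer regularity of $Q$), and a joint limit sends the variational expression to $+\infty$. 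The principal difficulty is keeping both exponential integrals under simultaneous control during smoothing---the second would blow up if the Lipschitz approximants were forced into too thin a neighborhood of $A$, or if the slope had to be capped---which is precisely why the hypothesis requires $\Phi$ to contain Lipschitz functions of \emph{every} Lipschitz constant.
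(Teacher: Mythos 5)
The paper does not prove Theorem~\ref{thm:gen_DV}; it is imported verbatim from the cited reference \cite{doi:10.1137/20M1368926}, so there is no in-paper argument to compare against. Judged on its own terms, your plan follows the standard route for such variational formulas and the two central calculations are correct: the H\"older step
\begin{align}
\int e^{(\alpha-1)\phi}\,dP \le \Bigl(\int e^{\alpha\phi}\,dQ\Bigr)^{(\alpha-1)/\alpha}\Bigl(\int_{q>0}p^\alpha q^{1-\alpha}\,d\nu\Bigr)^{1/\alpha}
\end{align}
with conjugate exponents $\alpha/(\alpha-1)$ and $\alpha$ for $\alpha>1$ is exactly right and gives the upper bound after dividing by $\alpha-1>0$; and the candidate optimizer $\phi^\star=\log(p/q)$ does make both integrals equal $\int p^\alpha q^{1-\alpha}d\nu$, yielding $R_\alpha$ via $\tfrac{1}{\alpha-1}-\tfrac1\alpha=\tfrac{1}{\alpha(\alpha-1)}$. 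The double truncation $\phi^\star_{n,m}$ together with the monotone/dominated convergence limits is a correct way to make this rigorous in $\mathcal M_b(\Omega)$, including the $P\not\ll Q$ regime for $0<\alpha<1$ (the $\{q=0\}$ contribution to $\int e^{(\alpha-1)\phi^\star_{n,m}}dP$ is killed as $n\to\infty$ since $\alpha-1<0$) and the divergent $\alpha>1$, $P\not\ll Q$ case via $\phi_n=n\mathbf 1_A$.

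Two places need tightening. (i) For $0<\alpha<1$ and $\alpha<0$ you invoke ``reverse H\"older'' without writing it; be explicit that $\alpha/(\alpha-1)$ and $\alpha$ then have opposite signs (or lie in $(0,1)$ and $(-\infty,0)$ respectively), that the inequality reverses, and that dividing by $\alpha-1<0$ restores the needed direction. You must also confirm the boundary cases where an integral is $0$ or $\infty$, since the theorem assigns $\infty-\infty\equiv-\infty$ and you need these not to break the supremum argument. (ii) The $\Lip_b$ upgrade is only a sketch and is not fully correct as written: Lusin gives a bounded \emph{continuous} $\tilde\phi$, not a Lipschitz one, so a McShane extension does not directly apply. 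The fix is the Pasch--Hausdorff/Moreau regularization $\tilde\phi_L(x)=\inf_y\{\tilde\phi(y)+L\,d(x,y)\}$, which is $L$-Lipschitz, satisfies $\inf\tilde\phi\le\tilde\phi_L\le\tilde\phi$, and increases pointwise to $\tilde\phi$ as $L\to\infty$; the uniform $L^\infty$ bound then lets bounded convergence control both exponential integrals, and the $(P+Q)(E_\epsilon^c)<\epsilon$ error from Lusin is handled exactly as you indicate. Similarly, for the divergent case, replace ``mollify'' (which presupposes Euclidean structure) by: take a compact $K\subset A$ with $P(K)>0$ and an open $A'\supset A$ with $Q(A')$ small; since $d(K,(A')^c)>0$, the tent $g(x)=\max\bigl(0,1-d(x,K)/d(K,(A')^c)\bigr)$ is bounded Lipschitz, and $\phi_n=ng$ drives the expression to $+\infty$. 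With those repairs your proposal is a complete and correct proof.
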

Here $(\Omega,\mathcal{M})$ denotes a measurable space, $\mathcal{M}(\Omega)$ is the space of measurable real-valued functions on $\Omega$, $\mathcal{M}_b(\Omega)$ is the subspace of bounded functions, and $\mathcal{P}(\Omega)$ denotes be the space of probability measures on $\Omega$.

By a change of variables argument \req{eq:Renyi_var} can be transformed into the following new   variational representation, which we call the {\bf convex-conjugate R{\'e}nyi variational formula} ({\bf CC-R{\'e}nyi}).
\begin{theorem}[Convex-Conjugate R{\'e}nyi Variational Formula]\label{thm:Renyi_var_LT_main}
Let $P,Q\in\mathcal{P}(\Omega)$ and $\alpha\in(0,1)\cup(1,\infty)$.  Then
\begin{align}\label{eq:Renyi_LT_var_Mb_main}
    R_\alpha(P\|Q)=\sup_{g\in \mathcal{M}_b(\Omega):g<0}\left\{\int gdQ+\frac{1}{\alpha-1}\log\int |g|^{(\alpha-1)/\alpha}dP\right\}+\alpha^{-1}(\log\alpha+1)\,.
\end{align}
If $(\Omega,\mathcal{M})$ is a metric space with the Borel $\sigma$-algebra then \req{eq:Renyi_LT_var_Mb_main} holds with  $\mathcal{M}_b(\Omega)$ replaced by $C_b(\Omega)$, the space of bounded continuous real-valued functions on $\Omega$.
\end{theorem}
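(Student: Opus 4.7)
The plan is to derive (\ref{eq:Renyi_LT_var_Mb_main}) from the Donsker--Varadhan R\'enyi formula (Theorem~\ref{thm:gen_DV}) via the change of variables $g=-e^{\alpha\phi}$, supplemented by an optimization over the multiplicative rescaling $g\mapsto cg$ with $c>0$. The argument works verbatim with $\mathcal{M}_b$ replaced by $C_b$, since $\Lip_b\subset C_b\subset\mathcal{M}$ makes $C_b$ an admissible choice of $\Phi$ in Theorem~\ref{thm:gen_DV}.

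\textbf{Step 1: change of variables and the additive constant.} The map $\phi\mapsto g:=-e^{\alpha\phi}$ is a bijection between $\mathcal{M}_b(\Omega)$ and $\mathcal{G}:=\{g\in\mathcal{M}_b(\Omega):\sup g<0\}$. Writing $F(g)$ for the bracketed expression in (\ref{eq:Renyi_LT_var_Mb_main}), a direct substitution gives, for $g\in\mathcal{G}$ and $c>0$,
\[ F(cg)=c\!\int g\,dQ+\alpha^{-1}\log c+\tfrac{1}{\alpha-1}\log\!\int|g|^{(\alpha-1)/\alpha}dP, \]
a strictly concave function of $c$ whose unique maximum is attained at $c^\ast=-(\alpha\int g\,dQ)^{-1}$, with value $-\alpha^{-1}(1+\log\alpha)-\alpha^{-1}\log(-\!\int g\,dQ)+\tfrac{1}{\alpha-1}\log\int|g|^{(\alpha-1)/\alpha}dP$. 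Reparametrizing $g=-h^\alpha$ with $\phi=\log h\in\mathcal{M}_b$, this coincides with $H(\phi)-\alpha^{-1}(1+\log\alpha)$, where $H$ is the integrand in (\ref{eq:Renyi_var}). Because $\mathcal{G}$ is closed under multiplication by $c>0$, $\sup_{g\in\mathcal{G}}F(g)=\sup_{g\in\mathcal{G}}\sup_{c>0}F(cg)$, and applying Theorem~\ref{thm:gen_DV} with $\Phi=\mathcal{M}_b$ delivers
\[ \sup_{g\in\mathcal{G}}F(g)=R_\alpha(P\|Q)-\alpha^{-1}(1+\log\alpha). \]

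\textbf{Step 2: extension to $\{g\in\mathcal{M}_b:g<0\}$, and main obstacle.} The inequality $\sup_{g<0}F(g)\geq\sup_{\mathcal{G}}F(g)$ is automatic. For the reverse, given $g\in\mathcal{M}_b$ with $g<0$, set $g_n:=g-1/n\in\mathcal{G}$. Then $\int g_n\,dQ\to\int g\,dQ$ by boundedness. For the log term, when $\alpha>1$ the exponent $(\alpha-1)/\alpha$ lies in $(0,1)$ and $|g_n|^{(\alpha-1)/\alpha}$ is uniformly bounded, so dominated convergence applies and the integral converges to a finite limit. When $\alpha\in(0,1)$ the exponent is negative and $|g|^{(\alpha-1)/\alpha}$ may blow up near $g=0$; however $(|g|+1/n)^{(\alpha-1)/\alpha}$ is monotone increasing in $n$, so monotone convergence applies, and if the limit equals $+\infty$ then $F(g_n)\to-\infty=F(g)$ via $(\alpha-1)^{-1}<0$, consistently with the $\infty-\infty\equiv-\infty$ convention. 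In every case $F(g_n)\to F(g)$, so taking suprema yields the claimed equality. The $C_b$ case is identical since $g_n\in C_b\cap\mathcal{G}$. The main obstacle is precisely the $\alpha\in(0,1)$ bookkeeping just described, where one must track signs carefully to see that monotone convergence and the $\infty-\infty$ convention combine to give the correct limit; everything else is direct substitution.
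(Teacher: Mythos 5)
Your proof is correct, and it reaches the formula by essentially the same route as the paper: apply the DV--R\'enyi formula, change variables $g=-e^{\alpha\phi}$, and dispose of the $\alpha^{-1}\log\int e^{\alpha\phi}\,dQ$ term via the Legendre transform of $\log$. Two organizational differences are worth noting. The paper carries out the Legendre step explicitly through the identity $\log c=\inf_{z\in\mathbb{R}}\{z-1+ce^{-z}\}$ followed by a $z$-dependent change of variables $h=\alpha e^z g$; you achieve the same effect by pulling out a multiplicative scaling $c>0$ and optimizing, using that $\mathcal{G}=\{g\in\mathcal{M}_b(\Omega):\sup g<0\}$ is a cone. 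More substantively, the paper takes $\Phi=\{\alpha^{-1}\log(-h):h\in\mathcal{M}_b(\Omega),\,h<0\}$ in Theorem~\ref{thm:gen_DV} --- a class strictly larger than $\mathcal{M}_b(\Omega)$, which the theorem's flexibility permits --- so the substitution lands directly on the domain $\{h<0\}$; by choosing $\Phi=\mathcal{M}_b(\Omega)$ you land on $\{\sup g<0\}$ and must add the approximation argument of Step~2 to pass to $\{g<0\}$. That step is carried out correctly (dominated convergence for $\alpha>1$; monotone convergence combined with the sign of $(\alpha-1)^{-1}$ for $\alpha\in(0,1)$), so both proofs are valid; the paper's parametrization of $\Phi$ is simply the slicker move because it avoids the extra limiting argument.
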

\begin{proof}
Let $\Phi=\{\alpha^{-1}\log(-h):h\in \mathcal{M}_b(\Omega),h<0\}$.  We have $\mathcal{M}_b(\Omega)\subset\Phi\subset \mathcal{M}(\Omega)$, hence Theorem \ref{thm:gen_DV} implies 
\begin{align}
R_\alpha(P\|Q)=&\sup_{\phi\in \Phi}\left\{\frac{1}{\alpha-1}\log\int e^{(\alpha-1)\phi}dP-\frac{1}{\alpha}\log\int e^{\alpha \phi}dQ\right\}\\
=&\sup_{h\in\mathcal{M}_b(\Omega):h<0}\left\{\frac{1}{\alpha-1}\log\int e^{(\alpha-1)(\alpha^{-1}\log(-h))}dP-\frac{1}{\alpha}\log\int e^{\alpha (\alpha^{-1}\log(-h))}dQ\right\}\notag\\
=&\sup_{h\in\mathcal{M}_b(\Omega):h<0}\left\{\frac{1}{\alpha-1}\log\int |h|^{(\alpha-1)/\alpha}dP-\frac{1}{\alpha}\log\int (-h)dQ\right\}\,.\notag
\end{align}
Note that the second term is finite but the first term is possibly infinite when $\alpha\in(0,1)$. Next use the identity
\begin{align}\label{eq:log_for_LT}
\log(c)=\inf_{z\in\mathbb{R}}\{z-1+ce^{-z}\}\,,\,\,\, c\in(0,\infty)
\end{align}
in the second term to write
\begin{align}
R_\alpha(P\|Q)=&\sup_{h\in\mathcal{M}_b(\Omega):h<0}\left\{\frac{1}{\alpha-1}\log\int |h|^{(\alpha-1)/\alpha}dP-\frac{1}{\alpha}\inf_{z\in\mathbb{R}}\{z-1+e^{-z}\int (-h)dQ\}\right\}\\
=&\sup_{z\in\mathbb{R}}\sup_{h\in\mathcal{M}_b(\Omega):h<0}\left\{\frac{1}{\alpha-1}\log\int |h|^{(\alpha-1)/\alpha}dP+\frac{z-1}{\alpha}+\alpha^{-1}e^{-z}\int hdQ\right\}\,.\notag
\end{align}
For each $z\in\mathbb{R}$ make the change variables $h=\alpha e^z g$, $g\in\mathcal{M}_b(\Omega)$, $g<0$ in the inner supremum  to derive
\begin{align}
R_\alpha(P\|Q)=&\sup_{z\in\mathbb{R}}\sup_{g\in\mathcal{M}_b(\Omega):g<0}\left\{\frac{1}{\alpha-1}\log\int |\alpha e^zg|^{(\alpha-1)/\alpha}dP-\frac{z-1}{\alpha}+\alpha^{-1}e^{-z}\int \alpha e^zgdQ\right\}\\
=&\sup_{z\in\mathbb{R}}\sup_{g\in\mathcal{M}_b(\Omega):g<0}\left\{\frac{1}{\alpha-1}\log\int |g|^{(\alpha-1)/\alpha}dP+( \alpha^{-1}(\log \alpha+1)+\int gdQ)\right\}\notag\\
=&\sup_{g\in\mathcal{M}_b(\Omega):g<0}\left\{\int gdQ+\frac{1}{\alpha-1}\log\int |g|^{(\alpha-1)/\alpha}dP\right\}+\alpha^{-1}(\log \alpha+1)\,.\notag
\end{align}
This completes the proof of \req{eq:Renyi_LT_var_Mb_main}. The proof of the metric-space version in nearly identical.
\end{proof}
\begin{remark}
To reverse the above derivation and obtain \req{eq:Renyi_var} (with $\Phi=\{\phi\in\mathcal{M}(\Omega):\phi\text{ is bounded above}\}$) from \req{eq:Renyi_LT_var_Mb_main}, change variables $g\mapsto -c\exp(\alpha\phi)$, $\phi\in \Phi$, $c>0$ in \req{eq:Renyi_LT_var_Mb_main} and then maximize over $c$.
\end{remark}

 The representation \req{eq:Renyi_LT_var_Mb_main} is of convex-conjugate type, which will be key in our development of function-space regularized R{\'e}nyi divergences. It is also of independent interest as it avoids  risk-sensitive terms, unlike \req{eq:Renyi_var} which contains cumulant-generating-functions. This makes \req{eq:Renyi_LT_var_Mb_main} better behaved in estimation problems, especially when $\alpha>1$; see the example in Section \ref{sec:var_example} below.

We also obtain a new variational formula for  worst-case regret, which is defined by\cite{van2014renyi}
\begin{align}\label{eq:wcr_def_main}
    D_\infty(P\|Q)\coloneqq \lim_{\alpha\to\infty}\alpha R_\alpha(P\|Q)=\begin{cases} 
 \log\left(\esssup_P \frac{dP}{dQ}\right)\,,&P\ll Q\\
\infty\,,&P\not\ll Q\,.
   \end{cases}
\end{align}
In contrast to \req{eq:wcr_def_main}, which requires estimation of the  likelihood ratio, the new  variational formula \req{eq:wcr_var_formula} below avoids risk-sensitive terms.
\begin{theorem}[Convex-Conjugate Worst-case Regret Variational Formula]\label{thm:worst_case_regret_var}
Let $P,Q\in\mathcal{P}(\Omega)$. Then
\begin{align}\label{eq:wcr_var_formula}
    D_\infty(P\|Q)=\sup_{g\in \mathcal{M}_b(\Omega):g<0}\left\{\int gdQ+\log\int|g|dP\right\}+1\,.
\end{align}
If $\Omega$ is a metric space with the Borel $\sigma$-algebra then \req{eq:wcr_var_formula} holds with $\mathcal{M}_b(\Omega)$ replaced by $C_b(\Omega)$.
\end{theorem}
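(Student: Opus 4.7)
The plan is to obtain \req{eq:wcr_var_formula} as an $\alpha\to\infty$ limit of the CC-R{\'e}nyi representation \req{eq:Renyi_LT_var_Mb_main}, using the identity $D_\infty(P\|Q)=\lim_{\alpha\to\infty}\alpha R_\alpha(P\|Q)$ from \req{eq:wcr_def_main}. The first step is algebraic: multiply \req{eq:Renyi_LT_var_Mb_main} by $\alpha$ and make the change of variables $g=h/\alpha$, which is a bijection of $\{g\in\mathcal{M}_b:g<0\}$ to itself. Since $\alpha\int g\,dQ=\int h\,dQ$ and $\frac{\alpha}{\alpha-1}\log\int|g|^{(\alpha-1)/\alpha}dP=-\log\alpha+\frac{\alpha}{\alpha-1}\log\int|h|^{(\alpha-1)/\alpha}dP$, the extra $\log\alpha$ cancels against $\alpha\cdot\alpha^{-1}\log\alpha$, producing the key identity
\begin{align}\label{eq:plan_key}
\alpha R_\alpha(P\|Q)=\sup_{h\in\mathcal{M}_b(\Omega):h<0}\left\{\int h\,dQ+\frac{\alpha}{\alpha-1}\log\int|h|^{(\alpha-1)/\alpha}dP\right\}+1\,.
\end{align}

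Next I would prove the two matching inequalities. For the upper bound, I would exploit the fact that for $\alpha>1$ the exponent $(\alpha-1)/\alpha\in(0,1)$, so $t\mapsto t^{(\alpha-1)/\alpha}$ is concave on $[0,\infty)$. Jensen's inequality applied to the nonnegative function $|h|$ under the probability measure $P$ then yields $\int|h|^{(\alpha-1)/\alpha}dP\leq(\int|h|dP)^{(\alpha-1)/\alpha}$, and taking logarithms and multiplying by $\alpha/(\alpha-1)>0$ gives $\frac{\alpha}{\alpha-1}\log\int|h|^{(\alpha-1)/\alpha}dP\leq\log\int|h|dP$. Substituting into \req{eq:plan_key} and taking $\alpha\to\infty$ immediately produces $D_\infty(P\|Q)\leq\sup_{h<0}\{\int h\,dQ+\log\int|h|dP\}+1$. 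For the lower bound, I would fix $h\in\mathcal{M}_b(\Omega)$ with $h<0$ and pass to the limit inside the integral: pointwise $|h|^{(\alpha-1)/\alpha}\to|h|$ since $|h|>0$ everywhere, and dominated convergence applies with dominating function $\max(1,\|h\|_\infty)$, which is $P$-integrable. Combined with $\int|h|dP>0$ (because $|h|>0$ on all of $\Omega$), this gives $\frac{\alpha}{\alpha-1}\log\int|h|^{(\alpha-1)/\alpha}dP\to\log\int|h|dP$. Hence $D_\infty(P\|Q)\geq\int h\,dQ+\log\int|h|dP+1$ for every admissible $h$, and taking the supremum over $h$ yields the reverse inequality.

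The metric-space version should follow by precisely the same argument applied to the $C_b(\Omega)$ version of \req{eq:Renyi_LT_var_Mb_main}, since the change of variables preserves continuity and boundedness and both Jensen and dominated convergence are insensitive to the function-space restriction. The case $P\not\ll Q$ (where $D_\infty=\infty$) is also handled uniformly, since the upper bound then forces the variational supremum to be $+\infty$ automatically. The main technical obstacle I anticipate is the lower-bound direction, specifically verifying a uniform integrable envelope for $|h|^{(\alpha-1)/\alpha}$ as $\alpha$ varies---but this is resolved by the simple case-split $|h|(x)\gtrless 1$, which shows $|h|^{(\alpha-1)/\alpha}\leq\max(1,|h|)$ for all $\alpha>1$. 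Everything else reduces to routine manipulation of the formula from Theorem~\ref{thm:Renyi_var_LT_main}.
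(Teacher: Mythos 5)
Your proposal is correct, and the upper-bound half of your argument is genuinely different from the paper's. The paper splits into the cases $P\not\ll Q$ (handled by an explicit divergent sequence $g_n=-n1_A-1_{A^c}$) and $P\ll Q$; in the latter case it proves $D_\infty(P\|Q)\leq\sup_{g<0}\{\cdots\}+1$ by constructing test functions $g_{c,\epsilon}=-c1_{dP/dQ>r}-\epsilon$, optimizing over $c$ and then sending $r\nearrow\esssup_P dP/dQ$; it then establishes the $C_b$ version separately via a Lusin-theorem approximation. Your argument avoids all of this: the Jensen inequality $\int|h|^{(\alpha-1)/\alpha}dP\leq(\int|h|dP)^{(\alpha-1)/\alpha}$ (concavity of $t\mapsto t^{(\alpha-1)/\alpha}$ for $\alpha>1$) gives the uniform-in-$h$ bound $\alpha R_\alpha(P\|Q)\leq\sup_{h<0}\{\int h\,dQ+\log\int|h|dP\}+1$ for every $\alpha>1$, and letting $\alpha\to\infty$ immediately yields the hard inequality; it handles $P\not\ll Q$ automatically (both sides become $+\infty$), and it yields the $C_b$ case directly by applying the same monotone/Jensen/DCT reasoning to the $C_b$ version of Theorem~\ref{thm:Renyi_var_LT_main}, with no Lusin argument needed. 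The lower-bound half (DCT applied to a fixed $h$, then supremum) is essentially identical to the paper's. The one step you should spell out to make the proof self-contained is the algebraic derivation of the key identity \req{eq:plan_key} from \req{eq:Renyi_LT_var_Mb_main}, which you sketch correctly but condense; everything else is airtight. Overall your route is cleaner: the paper's approach is constructive and exhibits near-optimizers, while yours buys a shorter, more unified proof by replacing the explicit constructions with a monotonicity-plus-Jensen argument.
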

\begin{remark}
Alternative variational formulas for $D_\infty$ on a finite alphabet were derived in \cite{10.1109/ISIT50566.2022.9834358}. In particular, equation (10) in \cite{10.1109/ISIT50566.2022.9834358} can be viewed as a generalization of the Donsker-Varadhan variational formula to worst-case regret, as it is obtained  from taking the limit of the DV-R{\'e}nyi representation \req{eq:Renyi_var}.
\end{remark}
\begin{proof}
First suppose $P\not\ll Q$.  Then there exists a measurable set $A$ with $Q(A)=0$ and $P(A)>0$.  Let $g_n=-n1_A-1_{A^c}$.  Then
\begin{align}
    &\sup_{g\in \mathcal{M}_b(\Omega):g<0}\left\{\int gdQ+\log\int|g|dP\right\}+1\geq\int g_ndQ+\log\int|g_n|dP+1 \\
    =& -nQ(A)-Q(A^c)+\log( nP(A)+P(A^c))+1=\log( nP(A)+P(A^c))\to \infty
\end{align}
as $n\to\infty$.  Therefore
\begin{align}
    \sup_{g\in \mathcal{M}_b(\Omega):g<0}\left\{\int gdQ+\log\int|g|dP\right\}+1=\infty=D_\infty(P\|Q)\,.
\end{align}

Now suppose $P\ll Q$.  Using the definition \req{eq:wcr_def_main} along with Theorem \ref{thm:Renyi_var_LT_main} and changing variables $g= \tilde g/\alpha$ we have
\begin{align}
    D_\infty(P\|Q)=&\lim_{\alpha\to\infty}\alpha R_\alpha(P\|Q)\\
    =&\lim_{\alpha\to\infty}\left[\sup_{g\in \mathcal{M}_b(\Omega):g<0}\left\{\int \alpha gdQ+\frac{\alpha }{\alpha-1}\log\int |g|^{(\alpha-1)/\alpha}dP\right\}+(\log\alpha+1)\right]\notag\\
    \geq&\lim_{\alpha\to\infty}\left[\int  \tilde gdQ+\frac{\alpha }{\alpha-1}\log\int |\tilde g/\alpha|^{(\alpha-1)/\alpha}dP+(\log\alpha+1)\right]\notag\\
    =&\lim_{\alpha\to\infty}\left[\int  \tilde gdQ+\frac{\alpha }{\alpha-1}\log\int |\tilde g| ^{(\alpha-1)/\alpha}dP+1\right]\notag\\
    =&\int \tilde gdQ+\log\int |\tilde g|dP+1\,\,\text{ for all  $\tilde g\in\mathcal{M}_b(\Omega)$, $\tilde g<0$.}\notag
\end{align}
Here we used the dominated convergence theorem to evaluate the limit. Hence, by maximizing over $\tilde g$ we obtain
\begin{align}\label{eq:D_infty_lb}
    D_\infty(P\|Q)\geq \sup_{g\in\mathcal{M}_b(\Omega):g<0}\left\{\int  gdQ+\log\int | g|dP\right\}+1\,.
\end{align}
To prove the reverse inequality, take any $r\in (0,\esssup_PdP/dQ)$. By definition of the essential supremum we have $P(dP/dQ>r)>0$. We also have the bound
\begin{align}
    P(dP/dQ>r)=\int1_{dP/dQ>r} \frac{dP}{dQ} dQ\geq \int1_{dP/dQ>r} r dQ=rQ(dP/dQ>r)\,.
\end{align}
For $c,\epsilon>0$ define $g_{c,\epsilon}=-c1_{dP/dQ>r}-\epsilon$.  These satisfy $g_{c,\epsilon}\in\mathcal{M}_b(\Omega)$, $g_{c,\epsilon}<0$ and so
\begin{align}
    &\sup_{g\in \mathcal{M}_b(\Omega):g<0}\left\{\int gdQ+\log\int|g|dP\right\}+1\geq \int g_{c,\epsilon}dQ+\log\int|g_{c,\epsilon}|dP+1\\
    =&-cQ(dP/dQ>r)-\epsilon+\log(  cP(dP/dQ>r)+\epsilon)+1\notag\\
    \geq &-cP(dP/dQ>r)/r-\epsilon+\log(  cP(dP/dQ>r)+\epsilon)+1\,.\notag
\end{align}
Letting $\epsilon\to 0^+$ we find
\begin{align}
    \sup_{g\in \mathcal{M}_b(\Omega):g<0}\left\{\int gdQ+\log\int|g|dP\right\}+1    \geq &-cP(dP/dQ>r)/r+\log(  cP(dP/dQ>r))+1
\end{align}
 for all $c>0$. We have $P(dP/dQ>r)>0$, hence by maximizing over $c>0$ and changing variables to $z=cP(dP/dQ>r)$ we obtain
\begin{align}
    \sup_{g\in \mathcal{M}_b(\Omega):g<0}\left\{\int gdQ+\log\int|g|dP\right\}+1    \geq &\sup_{z>0}\{-z/r+\log(  z)+1\}=\log(r)\,.
\end{align}
This holds for all $r<\esssup_PdP/dQ$, therefore we can take $r\nearrow  \esssup_PdP/dQ$ and use \req{eq:wcr_def_main} to conclude 
\begin{align}
    \sup_{g\in \mathcal{M}_b(\Omega):g<0}\left\{\int gdQ+\log\int|g|dP\right\}+1    \geq \log(\esssup_PdP/dQ)=D_\infty(P\|Q)\,.
\end{align}
Combining this with \req{eq:D_infty_lb} completes the proof of \req{eq:wcr_var_formula}.

Now suppose $\Omega$ is a metric space. We clearly have
\begin{align}\label{eq:D_inf_var_C_ub}
    D_\infty(P\|Q)=&\sup_{g\in \mathcal{M}_b(\Omega):g<0}\left\{\int gdQ+\log\int|g|dP\right\}+1\\
    \geq& \sup_{g\in C_b(\Omega):g<0}\left\{\int gdQ+\log\int|g|dP\right\}+1\,.\notag
\end{align}
To prove the reverse inequality, take any $g\in\mathcal{M}_b(\Omega)$ with $g<0$.  By Lusin's theorem, for all $\epsilon>0$ there exists a closed set $E_\epsilon$ and $h_\epsilon\in C_b(\Omega)$ such that $P(E^c_\epsilon)\leq \epsilon$, $Q(E^c_\epsilon)\leq \epsilon$, $h_\epsilon|_{E_\epsilon}=g$, and $\inf g\leq h_\epsilon\leq 0$. Define $g_\epsilon=h_\epsilon-\epsilon$.  Then $g_\epsilon<0$, $g_\epsilon\in C_b(\Omega)$ and we have
\begin{align}
    &\sup_{g\in C_b(\Omega):g<0}\left\{\int gdQ+\log\int|g|dP\right\}\geq \int g_\epsilon dQ+\log\int|g_\epsilon|dP\\
    =&\int gdQ+\int (h_\epsilon - g)1_{E^c_\epsilon}dQ-\epsilon+\log(\int|g|dP+\int(|h_\epsilon|-|g|)1_{E_\epsilon^c} dP+\epsilon)\notag\\
    \geq&    \int gdQ-(\sup g-\inf g)Q(E_\epsilon^c)-\epsilon+\log(\int|g|dP+\inf gP(E_\epsilon^c)+\epsilon)\notag\\
        \geq&    \int gdQ-(\sup g-\inf g)\epsilon-\epsilon+\log(\int|g|dP+\inf g\epsilon+\epsilon)\,.\notag
\end{align}
Taking the limit $\epsilon\to 0^+$ we therefore obtain
\begin{align}
    &\sup_{g\in C_b(\Omega):g<0}\left\{\int gdQ+\log\int|g|dP\right\}\geq   \int gdQ+\log\int|g|dP\,.
\end{align}
This holds for all $g\in\mathcal{M}_b(\Omega)$ with $g<0$, hence by taking the supremum over $g$ we obtain the reverse inequality to \req{eq:D_inf_var_C_ub}. This completes the proof.
\end{proof}
Equation \req{eq:wcr_var_formula} is a new result of independent interest and will also be useful in our study of the $\alpha\to\infty$ limit of the function-space regularized R{\'e}nyi divergences  that we define in the next section.

\vspace{-2mm}
\section{Primal and dual formulations of the infimal-convolution $\Gamma$-R{\'e}nyi divergences}\label{sec:primal_dual_formulations}
\vspace{-2mm}
We are now ready to define the function-space regularized R{\'e}nyi divergences and derive their key properties.  In this section, $X$ will denote a compact metric space,   $\mathcal{P}(X)$ will denote the set of Borel probability measures on $X$, and  $C(X)$ will denote the space of  continuous real-valued functions on $X$ (note that $C_b(X)=C(X)$). We equip $C(X)$ with the supremum norm and recall that the dual space of $C(X)$ is   $C(X)^*=M(X)$, the space of finite signed Borel measures on $X$ (see the Riesz representation theorem, e.g., Theorem 7.17 in  \cite{folland2013real}). 

\begin{definition}\label{def:Gamma_Renyi}
Given a test-function space $\Gamma\subset C(X)$, we define the {\bf infimal-convolution $\Gamma$-R{\'e}nyi divergence} (i.e., {\bf IC-$\Gamma$-R{\'e}nyi divergence}) between $P,Q\in\mathcal{P}(X)$ by
\begin{align}\label{eq:Gamma_renyi_def}
    R_\alpha^{\Gamma,IC}(P\|Q)\coloneqq \inf_{\eta\in\mathcal{P}(X)}\{R_\alpha(P\|\eta)+W^\Gamma(Q,\eta)\}\,,\,\,\, \alpha\in(0,1)\cup(1,\infty)\,,
\end{align}
where $W^\Gamma$ denotes the $\Gamma$-IPM
\begin{align}\label{def:IPM}
    W^\Gamma(\mu,\nu)\coloneqq \sup_{g\in\Gamma}\{\int gd\mu-\int gd\nu\}\,, \,\,\,\mu,\nu\in M(X)\,.
\end{align}
\end{definition}
\begin{remark}
The classical R{\'e}nyi divergence is convex in its second argument but not in its first when $\alpha>1$ \cite{van2014renyi}. This is the motivation for defining the IC-$\Gamma$-R{\'e}nyi divergences via an infimal convolution in the second argument of $R_\alpha$; convex analysis tools will be critical in deriving properties of $R_\alpha^{\Gamma,IC}$ below. For $\alpha\in(0,1)$ one can use the  identity $R_{\alpha}(P\|Q)=R_{1-\alpha}(Q\|P)$  to rewrite \req{eq:Gamma_renyi_def} as an infimal convolution in the first argument. 
\end{remark}

The definition \req{eq:Gamma_renyi_def} can be thought of as a regularization of  the classical R{\'e}nyi divergence using the $\Gamma$-IPM.  For computational purposes it is significantly more efficient to have a dual formulation, i.e., a  representation of $R_\alpha^{\Gamma,IC}$ in terms of a supremum over a function space. To derive such a representation  we begin with the  variational formula for $R_\alpha$ from Theorem \ref{thm:Renyi_var_LT_main}. If we define the convex mapping $\Lambda_\alpha^P:C(X)\to(-\infty,\infty]$, 
\begin{align}\label{eq:Lambda_def}
\Lambda_\alpha^P[g]\coloneqq\infty1_{g\not<0}-\left(\frac{1}{\alpha-1}\log\int |g|^{(\alpha-1)/\alpha} dP+\alpha^{-1}(\log\alpha +1)\right)1_{g<0}\,,
\end{align}
then \req{eq:Renyi_LT_var_Mb_main} from Theorem \ref{thm:Renyi_var_LT_main} can be written as a convex conjugate
\begin{align}\label{eq:Renyi_LT}
R_\alpha(P\|Q)=(\Lambda_\alpha^P)^*[Q]\coloneqq \sup_{g\in C(X)}\{\int gdQ-\Lambda_\alpha^P[g]\}\,.
\end{align}
One can then use  Fenchel-Rockafellar duality to derive a dual formulation of the IC-$\Gamma$-R{\'e}nyi divergences. To apply this theory we will need to work with spaces of test functions that satisfy the following admissibility properties. These properties are similar to those used in the construction of regularized KL and $f$-divergences in \cite{Dupuis:Mao} and \cite{JMLR:v23:21-0100}. 
\begin{definition}
We will call $\Gamma\subset C(X)$ {\bf admissible} if it is convex and contains the constant functions.  We will call an admissible $\Gamma$ {\bf strictly admissible} if there exists a $\mathcal{P}(X)$-determining set $\Psi\subset C(X)$ such that for all $\psi\in\Psi$ there exists $c\in\mathbb{R}$, $\epsilon>0$ such that $c\pm \epsilon\psi\in\Gamma$. Recall that $\Psi$ being {\bf $\mathcal{P}(X)$-determining} means that for all $Q,P\in\mathcal{P}(X)$, if $\int \psi dQ=\int \psi dP$ for all $\psi\in\Psi$ then $Q=P$.
\end{definition}

Putting the above pieces together one obtains the following variational representation.
\begin{theorem}\label{thm:Renyi_inf_conv}
Let $\Gamma\subset C(X)$ be admissible, $P,Q\in\mathcal{P}(X)$, and $\alpha\in(0,1)\cup(1,\infty)$. Then:
\begin{enumerate}
\item
\begin{align}\label{eq:Renyi_inf_conv}
R_\alpha^{\Gamma,IC}(P\|Q)=\sup_{g\in \Gamma:g<0}\left\{\int gdQ +\frac{1}{\alpha-1}\log\int |g|^{(\alpha-1)/\alpha} dP\right\}+\alpha^{-1}(\log\alpha +1)\,.
\end{align}
\item If   \req{eq:Renyi_inf_conv} is finite then there exists $\eta_*\in \mathcal{P}(X)$ such that
\begin{align}
R_\alpha^{\Gamma,IC}(P\|Q)=\inf_{\eta\in\mathcal{P}(X)}\{R_\alpha(P\|\eta)+W^\Gamma(Q,\eta)\}=R_\alpha(P\|\eta_*)+W^\Gamma(Q,\eta_*)\,.
\end{align}
\item $R_\alpha^{\Gamma,IC}(P\|Q)$ is convex in $Q$. If $\alpha\in(0,1)$ then $R_\alpha^{\Gamma,IC}(P\|Q)$ is jointly convex in $(P,Q)$.
\item $(P,Q)\mapsto R_\alpha^{\Gamma,IC}(P\|Q)$  is lower semicontinuous.
\item $R_\alpha^{\Gamma,IC}(P\|Q)\geq 0$ with equality if $P=Q$.
\item $R_\alpha^{\Gamma,IC}(P\|Q)\leq \min\{R_\alpha(P\|Q),W^\Gamma(Q,P)\}$.
\item If $\Gamma$ is  strictly admissible then $R_\alpha^{\Gamma,IC}$ has the divergence property.
\end{enumerate}
\end{theorem}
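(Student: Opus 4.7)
The main technical work is Part 1, and my plan is to apply Fenchel--Rockafellar duality on $C(X)$ paired with $M(X)$. I would identify two convex functions on $C(X)$: first, $\Lambda_\alpha^P$ from (2.5), whose convex conjugate on $\mathcal{P}(X)\subset M(X)$ is exactly $R_\alpha(P\|\cdot)$ by Theorem 2.2; and second, a convex function (essentially $g\mapsto -\int g\,dQ+\iota_\Gamma(g)$, up to sign conventions) whose conjugate on $M(X)$ reproduces $W^\Gamma(Q,\cdot)$. The duality theorem should then convert the primal $\inf_{\eta\in\mathcal{P}(X)}\{R_\alpha(P\|\eta)+W^\Gamma(Q,\eta)\}$ into a supremum over $g\in C(X)$ pooling the two constraints: $g<0$ from the effective domain of $\Lambda_\alpha^P$ and $g\in\Gamma$ from $\iota_\Gamma$, yielding (3.5) with the additive constant $\alpha^{-1}(\log\alpha+1)$ inherited from $\Lambda_\alpha^P$. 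The main obstacle I anticipate is verifying the constraint qualification (continuity of one function at a point in the effective domain of the other) in the extended-real-valued setting, and reconciling the $M(X)$-conjugate with its restriction to $\mathcal{P}(X)$.

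For Part 2 I would invoke weak-$*$ compactness of $\mathcal{P}(X)$ (valid since $X$ is compact) combined with weak-$*$ lower semicontinuity of both $R_\alpha(P\|\cdot)$ and $W^\Gamma(Q,\cdot)$: the sum is lsc on a compact domain, so the infimum is attained whenever it is finite. Part 3 should be immediate from the dual formula in Part 1, since as a function of $Q$ the quantity $R_\alpha^{\Gamma,IC}(P\|Q)$ becomes a pointwise supremum of functions affine in $Q$, hence convex; for joint convexity when $\alpha\in(0,1)$, I would work from the primal, combining joint convexity of $R_\alpha$ on $\mathcal{P}(X)\times\mathcal{P}(X)$ in that regime with joint convexity of the IPM, and then use that a partial infimum of a jointly convex function remains jointly convex in the surviving arguments. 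Part 4 should similarly follow from the dual formula: each term inside the supremum is continuous in $(P,Q)$ in the weak topology, and a supremum of continuous functions is lsc.

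Parts 5 and 6 should reduce to direct substitutions. For nonnegativity in Part 5 I plan to insert the constant $g\equiv -1/\alpha$ (admissible because $\Gamma$ contains constants and this $g$ is negative) into the dual formula; a short computation should collapse the expression to exactly $0$, giving $R_\alpha^{\Gamma,IC}(P\|Q)\geq 0$, and the $P=Q$ case then follows from plugging $\eta=P$ into the primal. For Part 6, $\eta=Q$ in the primal yields $R_\alpha^{\Gamma,IC}(P\|Q)\leq R_\alpha(P\|Q)$ and $\eta=P$ yields $R_\alpha^{\Gamma,IC}(P\|Q)\leq W^\Gamma(Q,P)$, using $R_\alpha(P\|P)=0=W^\Gamma(Q,Q)$. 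Part 7 would combine the primal with the optimizer from Part 2: if $R_\alpha^{\Gamma,IC}(P\|Q)=0$ then necessarily $R_\alpha(P\|\eta_*)=0$ and $W^\Gamma(Q,\eta_*)=0$ (both are nonnegative and sum to zero); the first gives $\eta_*=P$ by the divergence property of $R_\alpha$, and for the second, strict admissibility supplies $c\pm\epsilon\psi\in\Gamma$ for each $\psi\in\Psi$, so inserting both signs in the defining supremum forces $\pm\epsilon(\int\psi\,dQ-\int\psi\,d\eta_*)\leq 0$, hence $\int\psi\,dQ=\int\psi\,d\eta_*$, and the $\mathcal{P}(X)$-determining property then yields $Q=\eta_*=P$.
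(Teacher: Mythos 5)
Your proposal is correct and follows essentially the same overall route as the paper: Part 1 via Fenchel--Rockafellar with the pair $F=\Lambda_\alpha^P$ and $G[g]=\iota_\Gamma(g)-E_Q[g]$ on $C(X)$ paired with $M(X)$, and the remaining parts as corollaries of the primal and dual representations. The two issues you flag as ``main obstacles'' are exactly the ones the paper must work to resolve, so it is worth being concrete about how they are handled. For the constraint qualification, the paper proves in a separate lemma that $\Lambda_\alpha^P$ is not only convex but continuous on the open set $\{g\in C(X):g<0\}$ (openness uses compactness of $X$ so that strictly negative continuous functions are uniformly bounded away from $0$); continuity of $F$ at the common point $g\equiv -1$, where $G$ is also finite, then satisfies the hypothesis of Fenchel--Rockafellar. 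For restricting the dual infimum from $M(X)$ to $\mathcal{P}(X)$, the paper argues in two steps: $W^\Gamma(Q,\eta)=+\infty$ when $\eta(X)\neq 1$ (test against constants $\pm n\in\Gamma$), and $(\Lambda_\alpha^P)^*[\eta]=+\infty$ when $\eta$ is not a positive measure (via Lusin's theorem, approximating $-n\,1_A-1$ by negative continuous functions). Neither step is automatic, so you should not treat this as mere bookkeeping.

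Where your proposal diverges from the paper, it does so harmlessly and sometimes slightly more elementarily. For Part 2 you use weak-$*$ compactness of $\mathcal{P}(X)$ plus lower semicontinuity, whereas the paper reads attainment directly off the Fenchel--Rockafellar theorem; both are fine, and yours is more self-contained if you want to cite fewer external results. For nonnegativity in Part 5 you evaluate the dual at $g\equiv -1/\alpha$ (which indeed collapses to $0$), whereas the paper argues from the primal that $R_\alpha\geq0$ and $W^\Gamma\geq 0$; either works. For joint convexity when $\alpha\in(0,1)$ you pass through the primal using joint convexity of $R_\alpha$ and the fact that a partial infimum of a jointly convex function is jointly convex, whereas the paper reads joint convexity off the dual using convexity of $y\mapsto\frac{1}{\alpha-1}\log y$ for $\alpha<1$; again both are valid. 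Parts 4, 6, 7 match the paper's arguments. In short, your plan is sound; to make it a proof you need to supply the two lemmas indicated above (continuity of $\Lambda_\alpha^P$ on $\{g<0\}$, and the Lusin-type argument showing $(\Lambda_\alpha^P)^*$ blows up off the cone of positive measures).
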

\begin{remark}
We  note that there are alternative strategies for proving  the variational formula \req{eq:Renyi_inf_conv} which make different assumptions; further comments on this can be found in Section \ref{sec:dual_var_proof}.
\end{remark}
\begin{proof}
\begin{enumerate}
\item
Define $F,G:C(X)\to (-\infty,\infty]$ by $F=\Lambda_\alpha^P$ and $G[g]=\infty1_{g\not\in\Gamma}-E_Q[g]$.  Using the assumptions on $\Gamma$ along with  Lemma \ref{lemma:Lambda_cont_app} we see that $F$ and $G$ are convex, $F[-1]<\infty$, $G[-1]<\infty$, and $F$ is continuous at $-1$. Therefore Fenchel-Rockafellar duality (see, e.g., Theorem 4.4.3 in \cite{borwein2006techniques}) along with the identity $C(X)^*=M(X)$ gives
\begin{align}
\sup_{g\in C(X)}\{-F[g]-G[g]\}=\inf_{\eta\in M(X)}\{F^*[\eta]+G^*[-\eta]\}\,,
\end{align}
and if either side is finite then the infimum on the right hand side is achieved at some $\eta_*\in M(X)$. Using the definitions, we can rewrite the left hand side as follow
\begin{align}
&\sup_{g\in C(X)}\{-F[g]-G[g]\}\\
=&\sup_{g\in \Gamma:g<0}\left\{\int gdQ +\frac{1}{\alpha-1}\log\int |g|^{(\alpha-1)/\alpha} dP\right\}+\alpha^{-1}(\log\alpha +1)\,.\notag
\end{align}
We can also compute
\begin{align}
G^*[-\eta]=\sup_{g\in C(X)}\{-\int gd\eta-(\infty 1_{g\not\in\Gamma}-E_Q[g])\}=W^\Gamma(Q,\eta)\,.
\end{align}
Therefore
\begin{align}
&\inf_{\eta\in M(X)}\{(\Lambda_\alpha^P)^*[\eta]+W^\Gamma(Q,\eta)\}\\
=&\sup_{g\in \Gamma:g<0}\left\{\int gdQ +\frac{1}{\alpha-1}\log\int |g|^{(\alpha-1)/\alpha} dP\right\}+\alpha^{-1}(\log\alpha +1)\,.\notag
\end{align}
Next we show that the infimum over $M(X)$ can be restricted to $\mathcal{P}(X)$. First suppose $\eta\in M(X)$ with $\eta(X)\neq 1$.  Then, using the assumption that $\Gamma$ contains the constant functions, we have
\begin{align}
W^\Gamma(Q,\eta)\geq  E_Q[\pm n]-\int \pm nd\eta=\pm n(1- \eta(X))\to \infty
\end{align}
as $n\to\infty$ (for appropriate choice of sign).  Therefore $W^\Gamma(Q,\eta)=\infty$ if $\eta(X)\neq 1$.  This implies that the infimum can be restricted to $\{\eta\in 
M(X):\eta(X)=1\}$.

Now suppose $\eta\in M(X)$ is not positive.  Take a measurable set $A$ with $\eta(A)<0$. By Lusin's theorem, for all $\epsilon>0$ there exists a closed set $E_\epsilon\subset X$ and a continuous function $g_\epsilon\in C(X)$ such that $|\eta|(E_\epsilon^c)<\epsilon$, $0\leq g_\epsilon\leq 1$, and $g_\epsilon|_{E_\epsilon}=1_A$.  Define $g_{n,\epsilon}=-ng_\epsilon-1$, $n\in\mathbb{Z}^+$.  Then $g_{n,\epsilon}\in\{g\in C(X):g<0\}$, hence
\begin{align}
(\Lambda_\alpha^P)^*[\eta]\geq &\int g_{n,\epsilon}d\eta+\frac{1}{\alpha-1}\log\int |g_{n,\epsilon}|^{(\alpha-1)/\alpha} dP+\alpha^{-1}(\log\alpha +1)\\
=&-n\eta(A)+n \eta(A\cap E_\epsilon^c)-n\int g_\epsilon 1_{E_\epsilon^c}d\eta-\eta(X)\notag\\
&+\frac{1}{\alpha-1}\log\int |ng_\epsilon+1|^{(\alpha-1)/\alpha} dP+\alpha^{-1}(\log\alpha +1)\notag\\
\geq&n(|\eta(A)|-2\epsilon)-\eta(X)+\frac{1}{\alpha-1}\log\int |ng_\epsilon+1|^{(\alpha-1)/\alpha} dP+\alpha^{-1}(\log\alpha +1)\,.\notag
\end{align}
If  $\alpha>1$ then $\log\int |ng_\epsilon+1|^{(\alpha-1)/\alpha} dP\geq 0$ and if   $\alpha\in(0,1)$ then $\log\int|ng_\epsilon+1|^{(\alpha-1)/\alpha}dP\leq 0$. In either case we have $\frac{1}{\alpha-1}\log\int |ng_\epsilon+1|^{(\alpha-1)/\alpha} dP\geq 0$ and so
\begin{align}
(\Lambda_\alpha^P)^*[\eta]\geq &n(|\eta(A)|-2\epsilon)-\eta(X)+\alpha^{-1}(\log\alpha +1)\,.
\end{align}
By choosing $\epsilon<|\eta(A)|/2$ and taking $n\to\infty$ we see that $(\Lambda_\alpha^P)^*[\eta]=\infty$ whenever $\eta\in M(X)$ is not positive. Therefore the infimum can further be restricted to positive measures. Combining these results we find 
\begin{align}
&\sup_{g\in \Gamma:g<0}\left\{\int gdQ +\frac{1}{\alpha-1}\log\int |g|^{(\alpha-1)/\alpha} dP\right\}+\alpha^{-1}(\log\alpha +1)
\\
=&\inf_{\eta\in \mathcal{P}(X)}\{(\Lambda_\alpha^P)^*[\eta]+W^\Gamma(Q,\eta)\}\,.\notag
\end{align}
For $\eta\in\mathcal{P}(X)$, equation  \req{eq:Renyi_LT} implies $(\Lambda_\alpha^P)^*[\eta]=R_\alpha(P\|\eta)$. This completes the proof.
\item The existence of a minimizer follows from Fenchel-Rockafellar duality; again, see Theorem 4.4.3 in \cite{borwein2006techniques}.

\item This follows from \req{eq:Renyi_inf_conv} together with the fact that the supremum of convex functions is convex and $y\mapsto \frac{1}{\alpha-1}\log(y)$ is convex when $\alpha\in(0,1)$.

\item Compactness of $X$ implies that $g$ and $|g|^{(\alpha-1)/\alpha}$ are bounded and continuous whenever $g\in \Gamma$ satisfies $g<0$.  Therefore $Q\to \int gdQ$ and $P\to\int |g|^{(\alpha-1)/\alpha}dP$ are continuous in the weak topology on $\mathcal{P}(X)$. Therefore the objective functional in \req{eq:Renyi_inf_conv} is continuous in $(P,Q)$.  The supremum is therefore lower semicontinuous.

\item This easily follows from the definition \req{eq:Gamma_renyi_def}.

\item $R_\alpha$ is a divergence, hence is non-negative. $\Gamma$ contains the constant functions, hence $W^\Gamma\geq 0$.  Therefore $R_\alpha^{\Gamma,IC}\geq 0$.  If $Q=P$ then $0\leq R_\alpha^{\Gamma,IC}(P\|Q)\leq R_\alpha(P\|P)+W^\Gamma(P,P)=0$, hence $R_\alpha^{\Gamma,IC}(P\|Q)=0$.
\item Suppose $\Gamma$ is strictly admissible. Due to part 5 of this theorem, we only need to show that if $R_\alpha^{\Gamma,IC}(P\|Q)=0$ then $P=Q$.   If $R_\alpha^{\Gamma,IC}(P\|Q)=0$ then part 2 implies there exists $\eta_*\in\mathcal{P}(X)$ such that 
\begin{align}
0=R_\alpha(P\|\eta_*)+W^\Gamma(Q,\eta_*)\,.
\end{align}
Both terms are non-negative, hence $R_\alpha(P\|\eta_*)=0=W^\Gamma(Q,\eta_*)$. $R_\alpha$ has the divergence property, hence $\eta_*=P$.  So $W^\Gamma(Q,P)=0$. Therefore $0\geq \int gdQ-\int gdP$ for all $g\in\Gamma$.  Let $\Psi$ be as in the definition of strict admissibility and let $\psi\in\Psi$.  There exists $c\in\mathbb{R}$, $\epsilon>0$ such that $c\pm\epsilon\psi\in\Gamma$ and so $0\geq  \pm\epsilon(\int\psi dQ-\int\psi dP)$.  Therefore $\int \psi dQ=\int \psi dP$ for all $\psi\in\Psi$.  $\Psi$ is $\mathcal{P}(X)$-determining, hence $Q=P$.
\end{enumerate}

\end{proof}
 Important examples of strictly admissible $\Gamma$ include the following:
\begin{enumerate}
    \item $\Gamma=C(X)$, which leads to the classical R{\'e}nyi-divergences.
    \item $\Gamma=\Lip^1(X)$, i.e. all  1-Lipschitz functions. This regularizes the R{\'e}nyi divergences via the Wasserstein metric.
    \item $\Gamma=\{c+g:c\in\mathbb{R},g\in C(X),|g|\leq 1\}$. This regularizes the R{\'e}nyi divergences via the total-variation  metric.
        \item $\Gamma=\{c+g:c\in\mathbb{R},g\in \Lip^1(X),|g|\leq 1\}$. This regularizes the R{\'e}nyi divergences via the Dudley metric.
                \item  $\Gamma=\{c+g:c\in\mathbb{R},g\in Y:\|g\|_V\leq 1\}$, the unit ball in a RKHS $V\subset C(X)$.  This regularizes the R{\'e}nyi divergences via  MMD.
\end{enumerate}

The IC-$\Gamma$-R{\'e}nyi divergences also satisfy a data processing inequality. First we introduce the following notation: Let $Y$ be another compact metric space and $K$ be a probability kernel from $X$ to $Y$. Given $P\in\mathcal{P}(X)$ we denote the composition of $P$ with $K$ by $P\otimes K$ (a probability measure on $X\times Y$) and we denote the marginal distribution on $Y$ by $K[P]$. Given $g\in C(X\times Y)$ we let $K[g]$ denote the   function on $X$ given by $x\mapsto\int g(x,y)K_x(dy)$. See Theorem \ref{thm:IC_data_processing_app} in Section \ref{app:proofs} for a proof.
\begin{theorem}[Data Processing Inequality]\label{thm:IC_data_processing}
Let $\alpha\in(0,1)\cup(1,\infty)$, $Q,P\in\mathcal{P}(X)$, and $K$ be a probability kernel from $X$ to $Y$ such that $K[g]\in C(X)$ for all $g\in C(X,Y)$.  
\begin{enumerate}
    \item  If  $\Gamma\subset C(Y)$ is admissible then $R_\alpha^{\Gamma,IC}\left(K[P]\|K[Q]\right)\leq R_\alpha^{K[\Gamma],IC}(P\|Q)$.
    \item  If  $\Gamma\subset C(X\times Y)$ is admissible then $R_\alpha^{\Gamma,IC}\left(P\otimes K\|Q\otimes K\right)\leq R_\alpha^{K[\Gamma],IC}(P\|Q)$.
\end{enumerate}
\end{theorem}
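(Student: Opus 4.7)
My plan is to prove both parts by invoking the dual variational formula \req{eq:Renyi_inf_conv} on each side and reducing the data processing inequality to a pointwise (in the test function) comparison of the objective functionals. The preliminary step is to verify that $K[\Gamma]$ is admissible whenever $\Gamma$ is: convexity transfers because $g\mapsto K[g]$ is linear, and the constant functions are preserved because $K_x$ is a probability measure for every $x$. Together with the assumption $K[g]\in C(X)$, this ensures $R_\alpha^{K[\Gamma],IC}(P\|Q)$ admits the variational representation of Theorem \ref{thm:Renyi_inf_conv}(1).

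For part 1, fix $g\in \Gamma\subset C(Y)$ with $g<0$. The linear term is handled by Fubini and the definition of $K[Q]$: $\int g\,dK[Q]=\int K[g]\,dQ$, and $g<0$ together with $K_x\in\mathcal{P}(Y)$ forces $K[g]<0$ on $X$. For the nonlinear term I apply Jensen's inequality to the probability measure $K_x$ with the function $t\mapsto t^{(\alpha-1)/\alpha}$, keeping track of two regimes. When $\alpha>1$ the exponent $(\alpha-1)/\alpha$ lies in $(0,1)$ so this function is concave on $(0,\infty)$, yielding $\int |g|^{(\alpha-1)/\alpha}\,dK_x\le |K[g](x)|^{(\alpha-1)/\alpha}$; when $\alpha\in(0,1)$ the exponent is negative, the function is convex on $(0,\infty)$, and the inequality reverses. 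Integrating against $P$ and taking $\log$ preserves the direction, but the prefactor $1/(\alpha-1)$ also switches sign between the two regimes, and the two sign flips cancel. Thus in both regimes
\[
\frac{1}{\alpha-1}\log\int |g|^{(\alpha-1)/\alpha}\,dK[P]\le \frac{1}{\alpha-1}\log\int |K[g]|^{(\alpha-1)/\alpha}\,dP.
\]

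Adding the linear and nonlinear bounds and including the constant $\alpha^{-1}(\log\alpha+1)$ shows that for every admissible $g$, the objective of $R_\alpha^{\Gamma,IC}(K[P]\|K[Q])$ at $g$ is dominated by the objective of $R_\alpha^{K[\Gamma],IC}(P\|Q)$ at $K[g]$. Since $g\mapsto K[g]$ maps $\{g\in\Gamma:g<0\}$ into $\{h\in K[\Gamma]:h<0\}$, taking the supremum over $g$ on both sides gives the claim. Part 2 follows by the same argument but with $g\in\Gamma\subset C(X\times Y)$ depending on both variables: the Fubini identity becomes $\int g\,d(Q\otimes K)=\int K[g]\,dQ$, and Jensen's inequality is applied to $K_x$ in the $y$-variable with $x$ held fixed. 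The only real subtlety is the sign-tracking of Jensen's inequality across the two $\alpha$-regimes, but as noted the sign of $1/(\alpha-1)$ and the curvature of $t^{(\alpha-1)/\alpha}$ cancel, leaving a single uniform inequality; aside from this, no step involves more than a routine calculation.
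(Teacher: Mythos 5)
Your proof is correct and follows essentially the same route as the paper's: both rely on the dual variational formula from Theorem \ref{thm:Renyi_inf_conv}, the observation that $K[\Gamma]$ inherits admissibility, Jensen's inequality applied to $K_x$ with the exponent $(\alpha-1)/\alpha$, and the cancellation of the curvature flip against the sign flip of $1/(\alpha-1)$. The only cosmetic difference is that you compare objectives pointwise at $g$ versus $K[g]$ and then take the supremum, whereas the paper first restricts the supremum over $K[\Gamma]$ to the image of $\{g\in\Gamma:g<0\}$ and then applies Jensen.
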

Note that if $K[\Gamma]$ is strictly contained in $\Gamma$ then the bounds in Theorem \ref{thm:IC_data_processing} can be strictly tighter than the classical data processing inequality \cite{van2014renyi}.  Data-processing inequalities are important for constructing symmetry-preserving GANs; see \cite{pmlr-v162-birrell22a} and Section \ref{sec:RotMNIST}.

\vspace{-2mm}
\section{Limits, interpolations, and regularized worst-case regret}\label{sec:limit_properties}
\vspace{-2mm}
Next we use Theorem \ref{thm:Renyi_inf_conv} to compute various limits of the IC-$\Gamma$-R{\'e}nyi divergences. First we show that they interpolate between $R_\alpha$ and $W^\Gamma$ in the following sense (see Theorem \ref{thm:IC_limits_app}  for a proof).
\begin{theorem}\label{thm:IC_limits}
Let $\Gamma\subset C(X)$ be admissible, $P,Q\in\mathcal{P}(X)$, and $\alpha\in(0,1)\cup(1,\infty)$.  
\begin{enumerate}
    \item $\lim_{\delta\to0^+}\frac{1}{\delta}R_\alpha^{\delta\Gamma,IC}(P\|Q)=W^\Gamma(Q,P)$,
\item If $\Gamma$ is strictly admissible then $\lim_{L\to\infty}R_\alpha^{L\Gamma,IC}(P\|Q)=R_\alpha(P\|Q)$.
\end{enumerate}
\end{theorem}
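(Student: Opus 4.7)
The plan is to exploit the primal infimal-convolution definition \req{eq:Gamma_renyi_def} in both parts, pairing an easy upper bound (coming from a canonical choice of $\eta$) with a matching lower bound obtained by studying the minimizer guaranteed by Theorem \ref{thm:Renyi_inf_conv}(2). Throughout I would use that $X$ compact makes $\mathcal{P}(X)$ weakly compact, that $R_\alpha(P\|\cdot)$ is weakly lower semicontinuous on $\mathcal{P}(X)$ (visible from Theorem \ref{thm:gen_DV} specialized to continuous test functions), and that $W^\Gamma(Q,\cdot)$ is weakly lower semicontinuous as a supremum of weakly continuous linear functionals. Note also that $\delta\Gamma$ and $L\Gamma$ are admissible whenever $\Gamma$ is, so Theorem \ref{thm:Renyi_inf_conv} applies to them verbatim.

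For part 1, since $W^{\delta\Gamma} = \delta W^\Gamma$, plugging $\eta = P$ into the infimum yields $R_\alpha^{\delta\Gamma,IC}(P\|Q) \leq \delta W^\Gamma(Q,P)$, so $\limsup_{\delta \to 0^+} \frac{1}{\delta} R_\alpha^{\delta\Gamma,IC}(P\|Q) \leq W^\Gamma(Q,P)$. For the matching lower bound, first suppose $W^\Gamma(Q,P) < \infty$ and let $\eta_\delta$ be the minimizer from Theorem \ref{thm:Renyi_inf_conv}(2). From $R_\alpha(P\|\eta_\delta) \leq \delta W^\Gamma(Q,P) \to 0$, together with weak compactness of $\mathcal{P}(X)$, lsc of $R_\alpha(P\|\cdot)$, and the divergence property of $R_\alpha$, every weak-limit point of $\{\eta_\delta\}$ along any subsequence $\delta_n \to 0^+$ equals $P$, hence $\eta_\delta \to P$ weakly. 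Using $\frac{1}{\delta} R_\alpha^{\delta\Gamma,IC}(P\|Q) \geq W^\Gamma(Q, \eta_\delta)$ and lsc of $W^\Gamma(Q,\cdot)$ then gives $\liminf_{\delta \to 0^+} \frac{1}{\delta} R_\alpha^{\delta\Gamma,IC}(P\|Q) \geq W^\Gamma(Q,P)$. The case $W^\Gamma(Q,P) = \infty$ is handled by contradiction: any finite liminf along a subsequence would produce minimizers with $\eta_{\delta_n} \to P$ weakly and $W^\Gamma(Q,\eta_{\delta_n})$ bounded, forcing $W^\Gamma(Q,P) < \infty$ via lsc.

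For part 2, taking $\eta = Q$ gives the upper bound $R_\alpha^{L\Gamma,IC}(P\|Q) \leq R_\alpha(P\|Q)$. When $R_\alpha(P\|Q) < \infty$, the minimizer $\eta_L$ from Theorem \ref{thm:Renyi_inf_conv}(2) satisfies $W^\Gamma(Q,\eta_L) \leq R_\alpha(P\|Q)/L \to 0$. Strict admissibility of $\Gamma$ implies $W^\Gamma(Q,\cdot)$ vanishes only at $\eta = Q$ (by the same $\mathcal{P}(X)$-determining argument used in the proof of Theorem \ref{thm:Renyi_inf_conv}(7)), so weak compactness of $\mathcal{P}(X)$ combined with lsc of $W^\Gamma(Q,\cdot)$ forces $\eta_L \to Q$ weakly; lsc of $R_\alpha(P\|\cdot)$ then yields $\liminf_L R_\alpha^{L\Gamma,IC}(P\|Q) \geq \liminf_L R_\alpha(P\|\eta_L) \geq R_\alpha(P\|Q)$. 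The case $R_\alpha(P\|Q) = \infty$ is again handled by contradiction: a finite liminf along a subsequence would produce minimizers satisfying the same concentration at $Q$ and a bounded $R_\alpha(P\|\eta_{L_n})$, contradicting $R_\alpha(P\|Q) = \infty$ via lsc.

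The main obstacle in each part is the concentration step, i.e., showing the minimizer converges weakly to the correct target ($P$ in part 1, $Q$ in part 2). This rests on weak compactness of $\mathcal{P}(X)$ combined with the appropriate ``strict positivity'' property of the companion functional: the divergence property of $R_\alpha$ for part 1, and strict admissibility of $\Gamma$ (which promotes nonnegativity of $W^\Gamma(Q,\cdot)$ into strict positivity away from $Q$) for part 2. Once concentration is established, lower semicontinuity of the complementary functional automatically closes the loop and delivers the matching lower bound.
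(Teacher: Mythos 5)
Your proposal is correct and follows essentially the same strategy as the paper: in each part, plug the canonical measure ($\eta = P$ or $\eta = Q$) into the primal infimal-convolution to get the easy upper bound, then extract the Fenchel--Rockafellar minimizer from Theorem~\ref{thm:Renyi_inf_conv}(2), use compactness of $\mathcal{P}(X)$ plus lower semicontinuity to force the minimizer to concentrate at the right target, and close with lower semicontinuity of the complementary functional. The only cosmetic difference is that the paper first observes monotonicity of $\delta^{-1}R_\alpha^{\delta\Gamma,IC}$ (resp.\ $R_\alpha^{L\Gamma,IC}$) in $\delta$ (resp.\ $L$) so that the limit exists a priori and then runs a single contradiction argument, whereas you obtain the limit by matching $\liminf$ and $\limsup$ directly with a separate contradiction step in the degenerate infinite cases; both are valid.
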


Now we discuss the limiting behavior in $\alpha$. These results generalize several properties of the classical R{\'e}nyi divergences \cite{van2014renyi}. First we consider the $\alpha\to 1$ limit; see Theorem \ref{thm:alpha_1_limit_app} for a proof.
\begin{theorem}\label{thm:alpha_1_limit}
Let $\Gamma\subset C(X)$ be admissible and $P,Q\in\mathcal{P}(X)$.  Then
\begin{align}
\lim_{\alpha\to1^+}R_\alpha^{\Gamma,IC}(P\|Q)=&\inf_{\substack{\eta\in\mathcal{P}(X):\\\exists\beta>1,R_\beta(P\|\eta)<\infty}}\{R(P\|\eta)+W^\Gamma(Q,\eta)\}\,,\label{eq:IC_rev_KL_plus}\\
    \lim_{\alpha\to 1^-}R_\alpha^{\Gamma,IC}(P\|Q)=&\inf_{\eta\in\mathcal{P}(X)}\{R(P\|\eta)+W^\Gamma(Q,\eta)\}\label{eq:IC_rev_KL}\\
    =&\sup_{g\in \Gamma:g<0}\{\int gdQ+\int \log|g|dP\}+1\,.\label{eq:rev_KL_var_formula_minus}
\end{align}
\end{theorem}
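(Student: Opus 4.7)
The plan is to combine monotonicity of $\alpha\mapsto R_\alpha$ with the dual variational formula \req{eq:Renyi_inf_conv} and a KL-level Fenchel-Rockafellar argument that mirrors the proof of Theorem \ref{thm:Renyi_inf_conv}. Since the classical Rényi divergence $\alpha\mapsto R_\alpha(P\|\eta)$ is non-decreasing on $(0,\infty)$, the map $\alpha\mapsto R_\alpha^{\Gamma,IC}(P\|Q)=\inf_\eta\{R_\alpha(P\|\eta)+W^\Gamma(Q,\eta)\}$ inherits this monotonicity, so both one-sided limits at $\alpha=1$ exist and one can freely bound $R_\alpha(P\|\eta)$ above by $R(P\|\eta)$ for $\alpha<1$ and below by $R(P\|\eta)$ for $\alpha>1$.

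For \req{eq:IC_rev_KL_plus} ($\alpha\to1^+$), the upper bound follows by fixing any $\eta$ with $R_\beta(P\|\eta)<\infty$ for some $\beta>1$ and invoking the standard continuity of $R_\alpha(P\|\eta)$ in $\alpha$ on $(0,\beta]$ to conclude $R_\alpha^{\Gamma,IC}(P\|Q)\leq R_\alpha(P\|\eta)+W^\Gamma(Q,\eta)\to R(P\|\eta)+W^\Gamma(Q,\eta)$. For the matching lower bound, pick any $\alpha>1$ with $R_\alpha^{\Gamma,IC}(P\|Q)$ finite and use part 2 of Theorem \ref{thm:Renyi_inf_conv} to obtain a minimizer $\eta_\alpha$ with $R_\alpha(P\|\eta_\alpha)<\infty$; taking $\beta=\alpha$ shows $\eta_\alpha$ is feasible in the right-hand infimum of \req{eq:IC_rev_KL_plus}, so
\begin{align*}
\inf_{\eta:\,\exists\beta>1,\,R_\beta(P\|\eta)<\infty}\{R(P\|\eta)+W^\Gamma(Q,\eta)\}\leq R(P\|\eta_\alpha)+W^\Gamma(Q,\eta_\alpha)\leq R_\alpha^{\Gamma,IC}(P\|Q),
\end{align*}
and the conclusion follows by letting $\alpha\to 1^+$.

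For the $\alpha\to 1^-$ statements I first derive \req{eq:rev_KL_var_formula_minus} by taking the limit inside \req{eq:Renyi_inf_conv}. Since $X$ is compact and any $g\in\Gamma$ with $g<0$ is continuous, $|g|$ lies in a fixed compact subinterval of $(0,\infty)$, so a Taylor expansion gives $\log\int|g|^{(\alpha-1)/\alpha}dP=\frac{\alpha-1}{\alpha}\int\log|g|\,dP+O((\alpha-1)^2)$; combined with $\alpha^{-1}(\log\alpha+1)\to 1$ this yields $\lim_{\alpha\to 1^-}R_\alpha^{\Gamma,IC}(P\|Q)\geq \int g\,dQ+\int\log|g|\,dP+1$ for each such $g$, hence the $\geq$ direction of \req{eq:rev_KL_var_formula_minus} after taking the supremum. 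For the reverse inequality, and simultaneously for \req{eq:IC_rev_KL}, I repeat the Fenchel-Rockafellar argument of Theorem \ref{thm:Renyi_inf_conv} with $\Lambda_\alpha^P$ replaced by the KL-level functional
\begin{align*}
\Lambda_1^P[g]\coloneqq\infty\cdot 1_{g\not<0}-\left(\textstyle\int\log|g|\,dP+1\right)1_{g<0};
\end{align*}
the Donsker-Varadhan representation of KL gives $(\Lambda_1^P)^*[\eta]=R(P\|\eta)$ on $\mathcal{P}(X)$, and the same auxiliary arguments (using that $\Gamma$ contains constants to force $\eta(X)=1$, and a Lusin-type construction to rule out non-positive $\eta$) restrict the dual infimum from $M(X)$ to $\mathcal{P}(X)$. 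This identifies the supremum side of \req{eq:rev_KL_var_formula_minus} with the infimum side of \req{eq:IC_rev_KL}, and the trivial upper bound $\lim_{\alpha\to 1^-}R_\alpha^{\Gamma,IC}(P\|Q)\leq\inf_\eta\{R(P\|\eta)+W^\Gamma(Q,\eta)\}$ (from $R_\alpha\leq R$ for $\alpha<1$) closes the three-way chain.

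The main technical obstacle is the KL-level Fenchel-Rockafellar step: I must verify that $\Lambda_1^P$ is convex, finite, and continuous at an interior point (say $g\equiv -1$), and then reprise the positivity/normalization arguments from Theorem \ref{thm:Renyi_inf_conv} to restrict the dual variable to $\mathcal{P}(X)$. Both checks should transfer from the Rényi case with only minor modifications, since only the specific form of the convex functional changes and the structural features exploited (admissibility of $\Gamma$, sign properties of the $\log$ of an integral of a power) are preserved.
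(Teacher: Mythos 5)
Your opening monotonicity claim is incorrect, and it propagates into several steps. With the paper's normalization $R_\alpha(P\|Q)=\tfrac{1}{\alpha(\alpha-1)}\log\int p^\alpha q^{1-\alpha}d\nu$, it is $\alpha\mapsto\alpha R_\alpha(P\|Q)$ (the quantity denoted $D_\alpha$ in \cite{van2014renyi}) that is non-decreasing, \emph{not} $R_\alpha$ itself. Indeed, the computation in Section~\ref{sec:examples} gives $R_\alpha(\delta_0\|c\delta_0+(1-c)\delta_x)=\alpha^{-1}\log(1/c)$, which is strictly decreasing in $\alpha$. Consequently both sandwiching inequalities you rely on, $R_\alpha(P\|\eta)\leq R(P\|\eta)$ for $\alpha<1$ and $R_\alpha(P\|\eta)\geq R(P\|\eta)$ for $\alpha>1$, are false (the example gives the opposite strict inequalities). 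This invalidates the final inequality $R(P\|\eta_\alpha)+W^\Gamma(Q,\eta_\alpha)\leq R_\alpha^{\Gamma,IC}(P\|Q)$ in your $\alpha\to1^+$ lower bound, the "trivial upper bound" closing your $\alpha\to1^-$ argument, and the justification that the one-sided limits exist via monotonicity of $R_\alpha^{\Gamma,IC}$ (take $\Gamma=C(X)$ to see it is not monotone).

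The fix is to carry the $\alpha$-factor throughout, as the paper does via Lemma~\ref{lemma:R_IC_nondec_app}: monotonicity of $\alpha R_\alpha$ gives $R(P\|\eta)\leq\alpha R_\alpha(P\|\eta)$ for $\alpha>1$ and $\alpha R_\alpha(P\|\eta)\leq R(P\|\eta)$ for $\alpha<1$, which suffices since $\alpha\to1$; likewise it is $\alpha R_\alpha^{\Gamma,IC}$ that is monotone, so $\lim_{\alpha\to1^\pm}R_\alpha^{\Gamma,IC}=\lim_{\alpha\to1^\pm}\alpha R_\alpha^{\Gamma,IC}$ exists. With that correction your $\alpha\to1^+$ route through the Fenchel--Rockafellar minimizer $\eta_\alpha$ can be made to work (you would still need to handle the degenerate case $R_\alpha^{\Gamma,IC}(P\|Q)=\infty$ for all $\alpha>1$), although the paper's argument---interchanging $\inf_{\alpha>1}$ with $\inf_\eta$ in $\inf_{\alpha>1}\alpha R_\alpha^{\Gamma,IC}(P\|Q)$ and pushing the limit onto $\alpha R_\alpha(P\|\eta)$---is shorter and covers the infinite case automatically. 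Your $\alpha\to1^-$ treatment (Taylor expansion of the variational objective for the lower bound, then a KL-level Fenchel--Rockafellar identification of the supremum in \req{eq:rev_KL_var_formula_minus} with the infimum in \req{eq:IC_rev_KL}) otherwise mirrors the paper's proof and is sound, provided the upper bound uses $\lim_{\alpha\to1^-}R_\alpha(P\|\eta)=R(P\|\eta)$ rather than the false ordering.
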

\begin{remark}
When $\Gamma=C(X)$, changing variables to $g=-\exp(\phi-1)$ transforms \req{eq:rev_KL_var_formula_minus} into the Legendre-transform variational formula for $R(P\|Q)$; see equation (1) in \cite{9737725} with $f(x)=x\log(x)$. Also note that  \req{eq:IC_rev_KL} is an infimal convolution of the reverse KL-divergence, as opposed to the results in \cite{Dupuis:Mao} which apply to the (forward) KL-divergence.
\end{remark}

% \subsection{Function-space regularized worst-case regret}
{\bf Function-space regularized worst-case regret.} 
Next we investigate the $\alpha\to\infty$ limit of the IC-$\Gamma$-R{\'e}nyi divergences, which will lead to the function-space regularized worst-case regret.   First recall that some authors use an alternative definition of the classical  R{\'e}nyi divergences, related to the one used in this paper by $D_\alpha(\cdot\|\cdot)\coloneqq\alpha R_\alpha(\cdot\|\cdot)$. This alternative definition has the useful property of being non-decreasing in $\alpha$; see \cite{van2014renyi}.  Appropriately rescaled, the IC-$\Gamma$-R{\'e}nyi divergence also satisfies this property, leading to the following definition.
\begin{definition}\label{def:D_alpha_Gamma_IC}
For $\Gamma\subset C(X)$, $\alpha\in(0,1)\cup(1,\infty)$ and $P,Q\in\mathcal{P}(X)$ we define
\begin{align}\label{eq:D_alpha_Gamma_IC_def}
    D_\alpha^{\Gamma,IC}(P\|Q)\coloneqq \alpha R_\alpha^{\Gamma/\alpha,IC}(P\|Q)\,.
\end{align}
\end{definition}
Note that $\alpha R_\alpha^{\Gamma/\alpha,IC}(P\|Q)$ is non-decreasing in $\alpha$; see Lemma \ref{lemma:R_IC_nondec_app} for a proof. We now show that the divergences $D_\alpha^{\Gamma,IC}$ are well behaved in the $\alpha\to\infty$ limit, generalizing \req{eq:wcr_def_main}. Taking this limit provides a definition of function-space regularized worst-case regret, along with the following dual variational representation.
\begin{theorem}\label{thm:alpha_infinity_limit}  Let $\Gamma\subset C(X)$ be admissible and $P,Q\in\mathcal{P}(X)$. Then
\begin{align}
   D_\infty^{\Gamma,IC}(P\|Q)\coloneqq& \lim_{\alpha\to\infty}D_\alpha^{\Gamma,IC}(P\|Q)=\inf_{\eta\in P(X)}\{D_\infty(P\|\eta)+W^\Gamma(Q,\eta)\}\label{eq:D_inf_Gamma_def}\\
    =&\sup_{g\in \Gamma:g<0}\left\{\int gdQ+\log\int|g|dP\right\}+1\,.\label{eq:D_inf_Gamma_var_formula}
\end{align}
\end{theorem}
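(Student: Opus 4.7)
The plan is to pass to the limit $\alpha\to\infty$ in both the primal and dual representations of $R_\alpha^{\Gamma/\alpha,IC}$ from Theorem \ref{thm:Renyi_inf_conv}. The rescaling $\Gamma\mapsto\Gamma/\alpha$ built into Definition \ref{def:D_alpha_Gamma_IC} is tailored so that, via $W^{\Gamma/\alpha}=W^\Gamma/\alpha$ and the change of variables $g=h/\alpha$, the factor $\alpha$ absorbs cleanly on both the primal and dual sides. The limit exists as a supremum thanks to the monotonicity provided by Lemma \ref{lemma:R_IC_nondec_app}.

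For the dual equality \req{eq:D_inf_Gamma_var_formula}, I would apply part 1 of Theorem \ref{thm:Renyi_inf_conv} with test-function space $\Gamma/\alpha$, multiply by $\alpha$, and change variables $g=h/\alpha$ (a bijection of $\Gamma/\alpha\cap\{g<0\}$ onto $\Gamma\cap\{h<0\}$ since $\alpha>0$); after the algebra this yields
\begin{align*}
D_\alpha^{\Gamma,IC}(P\|Q) = \sup_{h\in\Gamma:\,h<0}\left\{\int h\,dQ + \frac{\alpha}{\alpha-1}\log\int|h|^{(\alpha-1)/\alpha}\,dP\right\} + 1\,.
\end{align*}
For each fixed bounded $h<0$, dominated convergence gives the pointwise limit $\int h\,dQ+\log\int|h|\,dP+1$, which supplies the $\liminf\geq$ inequality after taking the supremum. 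The matching upper bound is uniform in $\alpha$: since $(\alpha-1)/\alpha\in(0,1)$ for $\alpha>1$, Jensen applied to the concave map $t\mapsto t^{(\alpha-1)/\alpha}$ gives $\frac{\alpha}{\alpha-1}\log\int|h|^{(\alpha-1)/\alpha}\,dP\leq\log\int|h|\,dP$, so $D_\alpha^{\Gamma,IC}(P\|Q)$ is bounded above by the RHS of \req{eq:D_inf_Gamma_var_formula} for every $\alpha>1$, and squeezing closes the dual equality.

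For the primal equality \req{eq:D_inf_Gamma_def}, the scaling $W^{\Gamma/\alpha}=W^\Gamma/\alpha$ applied to \req{eq:Gamma_renyi_def} immediately rewrites $D_\alpha^{\Gamma,IC}(P\|Q)=\inf_\eta\{D_\alpha(P\|\eta)+W^\Gamma(Q,\eta)\}$, so the easy direction $\lim_\alpha D_\alpha^{\Gamma,IC}(P\|Q)\leq\inf_\eta\{D_\infty(P\|\eta)+W^\Gamma(Q,\eta)\}$ follows from $D_\alpha\leq D_\infty$. For the reverse, when the limit is finite I would invoke part 2 of Theorem \ref{thm:Renyi_inf_conv} to produce minimizers $\eta_\alpha\in\mathcal{P}(X)$, use weak compactness of $\mathcal{P}(X)$ to extract a subsequential limit $\eta_{\alpha_k}\to\eta_*$, and for each fixed $\beta$ use the monotonicity $D_\beta(P\|\eta_{\alpha_k})+W^\Gamma(Q,\eta_{\alpha_k})\leq D_{\alpha_k}^{\Gamma,IC}(P\|Q)$ (valid once $\alpha_k\geq\beta$) together with lower semicontinuity of $D_\beta(P\|\cdot)$ and of $W^\Gamma(Q,\cdot)$, followed by $D_\beta(P\|\eta_*)\nearrow D_\infty(P\|\eta_*)$ as $\beta\to\infty$, to obtain $\inf_\eta\{D_\infty(P\|\eta)+W^\Gamma(Q,\eta)\}\leq\lim_\alpha D_\alpha^{\Gamma,IC}(P\|Q)$.

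The delicate step will be this compactness-and-LSC argument, especially the bookkeeping for infinite values where minimizers $\eta_\alpha$ are not guaranteed. I would dispatch the infinite case by noting that, once the dual equality is in hand, the squeeze $\sup_g\{\ldots\}+1\leq\lim_\alpha D_\alpha^{\Gamma,IC}(P\|Q)\leq\inf_\eta\{D_\infty(P\|\eta)+W^\Gamma(Q,\eta)\}$ forces all three quantities to be $+\infty$ simultaneously, confining the compactness argument to the finite regime. A cleaner alternative, should that bookkeeping become cumbersome, is to establish $\inf_\eta\{D_\infty(P\|\eta)+W^\Gamma(Q,\eta)\}=\sup_{g\in\Gamma,\,g<0}\{\int g\,dQ+\log\int|g|\,dP\}+1$ directly via a Fenchel-Rockafellar argument mimicking the proof of Theorem \ref{thm:Renyi_inf_conv}, with the convex seed $\Lambda_\infty^P$ whose conjugate on $\mathcal{P}(X)$ is $D_\infty(P\|\cdot)$ by Theorem \ref{thm:worst_case_regret_var}.
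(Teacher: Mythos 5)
Your proposal is correct, and it closes the argument by a genuinely different route than the paper. The shared parts are: the monotonicity of $\alpha\mapsto D_\alpha^{\Gamma,IC}$, the easy direction $\lim_\alpha D_\alpha^{\Gamma,IC}\leq\inf_\eta\{D_\infty(P\|\eta)+W^\Gamma(Q,\eta)\}$, and the dominated-convergence lower bound $\lim_\alpha D_\alpha^{\Gamma,IC}\geq\sup_{g\in\Gamma,g<0}\{\int g\,dQ+\log\int|g|\,dP\}+1$. The paper then closes the loop with a single Fenchel--Rockafellar computation showing that this supremum equals the infimum over $\eta$ (via the convex conjugate of $F[g]=\infty 1_{g\not<0}-\log\int|g|\,dP$ and the Lusin-theorem argument to restrict $M(X)$ to $\mathcal{P}(X)$). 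You replace that step with two independent observations: (i) the uniform-in-$\alpha$ upper bound via Jensen applied to the concave power $t\mapsto t^{(\alpha-1)/\alpha}$, which squeezes the dual equality \req{eq:D_inf_Gamma_var_formula} without any duality machinery, and (ii) the compactness-plus-lower-semicontinuity argument with minimizers from part (2) of Theorem~\ref{thm:Renyi_inf_conv}, fixing an auxiliary order $\beta$, using $D_\beta\leq D_{\alpha_k}$ for $\alpha_k\geq\beta$, passing to a weak subsequential limit $\eta_*$, and then sending $\beta\to\infty$ via $D_\beta(P\|\eta_*)\nearrow D_\infty(P\|\eta_*)$, which yields the primal equality \req{eq:D_inf_Gamma_def}. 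This second piece mirrors the structure the paper uses for Theorem~\ref{thm:IC_limits} rather than for Theorem~\ref{thm:alpha_infinity_limit}, so your overall assembly is new. Your Jensen squeeze is arguably cleaner than the paper's dual-side treatment because it never touches convex duality, while the primal-side compactness argument is somewhat heavier than the paper's one-shot Fenchel--Rockafellar identity; your handling of the infinite case by noting all three quantities blow up simultaneously is a correct and necessary bookkeeping step. The alternative you sketch at the end (Fenchel--Rockafellar with the seed $\Lambda_\infty^P$ and Theorem~\ref{thm:worst_case_regret_var}) is precisely what the paper does.
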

We call $D_\infty^{\Gamma,IC}$ the {\bf infimal-convolution $\Gamma$-worst-case regret} (i.e., {\bf IC-$\Gamma$-WCR}). The method of  proof of Theorem \ref{thm:alpha_infinity_limit} is similar to that of part (1) of Theorem \ref{thm:Renyi_inf_conv}; see Theorem \ref{thm:alpha_infinity_limit_app} in Section \ref{app:proofs} for details. Theorem \ref{thm:alpha_infinity_limit} suggests that  $D_\alpha^{\Gamma,IC}$ is the appropriate $\alpha$-scaling to use when $\alpha$ is large and we find this to be the case in practice; see the example in Section \ref{ex:CIFAR10}.

\vspace{-2mm}
\section{Analytical examples and counterexamples}\label{sec:examples}
\vspace{-2mm}
In this section we present several  analytical examples and counterexamples that illustrate important properties of the IC-$\Gamma$-R{\'e}nyi divergences and demonstrate weaknesses of other attempts to define regularized R{\'e}nyi divergences. 

% \subsection{Infimal convolution and scaling limits}
\vspace{2mm}
{\bf Infimal convolution and scaling limits:}
First we present a simple example that illustrates the infimal convolution formula and limiting properties from Sections \ref{sec:primal_dual_formulations} and \ref{sec:limit_properties}.

Let $P=\delta_0$, $Q_{x,c}=c\delta_0+(1-c)\delta_x$ for $c\in(0,1)$, $x>0$, and let $\Gamma=\Lip^1$.  Then for $L>0$ one can compute
\begin{align}
    R_\alpha(P\|Q_{x,c})=&\alpha^{-1}\log(1/c)\,,\\
    W^{L\Gamma}(Q_{x,c},P)=&(1-c)Lx\,,
\end{align}
and
\begin{align}
    R_\alpha^{L\Gamma,IC}(P\|Q_{x,c})=&\sup_{a,b<0:|a-b|\leq x}\{Lca+L(1-c)b+\alpha^{-1}\log(L|a|)\}+\alpha^{-1}(\log\alpha+1)\\
    =&\sup_{a<0}\{Lca+L(1-c)\min\{x+a,0\}+\alpha^{-1}\log(L|a|)\}+\alpha^{-1}(\log\alpha+1)\notag\\
    =&\alpha^{-1}+\alpha^{-1}\sup_{y>0}\begin{cases}
    -cy+\log y\,,\,\,\, y\leq \alpha Lx\\
    (1-c)\alpha Lx-y+\log y \,,\,\,\,y>\alpha Lx
    \end{cases}
    \notag\\
    =&\begin{cases}
    (1-c)Lx\,,\,\,\,0<\alpha Lx<1\\
    \alpha^{-1}-cLx+\alpha^{-1}\log(\alpha Lx)\,,\,\,\,1\leq \alpha Lx\leq 1/c\\
    \alpha^{-1}\log(1/c)\,,\,\,\,\alpha Lx>1/c
    \end{cases}\,.\notag
    \end{align}
    In particular, it is straightforward to show that
    \begin{align}
        &R_\alpha^{L\Gamma,IC}(P\|Q_{x,c})\leq W^{L\Gamma}(Q_{x,c},P)\,,\\
        &\lim_{x\to 0^+}R_{\alpha}^{L\Gamma,IC}(P\|Q_{x,c})=\lim_{x\to 0^+}(1-c)Lx=0\,,\\ &\lim_{L\to\infty}R_\alpha^{L\Gamma,IC}(P\|Q_{x,c})=\alpha \log(1/c)=R_\alpha(P\|Q_{x,c})\,.
        \end{align}
    We can also rewrite this in terms of the solution to the infimal convolution problem as follows
    \begin{align}
    R_\alpha^{L\Gamma,IC}(P\|Q_{x,c})=&\begin{cases}
    W^{L\Gamma}(Q_{x,c},P)\,,\,\,\,0<\alpha Lx<1\\
    R_\alpha(P\|Q_{x,1/(\alpha Lx)})+W^{L\Gamma}(Q_{x,c},Q_{x,1/(\alpha Lx)})\,,\,\,\,1\leq \alpha Lx\leq 1/c\\
   R_\alpha(P\|Q_{x,c})\,,\,\,\,\alpha Lx>1/c
    \end{cases}\,.
\end{align}
Taking the worst-case-regret scaling limit we find
\begin{align}
    \lim_{\alpha\to\infty}\alpha R_\alpha^{\Gamma/\alpha,IC}(P\|Q_{x,c})=&\begin{cases}
    (1-c)x\,,\,\,\,0<x<1\\
    1-cx+\log(x)\,,\,\,\,1\leq x\leq 1/c\\
    \log(1/c)\,,\,\,\,x>1/c
    \end{cases}\\
    =&\begin{cases}
    W^{\Gamma}(Q_{x,c},P)\,,\,\,\,0<x<1\\
    D_\infty(P\|Q_{x,1/x})+W^{\Gamma}(Q_{x,c},Q_{x,1/x})\,,\,\,\,1\leq x\leq 1/c\\
   D_\infty(P\|Q_{x,c})\,,\,\,\,x>1/c
    \end{cases}\,,\notag
\end{align}
where $D_\infty(P\|Q_{x,c})=\log(1/c)$.

{\bf $\Gamma$-R{\'e}nyi-Donsker-Varadhan counterexample:} 
As an alternative to Definition \ref{def:Gamma_Renyi}, one can attempt to regularize the R{\'e}nyi divergences by restricting the test-function space in the variational representation \req{eq:Renyi_var}, leading to the {\bf $\Gamma$-R{\'e}nyi-Donsker-Varadhan} divergences
\begin{align}\label{eq:R_Gamma_DV}
    R^{\Gamma,DV}_\alpha(P\|Q)\coloneqq\sup_{\phi\in \Gamma}\left\{\frac{1}{\alpha-1}\log\int e^{(\alpha-1)\phi}dP-\frac{1}{\alpha}\log\int e^{\alpha \phi}dQ\right\}\,.
\end{align}
The bound $\log \int e^{c\phi}dP\geq c \int \phi dP$ for all $\phi\in\Gamma,c\in\mathbb{R}$ implies that $R_\alpha^{\Gamma,DV}\leq W^\Gamma$ for $\alpha\in(0,1)$, making \req{eq:R_Gamma_DV} a useful regularization of the R{\'e}nyi divergences in this case; this utility was  demonstrated in \cite{CUMGAN}, where it was used to construct GANs.  However, estimators built from the representation \req{eq:R_Gamma_DV} (i.e., replacing $P$ and $Q$ by empirical measures) are known to be numerically unstable when $\alpha>1$.  Below we provide a counterexample showing that, unlike for the IC-$\Gamma$-R{\'e}nyi divergences, $R^{\Gamma,DV}_\alpha\not\leq W^\Gamma$ in general when $\alpha>1$.  We conjecture that this  is a key reason for the instability of $\Gamma$-R{\'e}nyi-Donsker-Varadhan estimators when $\alpha>1$.

Let $P_{x,c}=c\delta_0+(1-c)\delta_x$, $Q=\delta_0$ for $x>0$, $c\in(0,1)$ and  $\Gamma_L=\Lip^L$.  Then for $\alpha>1$ we have
\begin{align}
    R_\alpha^{\Gamma_L,DV}(P_{x,c}\|Q)=&\sup_{a,b\in\mathbb{R}:|a-b|\leq Lx} \left\{\frac{1}{\alpha-1}\log(c\exp((\alpha-1)a)+(1-c)\exp((\alpha-1)b))-a\right\}\label{eq:R_DV_counterexample_app}\\
    =&\sup_{a\in\mathbb{R}}\left\{\frac{1}{\alpha-1}\log(c\exp((\alpha-1)a)+(1-c)\exp((\alpha-1)(Lx+a)))-a\right\}\notag\\
    =&\frac{1}{\alpha-1}\log\left(c+(1-c)\exp((\alpha-1)Lx)\right)\,,\notag\\
    W^{\Gamma_L}(P_{x,c},Q)=&\sup_{|a-b|\leq Lx}\{ca+(1-c)b-a\}=(1-c)Lx\,.
\end{align}
Note that the condition $\alpha>1$ was crucial in computing the supreumum over $b$ in  \req{eq:R_DV_counterexample_app}. Using strict concavity of the logarithm  one can then obtain the bound
\begin{align}
    R_\alpha^{\Gamma_L,DV}(P_{x,c}\|Q)>W^{\Gamma_L}(P_{x,c},Q)\,.
\end{align}
This shows that, when $\alpha>1$, $\Gamma$-R{\'e}nyi-DV  violates the key property that allows the IC-$\Gamma$-R{\'e}nyi divergences to inherit properties from the corresponding $\Gamma$-IPM.

{\bf $\log$-$\Gamma$-R{\'e}nyi-Donsker-Varadhan counterexample:} 
A second alternative to Definition \ref{def:Gamma_Renyi} is to again start with \req{eq:Renyi_var} and then reduce the test-function space to $\frac{1}{\alpha}\log(\Gamma)$
\begin{align}\label{eq:DV_log_app}
{R}^{\Gamma, log-DV}_\alpha(P\|Q)\coloneqq\sup_{g\in\Gamma:g>0}\left\{\frac{1}{\alpha-1}\log\int g^{(\alpha-1)/\alpha} dP-\frac{1}{\alpha}\log\int g dQ\right\}\,.
\end{align}
However, as we show below, this definition fails to provide a regularized divergence; in particular, it is incapable of meaningfully comparing Dirac distributions.

Let $P=\delta_0$, $Q_x=\delta_x$, $x>0$, $\Gamma_L=\Lip^L$.  Then straightforward computations using the variational definition gives
\begin{align}
R_\alpha^{\Gamma_L,DV-log}(P\|Q_x)=&\alpha^{-1}\sup_{\phi\in\Gamma,\phi>0}\log(\phi(0)/\phi(x))\\
=&\alpha^{-1}\sup_{b>0}\sup_{a>0:b-x\leq a\leq x+b}\log(a/b)\notag\\
=&\alpha^{-1}\sup_{b>0}\log(1+x/b)=\infty\,.\notag
\end{align}
In contrast we have
\begin{align}
&R_\alpha^{\Gamma_L,IC}(P\|Q_x)\\
=&\sup_{a<0,b<0:b-x\leq a\leq b+x}\{Lb+\alpha^{-1}\log L+\alpha^{-1}\log(|a|)\}+\alpha^{-1}(\log(\alpha)+1)\notag\\
=&\sup_{b<0}\{Lb+\frac{1}{\alpha}\log(|b-x|)\}+\alpha^{-1}\log L+\alpha^{-1}(\log\alpha+1)\notag\\
=&\begin{cases}
\alpha^{-1}\log(\alpha Lx)+\alpha^{-1}\,, & x\geq 1/(\alpha L)\\
Lx\,,& x<1/(\alpha L)
\end{cases}\,.\notag
\end{align}
In particular, 
\begin{align}
    &R_\alpha^{\Gamma_L,IC}(P\|Q_x)\leq Lx=W^{\Gamma_L}(P,Q_x)\,,\\
&\lim_{x\to 0^+}R_\alpha^{\Gamma_L,IC}(P\|Q_x)=0\,,
\end{align}
%$&\lim_{L\to\infty}R_\alpha^{\Gamma_L,IC}(P\|Q_x)=\infty=R_\alpha(P\|Q_x)$

showing that $R_\alpha^{\Gamma_L,IC}$ is able to capture the convergence of $Q_x$ to $P$ as $x\to 0^+$, while  $R_\alpha^{\Gamma,log-DV}$ fails to do so. The above two counterexamples  lend further credence to our infimal-convolution based regularization approach \req{eq:Gamma_renyi_def}.

\begin{comment}
\subsection{$\log$-$\Gamma$-R{\'e}nyi-Donsker-Varadhan Counterexample}
A second alternative to Definition \ref{def:Gamma_Renyi} is to again start with \req{eq:Renyi_var} and then reduce the test-function space to $\frac{1}{\alpha}\log(\Gamma)$
\begin{align}\label{eq:DV_log}
{R}^{\Gamma, log-DV}_\alpha(P\|Q)\coloneqq\sup_{g\in\Gamma:g>0}\left\{\frac{1}{\alpha-1}\log\int g^{(\alpha-1)/\alpha} dP-\frac{1}{\alpha}\log\int g dQ\right\}\,.
\end{align}
However, as we show below, this definition fails to provide a regularized divergence; in particular, it is incapable of meaningfully comparing Dirac distributions.

Let $P=\delta_0$, $Q_x=\delta_x$, $x>0$, $\Gamma_L=\Lip^L$.  Then straightforward computations using the variational definition gives
\begin{align}
R_\alpha^{\Gamma_L,DV-log}(P\|Q_x)=\infty\,,
\end{align}
while in contrast one has
\begin{align}
R_\alpha^{\Gamma_L,IC}(P\|Q_x)=&\begin{cases}
\alpha^{-1}\log(\alpha Lx)+\alpha^{-1}\,, & x\geq 1/(\alpha L)\\
Lx\,,& x<1/(\alpha L)
\end{cases}\,.
\end{align}
In particular $R_\alpha^{\Gamma_L,IC}(P\|Q_x)\leq Lx=W^{\Gamma_L}(P,Q_x)$ and $\lim_{x\to 0^+}R_\alpha^{\Gamma_L,IC}(P\|Q_x)=0$, showing that $R_\alpha^{\Gamma_L,IC}$ captures the convergence of $Q_x$ to $P$ as $x\to 0^+$, while  $R_\alpha^{\Gamma,log-DV}$ fails to do so.
\end{comment}

\vspace{-2mm}
\section{Numerical experiments}\label{sec:experiments}
\vspace{-2mm}

In this section we present  numerical examples that demonstrate the use of the IC-$\Gamma$-R{\'e}nyi divergences for both estimation and  training of GANs. All of the divergences considered in this paper have a  variational representation of the form $D(P\|Q)=\sup_{g\in\Gamma} H[g;P,Q]$ for some objective functional $H$; we use the corresponding estimator
\begin{align}\label{eq:div_estimator}
    \widehat{D}_n(P\|Q)\coloneqq\sup_{\theta\in\Theta} H[g_\theta;P_n,Q_n]
\end{align}
where $P_n$, $Q_n$ are $n$-sample empirical measures and $g_\theta$ is a family of neural networks (NN) with parameters $\theta\in\Theta$. For Lipschitz function spaces we weaken the Lipschitz constraint to a soft 1-sided gradient penalty (see Section 4.1 of \cite{JMLR:v23:21-0100}). Optimization is performed using the Adam optimizer \cite{kingma2014adam}. For the infimal convolution divergences we enforce negativity of the test function (i.e., discriminators) using a  final layer having one of the following forms: 1)$ -abs(x)$ or 2) $-(1/(1-x) 1_{x<0}+(1+x)1_{x\geq 0})$.  The latter, which we term poly-softplus, is $C^1$ and decays like $O(x^{-1})$ as $x\to-\infty$. Table \ref{table:experiments} gives an overview of the divergences studied in the  experiments below.
\begin{table}[h]
\centering
\caption{Overview of numerical experiments.}
\begin{tabular}{|c c c |} 
 \hline
 Divergence & Definition  & Related Experiments\\
 \hline
 \hline
DV-R{\'e}nyi & \req{eq:Renyi_var} & Section \ref{sec:var_example}\\
CC-R{\'e}nyi & \req{eq:Renyi_LT_var_Mb_main} & Section \ref{sec:var_example}\\
IC-$\Gamma$-R{\'e}nyi& \req{eq:Gamma_renyi_def} and \req{eq:Renyi_inf_conv} &Sections \ref{ex:CIFAR10} and \ref{sec:RotMNIST} \\
Rescaled IC-$\Gamma$-R{\'e}nyi & \req{eq:D_alpha_Gamma_IC_def} & Sections \ref{sec:subpop} and \ref{ex:CIFAR10}\\
DV-WCR & Equation (10) in \cite{10.1109/ISIT50566.2022.9834358} & Section \ref{sec:var_example}\\
CC-WCR & \req{thm:worst_case_regret_var} & Section \ref{sec:var_example}\\
IC-$\Gamma$-WCR &\req{eq:D_inf_Gamma_def}-\req{eq:D_inf_Gamma_var_formula}& Sections \ref{sec:subpop} and \ref{ex:CIFAR10}\\
 \hline
 \end{tabular}\label{table:experiments}
 \end{table}

\subsection{Variance of R{\'e}nyi estimators}\label{sec:var_example}
 As a first example, we compare estimators of the classical R{\'e}nyi divergences  (i.e., without regularization) constructed from DV-R{\'e}nyi \req{eq:Renyi_var} and CC-R{\'e}nyi \req{eq:Renyi_LT_var_Mb_main} in a simple case where the exact R{\'e}nyi divergence is known. We let $Q$ and $P$ be 1000-dimensional Gaussians with equal variance and study $R_\alpha(P\|Q)$ as a function of the separation between their means.  The results are shown in Figure \ref{fig:var_example}. We see that the estimator based on the convex-conjugate R{\'e}nyi variational formula  \ref{eq:Renyi_LT_var_Mb_main} has smaller variance and mean-squared error (MSE) that the R{\'e}nyi-Donsker-Varadhan variational formula \ref{eq:Renyi_var}, with the difference becoming very large when $\alpha\gg 1$ or when $P$ and $Q$ are far apart (i.e., when $\mu_q$ is large). The R{\'e}nyi-Donsker-Varadhan estimator only works well when $\mu_q$ and $\alpha$ are both not too large, but even in such cases the convex-conjugate R{\'e}nyi estimator generally performs better. We conjecture that this difference is due to the  presence of risk-sensitive terms in \req{eq:Renyi_var} which were eliminated in the new representation \req{eq:Renyi_LT_var_Mb_main}. We note that the NN for the convex-conjugate R{\'e}nyi estimator used the poly-softplus final layer, as we found the $-abs$ final layer to result in a significant percentage of failed runs (i.e., NaN outputs) but this issue did not arise when using poly-softplus. We do not show results for either DV-WCR or CC-WCR here as the exact divergence is infinite in this example.
\begin{figure}[ht]
\begin{minipage}[b]{0.43\linewidth}
  \centering
\includegraphics[scale=.50]{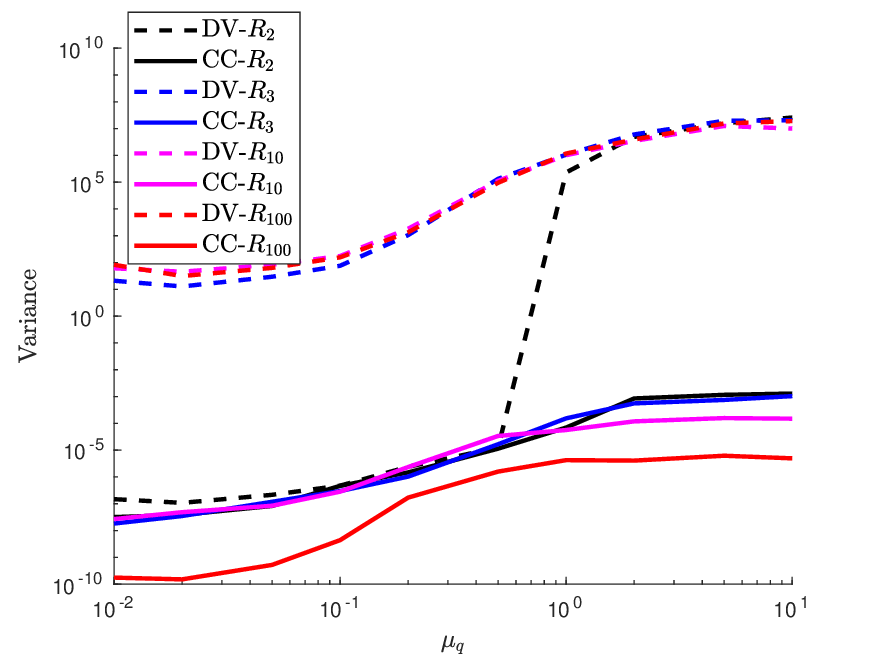} \end{minipage}
\hspace{0.5cm}
\begin{minipage}[b]{0.43\linewidth}
  \centering
\includegraphics[scale=.50]{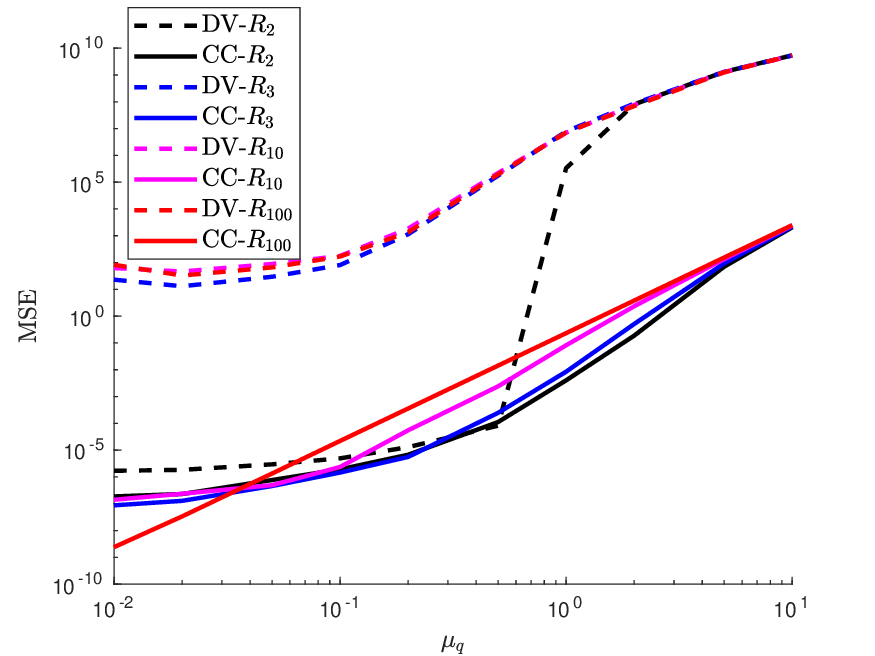} \end{minipage}
\caption{Variance and MSE of estimators of the classical R{\'e}nyi divergence between 1000-dimensional Gaussians. DV-$R_\alpha$ refers to R{\'e}nyi divergence estimators built using \req{eq:Renyi_var} while CC-$R_\alpha$ refers to estimators built using our new variational representation \req{eq:Renyi_LT_var_Mb_main}. We used a NN with one fully connected layer of 64 nodes, ReLU activations, and a poly-softplus final layer (for CC-R{\'e}nyi). We trained for 10000 epochs with a minibatch size of 500. The variance and MSE were computing using data from 50 independent runs. Note that the CC-R{\'e}nyi estimator has significantly  reduced variance and MSE compared to the DV-R{\'e}nyi estimator, even when $\alpha$ is large. Strikingly, the 1-D case exhibits the same behavior (see Figure \ref{fig:var_example_dim1} in Appendix \ref{app:var_example}), demonstrating that the DV-R{\'e}nyi estimator is unsuitable even in low dimensions. }\label{fig:var_example}
\end{figure}

\subsection{Detection of rare sub-populations in single-cell biological datasets}\label{sec:subpop}

A critical task in cancer assessment is the detection of rare sub-populations subsumed in the overall population of cells. The advent of affordable flow and mass cytometry technologies that perform single cell measurements opens a new direction for the analysis and comparison of high-dimensional cell distributions \cite{Shahi:sci:rep} via divergence estimation.
We consider single cell mass cytometry measurements on 16 bone marrow protein markers ($d=16$) coming from healthy and disease individuals with acute myeloid leukemia
%\footnote{Data were accessed from \url{https://
%community.cytobank.org/cytobank/experiments/46098/illustrations/121588}} 
\cite{Levine2015}. Following \cite{diffcyt}, we create two datasets: one with only healthy samples and another one with decreasing percentage of sick cells and compute several divergences. Considering the estimated divergence value as the score of a binary classifier, we can compute the ROC curve and the respective area under the ROC curve (AUC) for any pair of sample distributions. 
More specifically, true negatives correspond to the divergence values between two healthy datasets while true positives correspond to the divergence between a healthy and a diseased dataset. Thus, the AUC is 1.0 when the divergence estimates are completely separable while AUC is 0.5 when they completely overlap.
Table \ref{table:subpop:detection} reports the AUC values for the scaled IC-$\Gamma$-R\'enyi divergences \req{eq:D_alpha_Gamma_IC_def}, various levels of rarity and two sample sizes for the datasets. %It is evident that the scaled R\'enyi divergence is capable of differentiating the diseased dataset better relative to the unscaled R\'enyi divergence. As expected, an increased sample size (100K instead of 20K) results in improved detection of lower percentages for the diseased population. 
The best performance in the R\'enyi family is obtained for $\alpha=\infty$ using the IC-$\Gamma$-WCR variational formula \req{eq:D_inf_Gamma_var_formula}. IC-$\Gamma$-WCR also outperforms the  Wasserstein distance of first order in both   sample size regimes.  %As expected from the results in Section \ref{sec:var_example}, the $\Gamma$-R{\'e}nyi-DV divergence with $\alpha>1$ performs very poorly here and so we omit those results from the table.

 \begin{table}[h]
    \caption{AUC values (higher is better) for several divergences and various levels of rarity. The AUC values have been averaged from 50 independent runs. The neural discriminator has 2 hidden layers with 32 units each and ReLU activation. The $D_\alpha^{\Gamma,IC}$ divergences used the poly-softplus final layer.} \label{table:subpop:detection}
  \centering
  \resizebox{\textwidth}{!}{
  \begin{tabular}{cc|ccccccc|ccccc}
    \toprule
    \multicolumn{2}{c|}{Sample size} & \multicolumn{7}{c|}{100K} & \multicolumn{5}{c}{20K} \\
    \midrule
     \multicolumn{2}{c|}{Probability (\%)} & 0.1 & 0.2 & 0.3 & 0.4 & 0.5 & 1.0 &  10.0 & 0.1 & 0.3 & 0.5 & 1.0 & 10.0 \\
    % \midrule
    % \multirow{3}{*}{\rotatebox{90}{$R_\alpha$}} & $\alpha=2$ & 0.54 & 0.57 & 0.58 & 0.60 & 0.64 & 0.73 & 1.00 & 0.46 & 0.47 & 0.44 & 0.49 & 0.98 \\
    % & $\alpha=5$ & 0.56 & 0.62 & 0.68 & 0.61 & 0.66 & 0.74 & 1.00 & 0.45 & 0.40 & 0.49 & 0.47 & 0.58 \\
    % & $\alpha=10$ & 0.58 & 0.45 & 0.48 & 0.49 & 0.51 & 0.54 & 0.79 & 0.47 & 0.48 & 0.44 & 0.40 & 0.63 \\
    \midrule
    \multirow{4}{*}{\rotatebox{90}{$D_\alpha^{\Gamma,IC}$}} & $\alpha=2$ & 0.51 & 0.55 & 0.62 & 0.64 & 0.70 & 0.92 & 1.00 & 0.48 & 0.58 & 0.58 & 0.60 & 1.00 \\
    & $\alpha=5$ & 0.66 & 0.70 & 0.71 & 0.72 & 0.74 & 0.80 & 1.00 & 0.32 & 0.37 & 0.43 & 0.38 & 0.91 \\
    & $\alpha=10$ & 0.57 & 0.50 & 0.62 & 0.64 & 0.49 & 0.59 & 1.00 & 0.48 & 0.48 & 0.43 & 0.47 & 0.74 \\
    & $\alpha=\infty$ & 0.64 & 0.89 & 0.96 & 0.99 & 1.00 & 1.00 & 1.00 & 0.58 & 0.71 & 0.79 & 0.91 & 1.00 \\
    \midrule
    \multicolumn{2}{c|}{Wasserstein} & 0.63 & 0.58 & 0.58 & 0.51 & 0.57 & 0.55 & 1.00 & 0.46 & 0.40 & 0.45 & 0.40 & 1.00 \\
    % \multicolumn{2}{c|}{KLD-DV-Lip} & 0.75 & 0.99 & 1.00 & 1.00 & 1.00 & 1.00 & 1.00 & 0.48 & 0.81 & 0.97 & 1.00 & 1.00  \\
\bottomrule
  \end{tabular}
  }
\end{table}

\subsection{IC-$\Gamma$-R{\'e}nyi GANs}
Finally, we study a pair of GAN examples. Here the goal is to learn a distribution $P$ using a family of generator distribution $Q_\psi\sim h_\psi(X)$ where $X$ is a noise source and $h_\psi$ is a family of neural networks parametrized by $\psi\in\Psi$, i.e., the goal is to solve 
\begin{align}
    \inf_{\psi\in\Psi}\widehat{D}_n(P\|Q_\psi)\,,
\end{align}
where $\widehat{D}_n$ is a divergence estimator of the form \req{eq:div_estimator}. In particular, we will study the GANs constructed from the newly introduced  IC-$\Gamma$-R{\'e}nyi and IC-$\Gamma$-WCR GANs and compare them with Wasserstein GAN \cite{wgan:gp,arjovsky2017wasserstein}.
\subsubsection{CIFAR-10}\label{ex:CIFAR10}
In Figure \ref{fig:C10_GAN} we demonstrate  improved performance of the IC-$\Gamma$-R{\'e}nyi and IC-$\Gamma$-WCR GANs, as compared to Wasserstein GAN with gradient penalty (WGAN-GP), on the CIFAR-10 dataset \cite{krizhevsky2009learning}.  The IC GANs also outperform R{\'e}nyi-DV GAN \req{eq:R_Gamma_DV}, as the latter is highly unstable when $\alpha>1$ and so the training generally encounters NaN after a small number of training epochs (hence we omit those results from the figure).   We use the same ResNet neural network architecture  as in \cite[Appendix F]{wgan:gp} and focus on  evaluating the effect of different divergences. Here we let $\Gamma$ be the set of 1-Lipschitz functions, implement via a gradient penalty. Note that $D_\infty^{\Gamma,IC}$ performs significantly better than $R_\alpha^{\Gamma,IC}$ with large $\alpha$, and the rescaled $D_\alpha^{\Gamma,IC}$-GAN performs better that $R_\alpha^{\Gamma,IC}$-GAN when $\alpha$ is large.

\begin{figure}[ht]

\begin{minipage}[b]{0.43\linewidth}
  \centering
\includegraphics[scale=.50]{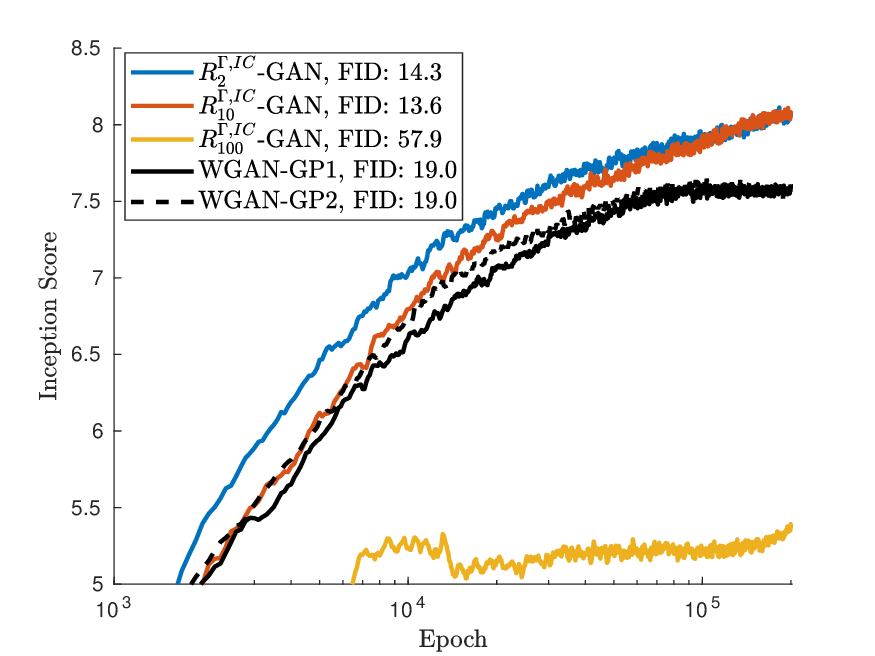} \end{minipage}
\hspace{0.5cm}
\begin{minipage}[b]{0.43\linewidth}
  \centering
\includegraphics[scale=.50]{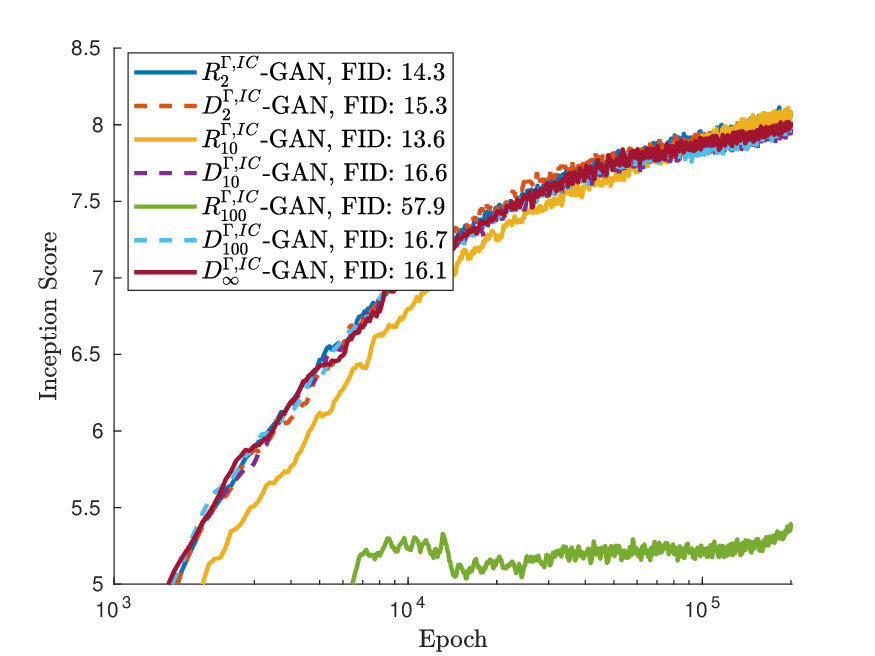} \end{minipage}
\caption{  Comparison between IC-$\Gamma$-R{\'e}nyi GAN, IC-$\Gamma$-WCR GAN, and WGAN-GP (both 1 and 2-sided) on the CIFAR-10 dataset.  Here we plot the inception score as a function of the number of training epochs (moving average over the last 5 data points, with results  averaged over 5 runs). We also show the averaged final FID score in the legend, computed using 50000 samples  from both $P$ and $Q$.  For the IC GANs we enforce negativity of the discriminator by using a final layer equal to $-\text{abs}$. The GANs were trained using the Adam optimizer with an initial learning rate of $0.0002$. The left pane shows that the IC-$\Gamma$-R{\'e}nyi GANs outperform WGAN while the right pane shows that GANs based on the rescaled $D_\alpha^{\Gamma,IC}$ divergences \req{eq:D_alpha_Gamma_IC_def} perform better when $\alpha$ is large, including in the $\alpha\to\infty$ limit, i.e., IC-$\Gamma$-WCR \req{eq:D_inf_Gamma_def}.  In both cases the IC GANs  outperform the  $\Gamma$-R{\'e}nyi-DV GANs with $\alpha>1$ \req{eq:R_Gamma_DV}; the latter fail to converge due to the presence of risk-sensitive terms. }\label{fig:C10_GAN}
\end{figure}

\subsubsection{RotMNIST}\label{sec:RotMNIST}
When learning a distribution $P$ that is invariant under a symmetry group (e.g., rotation invariance for images without preferred orientation) one can greatly increase performance by using a GAN that incorporates the symmetry information into the generator and the discriminator space $\Gamma$ \cite{EquivariantGAN}.  A theory of such symmetry-preserving GANs was developed in \cite{pmlr-v162-birrell22a} and the new divergences introduced in this paper satisfy the assumptions required to apply that theory. In Table \ref{tab:fid_rotmnist}  we  demonstrate this effectiveness on the RotMNIST dataset, obtained from randomly rotating the original  MNIST digit dataset \cite{lecun1998gradient}, resulting in an rotation-invariant distribution. Note that incorporating more symmetry information into the GAN (i.e., progressing down the rows of the table) results in greatly improved performance, especially in the low data regime.
  \begin{table}[h]
    \caption{The median of the FIDs (lower is better), calculated every 1,000 generator update for 20,000 iterations,  over three independent trials. The number of the training samples used for experiments varies from 1\% (600) to 10\% (6,000) of the RotMNIST training set. The NN structure and hyperparameters are the same as those used in Section 5.4 of \cite{pmlr-v162-birrell22a}. \texttt{Eqv G} (resp. \texttt{Inv D}) denotes that the symmetry information was incorporated into the generator (resp. discriminator) while \texttt{CNN} implies that a convolutional NN was used (without rotational symmetry).  $\Sigma$ denotes the rotation group used, where $C_n$ denotes rotations by being integer multiples of $2\pi/n$. } \label{tab:fid_rotmnist}  
  \centering
  \begin{tabular}{m{1em}cccccc}
    \toprule
     & Architecture &  1\%& 5\%& 10\% \\
    \midrule
    \rotatebox{90}{Reverse $R_2^{\Gamma,IC}$ }& \makecell{\texttt{CNN G\&D} \\
    \texttt{Eqv G} + \texttt{CNN D}, $\Sigma = C_4$\\    
    \texttt{CNN G} + \texttt{Inv D}, $\Sigma = C_4$\\
    \texttt{Eqv G} + \texttt{Inv D}, $\Sigma = C_4$ \\
    \texttt{Eqv G} + \texttt{Inv D}, $\Sigma = C_8$}&
\makecell{357\\464\\366\\151\\\textbf{114}}&\makecell{325\\271\\321\\105\\\textbf{71}}&\makecell{298\\263\\302\\89\\\textbf{62}}    \\
    \bottomrule
  \end{tabular}
\end{table}

\section{Proofs}\label{app:proofs}
In this section we provide a number of proofs that were omitted from the above text. Recall that $X$ denotes a compact metric space.
\begin{lemma}\label{lemma:R_IC_nondec_app}
Let $\Gamma\subset C(X)$  and  $P,Q\in\mathcal{P}(X)$.  Then $\alpha R_\alpha^{\Gamma/\alpha,IC}(P\|Q)$ is non-decreasing in $\alpha\in(0,1)\cup(1,\infty)$. If  $0\in\Gamma$ then $\alpha R_\alpha^{\Gamma,IC}(P\|Q)$ is also non-decreasing.
\end{lemma}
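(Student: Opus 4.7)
The plan is to work directly from the primal infimal-convolution definition \req{eq:Gamma_renyi_def} (so that we do not need to invoke admissibility of $\Gamma$ or the dual formula), combined with two elementary facts: (i) positive homogeneity of the IPM in its function space, $W^{c\Gamma}(Q,\eta) = c\,W^\Gamma(Q,\eta)$ for any $c > 0$, which is immediate from the definition \req{def:IPM}; and (ii) the known monotonicity of the alternative scaling $D_\alpha(P\|\eta) \coloneqq \alpha R_\alpha(P\|\eta)$ of the classical R{\'e}nyi divergence, namely that $\alpha \mapsto D_\alpha(P\|\eta)$ is non-decreasing on $(0,\infty)\setminus\{1\}$ (\cite{van2014renyi}).

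For the first assertion, multiply \req{eq:Gamma_renyi_def} by $\alpha>0$ with $\Gamma$ replaced by $\Gamma/\alpha$, then apply the positive homogeneity of $W$ to get $\alpha W^{\Gamma/\alpha}(Q,\eta)=W^\Gamma(Q,\eta)$. This yields
\begin{align}
\alpha R_\alpha^{\Gamma/\alpha,IC}(P\|Q)
= \inf_{\eta\in\mathcal{P}(X)}\bigl\{ D_\alpha(P\|\eta) + W^\Gamma(Q,\eta)\bigr\}.
\end{align}
For each fixed $\eta$ the bracketed expression is non-decreasing in $\alpha$ (the first term by (ii), the second term is constant in $\alpha$), and a pointwise-in-$\eta$ inequality between non-negative (or even arbitrary) integrands is preserved by taking the infimum over $\eta$. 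Hence $\alpha R_\alpha^{\Gamma/\alpha,IC}(P\|Q)$ is non-decreasing on $(0,1)\cup(1,\infty)$.

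For the second assertion, do not rescale $\Gamma$; simply multiply \req{eq:Gamma_renyi_def} by $\alpha$ to obtain
\begin{align}
\alpha R_\alpha^{\Gamma,IC}(P\|Q)
= \inf_{\eta\in\mathcal{P}(X)}\bigl\{ D_\alpha(P\|\eta) + \alpha\,W^\Gamma(Q,\eta)\bigr\}.
\end{align}
The extra factor of $\alpha$ in front of $W^\Gamma$ is harmless provided $W^\Gamma(Q,\eta)\geq 0$, since then $\alpha\mapsto \alpha W^\Gamma(Q,\eta)$ is non-decreasing on $(0,\infty)$. The hypothesis $0\in\Gamma$ supplies exactly this nonnegativity: taking $g=0$ in the supremum \req{def:IPM} shows $W^\Gamma(Q,\eta)\geq 0$. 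Combining with (ii) and applying the same ``infimum preserves monotonicity'' argument as above finishes the proof.

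There is no genuine obstacle here; the slightly subtle point is only to notice that without the $1/\alpha$ rescaling of $\Gamma$, monotonicity of $\alpha R_\alpha^{\Gamma,IC}$ depends on the sign of $W^\Gamma$, which is why the $0\in\Gamma$ hypothesis is needed in the second claim and not in the first.
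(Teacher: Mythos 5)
Your proof is correct and follows essentially the same route as the paper's: pull the factor of $\alpha$ inside the primal infimal-convolution definition, use $\alpha W^{\Gamma/\alpha}=W^\Gamma$ for the rescaled claim and $W^\Gamma\geq 0$ (from $0\in\Gamma$) for the unrescaled one, and invoke the known monotonicity of $\alpha\mapsto\alpha R_\alpha(P\|\eta)$ together with the fact that a pointwise-monotone family of functions has a monotone infimum. The only difference is presentational: the paper proves the $0\in\Gamma$ case first and then notes the rescaled case is similar, while you do the reverse.
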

\begin{proof}
If $0\in\Gamma$ then $W^\Gamma\geq 0$, hence
\begin{align}
    \alpha R_\alpha^{\Gamma,IC}(P\|Q)= \inf_{\eta\in\mathcal{P}(X)}\{\alpha R_\alpha(P\|\eta)+\alpha W^\Gamma(Q,\eta)\}
\end{align}where both $\alpha\mapsto \alpha R_\alpha(P\|\eta)$ and $\alpha\mapsto \alpha W^\Gamma(Q,\eta)$ are non-decreasing.  Therefore the infimum is as well.  The proof for $\alpha\mapsto \alpha R_\alpha^{\Gamma/\alpha,IC}(P\|Q)$ is similar, though it doesn't require the assumption $0\in\Gamma$ due to the identity $\alpha W^{\Gamma/\alpha}=W^\Gamma$.
\end{proof}

Next we prove a key lemma that is used in our main result. First recall the definition
\begin{align}\label{eq:Lambda_def_app}
\Lambda_\alpha^P[g]\coloneqq\infty1_{g\not<0}-\left(\frac{1}{\alpha-1}\log\int |g|^{(\alpha-1)/\alpha} dP+\alpha^{-1}(\log\alpha +1)\right)1_{g<0}\,,\,\,\, g\in C(X).
\end{align}
\begin{lemma}\label{lemma:Lambda_cont_app}
$\Lambda_\alpha^P$ is convex and is continuous on $\{g\in C(X):g<0\}$, an open subset of $C(X)$.

\end{lemma}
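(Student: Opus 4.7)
The plan is to verify three things: (i) $U := \{g \in C(X) : g < 0\}$ is open; (ii) $\Lambda_\alpha^P$ is continuous on $U$; (iii) $\Lambda_\alpha^P$ is convex on all of $C(X)$.

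For (i), compactness of $X$ makes $g < 0$ pointwise equivalent to $\max_X g < 0$, and $g \mapsto \max_X g$ is $1$-Lipschitz on $(C(X), \|\cdot\|_\infty)$, so $U$ is the preimage of $(-\infty,0)$ under a continuous map. For (ii), fix $g_0 \in U$; compactness yields $-g_0 \geq \epsilon$ on $X$ for some $\epsilon > 0$, so for $g$ in a small sup-norm ball around $g_0$ the range of $-g$ stays inside a compact subinterval $[\epsilon/2, M] \subset (0,\infty)$ on which $t \mapsto t^{(\alpha-1)/\alpha}$ is Lipschitz. Hence $g \mapsto (-g)^{(\alpha-1)/\alpha}$ is continuous $U \to C(X)$, and composing with $h \mapsto \int h \, dP$ and $\log$ (evaluated away from $0$) gives continuity of $\Lambda_\alpha^P$ on $U$.

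For (iii), write $\beta := (\alpha-1)/\alpha$ and $F[g] := -\tfrac{1}{\alpha-1}\log\int (-g)^\beta dP$ on $U$, so $\Lambda_\alpha^P$ equals $F$ plus a constant on $U$ and $+\infty$ off $U$. Since $U$ is convex (a convex combination of strictly negative continuous functions is strictly negative), any convex combination that involves a point outside $U$ makes the right-hand side of the convexity inequality equal to $+\infty$, so global convexity reduces to convexity of $F$ on $U$. I would then split into two cases. If $\alpha > 1$ then $\beta \in (0,1)$, so $t \mapsto t^\beta$ is concave on $(0,\infty)$, making $g \mapsto \int (-g)^\beta dP$ concave; composing with the concave increasing function $\log$ preserves concavity, and multiplying by $-\tfrac{1}{\alpha-1} < 0$ yields convexity of $F$. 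If $\alpha \in (0,1)$ then $\beta < 0$ and $-\tfrac{1}{\alpha-1} > 0$, so it suffices to show $g \mapsto \log\int (-g)^\beta dP$ is convex in $g$. For each $x$, $t \mapsto t^\beta$ is log-convex on $(0,\infty)$ because $\beta \log t$ is convex in $t$ when $\beta < 0$; pulling back via the affine map $g \mapsto -g(x)$ shows $g \mapsto (-g(x))^\beta$ is log-convex in $g$ for each fixed $x$. Invoking the classical fact that integrals of log-convex functions are log-convex (the integral version of the sum-of-log-convex lemma), $g \mapsto \int (-g)^\beta dP$ is log-convex, i.e., $g \mapsto \log\int (-g)^\beta dP$ is convex.

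The main obstacle is the $\alpha \in (0,1)$ case: the direct concavity argument used for $\alpha > 1$ breaks because $\log$ is concave while the inner integrand is now convex in $g$, so the argument must be routed through log-convexity and its preservation under integration rather than through ordinary concavity/convexity composition.
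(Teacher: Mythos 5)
Your proposal is correct and follows essentially the same route as the paper: openness and continuity come from compactness (strictly negative continuous functions being uniformly bounded away from $0$), the $\alpha>1$ convexity from concavity of $t\mapsto t^{(\alpha-1)/\alpha}$ together with $-\log$ being decreasing and convex, and the $\alpha\in(0,1)$ convexity from a log-convexity argument. The only cosmetic difference is that for $\alpha\in(0,1)$ the paper unrolls the argument directly via weighted AM--GM plus H\"older's inequality with exponents $1/\lambda$ and $1/(1-\lambda)$, whereas you invoke the packaged lemma that integrals of log-convex functions are log-convex (which is itself proved by the same H\"older step), and you replace the paper's dominated-convergence argument for continuity with an explicit local-Lipschitz estimate on the compact interval $[\epsilon/2,M]$ --- both are valid.
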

\begin{proof}
First we prove convexity.  Let $g_0,g_1\in\{C(X):g<0\}$ and $\lambda\in(0,1)$. For $\alpha\in(0,1)$ we can use the inequality $\lambda a+(1-\lambda) b\geq a^\lambda b^{1-\lambda}$ for all $a,b>0$ to compute 
\begin{align}
&-\frac{1}{\alpha-1}\log\int |\lambda g_1+(1-\lambda)g_0|^{(\alpha-1)/\alpha} dP\leq -\frac{1}{\alpha-1}\log\int(|g_1|^\lambda|g_0|^{1-\lambda})^{(\alpha-1)/\alpha}dP\,.
\end{align}
Using H{\"o}lder's inequality with exponents $p=1/\lambda$, $q=1/(1-\lambda)$ we then obtain
\begin{align}
 &-\frac{1}{\alpha-1}\log\int(|g_1|^\lambda|g_0|^{1-\lambda})^{(\alpha-1)/\alpha}dP\\
\leq &-\frac{1}{\alpha-1}\log\left(\int |g_1|^{(\alpha-1)/\alpha}dP^{\lambda}\int |g_0|^{(\alpha-1)/\alpha}dP^{1-\lambda}\right)\notag\\
=&\lambda\left(-\frac{1}{\alpha-1}\log\int |g_1|^{(\alpha-1)/\alpha}dP\right)+(1-\lambda)\left(-\frac{1}{\alpha-1}\log\int |g_0|^{(\alpha-1)/\alpha}dP\right)\,.\notag
\end{align}
Therefore $g\mapsto -\frac{1}{\alpha-1}\log\int |g|^{(\alpha-1)/\alpha} dP$ is convex on $\{g<0\}$. This proves $\Lambda_\alpha^P$ is convex when $\alpha\in(0,1)$.  

Now suppose $\alpha>1$.  The map $t>0$, $t\mapsto t^{(\alpha-1)/\alpha}$ is concave and $-\log$ is decreasing and convex, hence
\begin{align}
&-\frac{1}{\alpha-1}\log\int|\lambda g_1+(1-\lambda)g_0|^{(\alpha-1)/\alpha}dP\\
\leq &-\frac{1}{\alpha-1}\log\left(\lambda \int |g_1|^{(\alpha-1)/\alpha}dP+(1-\lambda)\int |g_0|^{(\alpha-1)/\alpha}dP\right)\notag\\
\leq &\lambda\left(-\frac{1}{\alpha-1}\log\int|g_1|^{(\alpha-1)/\alpha}dP\right)+(1-\lambda)\left(-\frac{1}{\alpha-1}\log\int|g_0|^{(\alpha-1)/\alpha}dP\right)\,.\notag
\end{align}
This proves that $\Lambda_\alpha^P$ is also convex when $\alpha>1$.  Openness of $\{g<0\}$ follows from the assumption that $X$ is compact and so any strictly negative continuous function is strictly bounded away from zero.  Continuity on $\{g<0\}$ then follows from the dominated convergence theorem.
\end{proof}

Next we prove the limiting results from Theorem \ref{thm:IC_limits}.
\begin{theorem}\label{thm:IC_limits_app}
Let $\Gamma\subset C(X)$ be admissible, $P,Q\in\mathcal{P}(X)$, and $\alpha\in(0,1)\cup(1,\infty)$.  Then 
\begin{align}
     &\lim_{\delta\to0^+}\frac{1}{\delta}R_\alpha^{\delta\Gamma,IC}(P\|Q)=W^\Gamma(Q,P)\label{eq:scaling_limit_0}
\end{align}
and if $\Gamma$ is strictly admissible we have
    \begin{align}
    &\lim_{L\to\infty}R_\alpha^{L\Gamma,IC}(P\|Q)=R_\alpha(P\|Q)\,.\label{eq:scaling_limit_infinity}
\end{align}

\end{theorem}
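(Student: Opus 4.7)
The plan is to prove both limits by a two-sided bound: the upper bound comes for free from property 6 of Theorem \ref{thm:Renyi_inf_conv}, and the matching lower bound is obtained by a compactness/contradiction argument that exploits the minimizer guaranteed by part 2 of the same theorem together with weak lower semicontinuity of the two ingredients appearing in the primal.

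For \eqref{eq:scaling_limit_0}, the identity $W^{\delta\Gamma} = \delta W^\Gamma$ recasts the primal representation as $\tfrac{1}{\delta}R_\alpha^{\delta\Gamma,IC}(P\|Q) = \inf_{\eta\in\mathcal{P}(X)}\{\tfrac{1}{\delta}R_\alpha(P\|\eta) + W^\Gamma(Q,\eta)\}$. Choosing $\eta=P$ (equivalently, invoking property 6) yields the upper bound $\tfrac{1}{\delta}R_\alpha^{\delta\Gamma,IC}(P\|Q) \leq W^\Gamma(Q,P)$. For the reverse bound, fix any $M < W^\Gamma(Q,P)$ and suppose toward contradiction that $\tfrac{1}{\delta_n}R_\alpha^{\delta_n\Gamma,IC}(P\|Q) < M$ along some sequence $\delta_n \to 0^+$. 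Part 2 of Theorem \ref{thm:Renyi_inf_conv} then produces minimizers $\eta_n \in \mathcal{P}(X)$ satisfying $R_\alpha(P\|\eta_n) < \delta_n M \to 0$ and $W^\Gamma(Q,\eta_n) < M$. Weak compactness of $\mathcal{P}(X)$ (a consequence of the compactness of $X$) supplies a subsequential weak limit $\eta_n\to\eta^*$. Weak lower semicontinuity of $R_\alpha(P\|\cdot)$ — itself immediate from the DV representation \eqref{eq:Renyi_var} expressing $R_\alpha$ as a supremum of weakly continuous functionals — forces $R_\alpha(P\|\eta^*)=0$, and hence $\eta^* = P$ by the divergence property. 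Finally the analogous lower semicontinuity of $W^\Gamma(Q,\cdot) = \sup_{g\in\Gamma}\bigl[\int g\,dQ - \int g\,d\cdot\bigr]$ yields $W^\Gamma(Q,P) \leq \liminf_n W^\Gamma(Q,\eta_n) \leq M$, the desired contradiction.

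The proof of \eqref{eq:scaling_limit_infinity} mirrors this structure with the roles of the two terms swapped. The upper bound $R_\alpha^{L\Gamma,IC}(P\|Q) \leq R_\alpha(P\|Q)$ is again property 6. For the reverse, fix $M < R_\alpha(P\|Q)$ and suppose minimizers $\eta_n$ satisfy $R_\alpha(P\|\eta_n) + L_n W^\Gamma(Q,\eta_n) < M$ along some $L_n\to\infty$. Then $W^\Gamma(Q,\eta_n) < M/L_n \to 0$ and $R_\alpha(P\|\eta_n) < M$. Extracting a weak limit $\eta^*$, lower semicontinuity of $W^\Gamma(Q,\cdot)$ gives $W^\Gamma(Q,\eta^*) = 0$; here strict admissibility is essential — the functions $c \pm \varepsilon \psi \in \Gamma$, with $\psi$ ranging over the $\mathcal{P}(X)$-determining set $\Psi$, force $\int \psi\,dQ = \int \psi\,d\eta^*$ for all $\psi\in\Psi$ and hence $\eta^* = Q$. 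Lower semicontinuity of $R_\alpha$ then gives $R_\alpha(P\|Q) \leq \liminf_n R_\alpha(P\|\eta_n) \leq M$, contradicting the choice of $M$.

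The main technical point requiring care is the weak lower semicontinuity of $R_\alpha(P\|\cdot)$, particularly for $\alpha>1$ where $R_\alpha$ takes the value $+\infty$ on non-absolutely-continuous pairs; the DV representation handles this uniformly in $\alpha$. A minor edge case is that Theorem \ref{thm:Renyi_inf_conv}(2) supplies minimizers only when the IC-divergence is finite, but this is exactly the regime placed on us by the contradiction hypothesis. The complementary boundary cases $W^\Gamma(Q,P) = \infty$ in part (1) or $R_\alpha(P\|Q) = \infty$ in part (2) follow by letting $M\to\infty$ in the above argument.
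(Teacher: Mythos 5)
Your proof is correct and follows essentially the same approach as the paper's: upper bound by plugging in the trivial $\eta$ (property 6), then a compactness/contradiction argument using the minimizer from Theorem \ref{thm:Renyi_inf_conv}(2), weak lower semicontinuity of $R_\alpha(P\|\cdot)$ and $W^\Gamma(Q,\cdot)$, and the divergence property (resp.\ strict admissibility) to identify the limit point. The only cosmetic difference is that you fix a level $M$ and show $\liminf \geq M$, whereas the paper first invokes monotonicity in $\delta$ (resp.\ $L$) to know the limit exists and then argues directly that equality must hold.
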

\begin{proof}
It is straightforward to show that the scaled function spaces are admissible and $W^{c\Gamma}=cW^\Gamma$ for all $c>0$. First we prove \req{eq:scaling_limit_0}.  From the definition  \ref{eq:Gamma_renyi_def} we have
\begin{align}
    \delta^{-1}R_\alpha^{\delta\Gamma,IC}(P\|Q)=\inf_{\eta\in\mathcal{P}(X)}\{   \delta^{-1}R_\alpha(P\|\eta)+    W^\Gamma(Q,\eta)\}\leq W^\Gamma(Q,P)
\end{align}
and so $\delta^{-1}R_\alpha^{\delta\Gamma,IC}(P\|Q)$ is non-increasing in $\delta$. Therefore \begin{align}\label{eq:delta_limit_sup}
 \lim_{\delta\to 0^+}\delta^{-1}R_\alpha^{\delta\Gamma,IC}(P\|Q)=\sup_{\delta>0}   \delta^{-1}R_\alpha^{\delta\Gamma,IC}(P\|Q)
\end{align}
and
\begin{align}\label{eq:delta_limit_ub}
    \lim_{\delta\to 0^+}\delta^{-1}R_\alpha^{\delta\Gamma,IC}(P\|Q)\leq W^\Gamma(Q,P)\,.
\end{align}
We will assume this inequality is strict and derive a contradiction.  This assumption, together with \req{eq:delta_limit_sup}, implies $R_\alpha^{\delta\Gamma,IC}(P\|Q)<\infty$ for all $\delta>0$. Part (2) of Theorem \ref{thm:Renyi_inf_conv} then implies the existence of $\eta_{*,\delta}\in\mathcal{P}(X)$ such that
\begin{align}
\delta^{-1}R_\alpha^{\delta\Gamma,IC}(P\|Q)=\delta^{-1}R_\alpha(P\|\eta_{*,\delta})+ W^{\Gamma}(Q,\eta_{*,\delta})\geq W^\Gamma(Q,\eta_{*,\delta})\,.
\end{align}
Take a sequence $\delta_n\to 0^+$. We have assumed $X$ is compact, hence $\mathcal{P}(X)$ is also compact and so there exists a weakly convergent subsequence $\eta_{*,\delta_{n_j}}\to \eta_*$. From the variational formulas \req{eq:Renyi_var}  and \req{def:IPM}  we see that $R_\alpha(P\|\cdot)$ and $W^\Gamma(Q,\cdot)$ are lower semicontinuous, hence  $\liminf_j W^\Gamma(Q,\eta_{*,\delta_{n_j}})\geq W^\Gamma(Q,\eta_{*})$ and
\begin{align}
   R_\alpha(P\|\eta_{*})\leq& \liminf_j R_\alpha(P\|\eta_{*,\delta_{n_j}})\leq \liminf_j \delta_{n_j}(\delta_{n_j}^{-1}R_\alpha(P\|\eta_{*,\delta_{n_j}})+W^\Gamma(Q,\eta_{*,\delta_{n_j}}))\\
   =&\liminf_j\delta_{n_j}(\delta_{n_j}^{-1} R_\alpha^{\delta_{n_j}\Gamma,IC}(P\|Q))=0\,,
\end{align}
where the last equality follows from the assumed strictness of the inequality \req{eq:delta_limit_sup}. Therefore the divergence property for the classical R{\'e}nyi divergences implies $R_\alpha(P\|\eta_*)=0$ and $P=\eta_*$. Combining the above results we obtain
\begin{align}
  \lim_{\delta\to 0^+}\delta^{-1}R_\alpha^{\delta\Gamma,IC}(P\|Q)=&\lim_{j\to\infty}  \delta_{n_j}^{-1}R_\alpha^{\delta_{n_j}\Gamma,IC}(P\|Q)\geq\liminf_j W^\Gamma(Q,\eta_{*,\delta_{n_j}})\\
  \geq& W^\Gamma(Q,\eta_*)=W^\Gamma(Q,P)\,.\notag
\end{align}
This contradicts \req{eq:delta_limit_ub} and therefore we have proven the equality \req{eq:scaling_limit_0}.

Now we assume $\Gamma$ is strictly admissible and will prove \req{eq:scaling_limit_infinity} via similar reasoning. From the definition  \ref{eq:Gamma_renyi_def} we see that
\begin{align}
    R_\alpha^{L\Gamma,IC}(P\|Q)=\inf_{\eta\in\mathcal{P}(X)}\{R_\alpha(P\|\eta)+LW^{\Gamma}(Q,\eta)\}\leq R_\alpha(P\|Q)
\end{align}
and $R_\alpha^{L\Gamma,IC}$
is non-decreasing in $L$. Hence $\lim_{L\to\infty}R_\alpha^{L\Gamma,IC}(P\|Q)=\sup_{L>0}R_\alpha^{L\Gamma,IC}(P\|Q)$ and
\begin{align}\label{eq:L_limit_ub}
    \lim_{L\to\infty}R_\alpha^{L\Gamma,IC}(P\|Q)\leq R_\alpha(P\|Q)\,.
\end{align}
Suppose this inequality is strict. Then $R_\alpha^{L\Gamma,IC}(P\|Q)<\infty$ for all $L$ and we can use part (2) of Theorem \ref{thm:Renyi_inf_conv} to conclude there exists $\eta_{*,L}\in\mathcal{P}(X)$ such that
\begin{align}
  R_\alpha^{L\Gamma,IC}(P\|Q)=R_\alpha(P\|\eta_{*,L})+LW^{\Gamma}(Q,\eta_{*,L})\,.  
\end{align}
Take $L_n\to\infty$. Compactness of $\mathcal{P}(X)$ implies the existence of a weakly convergent subsequence $\eta_{*,j}\coloneqq\eta_{*,L_{n_j}}\to \eta_*\in\mathcal{P}(X)$. Lower semicontinuity of $R_\alpha(P\|\cdot)$ and $W^\Gamma(Q,\cdot)$ imply $\liminf_j R_\alpha(P\|\eta_{*,j})\geq R_\alpha(P\|\eta_*)$  and
\begin{align}
 W^\Gamma(Q,\eta_*)\leq &    \liminf_jW^\Gamma(Q,\eta_{*,j})=\liminf_j L_{n_j}^{-1}W^{L_{n_j}\Gamma}(Q,\eta_{*,j})
 \\
 \leq& \liminf_j L_{n_j}^{-1} R_\alpha^{L_{n_j}\Gamma,IC}(P\|Q)=0\,,\notag
    \end{align}
where the last equality follows from the assumed strictness of the inequality  \req{eq:L_limit_ub}. Therefore $W^\Gamma(Q,\eta_*)=0$. $\Gamma$ is strictly admissible, hence $Q=\eta_*$ (see the proof of part (7) of Theorem \ref{thm:Renyi_inf_conv}). Combining these results we see that
\begin{align}
    \lim_{L\to\infty}R_\alpha^{L\Gamma,IC}(P\|Q)=&\lim_jR_\alpha^{L_{n_j}\Gamma,IC}(P\|Q)=\lim_j(R_\alpha(P\|\eta_{*,L_{n_j}})+L_{n_j}W^{\Gamma}(Q,\eta_{*,L_{n_j}}))\\
    \geq& \liminf_j R_\alpha(P\|\eta_{*,j})\geq R_\alpha(P\|\eta_*)=R_\alpha(P\|Q)\,.\notag
\end{align}
This contradicts the assumed strictness of the inequality \req{eq:L_limit_ub} and hence \req{eq:L_limit_ub} is an equality. This completes the proof.

\end{proof}

Next we prove Theorem \ref{thm:alpha_1_limit}, regarding the $\alpha\to 1$ limit of the IC-$\Gamma$-R{\'e}nyi divergences.
\begin{theorem}\label{thm:alpha_1_limit_app}
Let $\Gamma\subset C(X)$ be admissible and $P,Q\in\mathcal{P}(X)$.  Then
\begin{align}
\lim_{\alpha\to1^+}R_\alpha^{\Gamma,IC}(P\|Q)=&\inf_{\substack{\eta\in\mathcal{P}(X):\\\exists\beta>1,R_\beta(P\|\eta)<\infty}}\{R(P\|\eta)+W^\Gamma(Q,\eta)\}\,,\label{eq:IC_rev_KL_plus_app}\\
    \lim_{\alpha\to 1^-}R_\alpha^{\Gamma,IC}(P\|Q)=&\inf_{\eta\in\mathcal{P}(X)}\{R(P\|\eta)+W^\Gamma(Q,\eta)\}\label{eq:IC_rev_KL_app}\\
    =&\sup_{g\in \Gamma:g<0}\{\int gdQ+\int \log|g|dP\}+1\,.\label{eq:rev_KL_var_formula_minus_app}
\end{align}
\end{theorem}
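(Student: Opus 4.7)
The overall approach combines two ingredients: a sandwich argument based on the monotonicity of $\alpha\mapsto R_\alpha(P\|\eta)$, and a Fenchel-Rockafellar duality for the reverse KL divergence that parallels the proof of Theorem \ref{thm:Renyi_inf_conv}(1). Since $\alpha\mapsto R_\alpha(P\|\eta)$ is non-decreasing on $(0,\infty)$, the infimal convolution in \req{eq:Gamma_renyi_def} is non-decreasing in $\alpha$ on each of $(0,1)$ and $(1,\infty)$, so both one-sided limits exist. Crucially, $\lim_{\alpha\to 1^-}R_\alpha(P\|\eta)=R(P\|\eta)$ holds unconditionally, while $\lim_{\alpha\to 1^+}R_\alpha(P\|\eta)=R(P\|\eta)$ holds only when $R_\beta(P\|\eta)<\infty$ for some $\beta>1$; this asymmetry is what forces the restricted infimum in \req{eq:IC_rev_KL_plus_app}.

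First I would establish the equality \req{eq:rev_KL_var_formula_minus_app} of the variational and infimal-convolution forms directly, by rerunning the Fenchel-Rockafellar step from the proof of Theorem \ref{thm:Renyi_inf_conv}(1) with $\Lambda_\alpha^P$ replaced by
\[
\Lambda^P[g]=\infty\cdot 1_{g\not<0}-\Bigl(\textstyle\int\log|g|\,dP+1\Bigr)\cdot 1_{g<0},
\]
whose convex conjugate computes to $(\Lambda^P)^*[\eta]=R(P\|\eta)$ (the Legendre-transform representation of the reverse KL divergence cited in the Remark). Convexity and continuity of $\Lambda^P$ at $-1$ go through essentially as in Lemma \ref{lemma:Lambda_cont_app} on the open set $\{g<0\}\subset C(X)$, and the reduction of the infimum from $M(X)$ to $\mathcal{P}(X)$ uses the same test-function arguments. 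This yields the equality of \req{eq:IC_rev_KL_app} with \req{eq:rev_KL_var_formula_minus_app} outright. To identify the $\alpha\to 1^-$ limit with these quantities, sandwich it: the upper bound $\lim_{\alpha\to 1^-}R_\alpha^{\Gamma,IC}(P\|Q)\leq \inf_\eta\{R(P\|\eta)+W^\Gamma(Q,\eta)\}$ follows by passing to the limit in $R_\alpha^{\Gamma,IC}(P\|Q)\leq R_\alpha(P\|\eta)+W^\Gamma(Q,\eta)$ for each fixed $\eta$ and infimizing, while the lower bound comes from fixing any $g\in\Gamma$ with $g<0$ in \req{eq:Renyi_inf_conv}, using compactness of $X$ to bound $0<c\leq|g|\leq C<\infty$, and applying dominated convergence plus a Taylor expansion to obtain $\frac{1}{\alpha-1}\log\int|g|^{(\alpha-1)/\alpha}dP\to\int\log|g|\,dP$ and $\alpha^{-1}(\log\alpha+1)\to 1$; supping over $g$ recovers the right-hand side of \req{eq:rev_KL_var_formula_minus_app}.

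The $\alpha\to 1^+$ limit \req{eq:IC_rev_KL_plus_app} is the main obstacle. The upper bound is analogous to the $\alpha\to 1^-$ case but restricted to $\eta$ with $R_\beta(P\|\eta)<\infty$ for some $\beta>1$, the precise set on which $\lim_{\alpha\to 1^+}R_\alpha(P\|\eta)=R(P\|\eta)$ actually holds. For the matching lower bound I first handle the case $\lim_{\alpha\to 1^+}R_\alpha^{\Gamma,IC}(P\|Q)=\infty$ trivially: if any admissible $\eta$ gave a finite objective, then $R_\alpha^{\Gamma,IC}(P\|Q)\leq R_\alpha(P\|\eta)+W^\Gamma(Q,\eta)\leq R_\beta(P\|\eta)+W^\Gamma(Q,\eta)<\infty$ for $\alpha\in(1,\beta)$, contradicting the divergence. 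When the limit is finite, part (2) of Theorem \ref{thm:Renyi_inf_conv} provides minimizers $\eta_{*,\alpha_n}\in\mathcal{P}(X)$ along a sequence $\alpha_n\searrow 1$; compactness of $\mathcal{P}(X)$ yields a weakly convergent subsequence $\eta_{*,\alpha_n}\to\eta_*$.

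The crux is verifying admissibility of the limit: for each $\beta>1$ and all large $n$ with $\alpha_n\geq\beta$, monotonicity gives $R_\beta(P\|\eta_{*,\alpha_n})\leq R_{\alpha_n}(P\|\eta_{*,\alpha_n})\leq R_{\alpha_n}^{\Gamma,IC}(P\|Q)$, which is uniformly bounded, and lower semicontinuity of $R_\beta(P\|\cdot)$ then yields $R_\beta(P\|\eta_*)<\infty$ for every $\beta>1$, so $\eta_*$ is admissible. To close the lower bound, for any fixed $\alpha'\in(1,\alpha_n)$ use monotonicity $R_{\alpha'}(P\|\eta_{*,\alpha_n})\leq R_{\alpha_n}(P\|\eta_{*,\alpha_n})$ together with the lower semicontinuity of $R_{\alpha'}(P\|\cdot)$ and $W^\Gamma(Q,\cdot)$ to obtain
\[
R_{\alpha'}(P\|\eta_*)+W^\Gamma(Q,\eta_*)\leq \liminf_n R_{\alpha_n}^{\Gamma,IC}(P\|Q)=\lim_{\alpha\to 1^+}R_\alpha^{\Gamma,IC}(P\|Q),
\]
and finally send $\alpha'\to 1^+$, which is permissible precisely because $R_\beta(P\|\eta_*)<\infty$ for some $\beta>1$. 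Taking the infimum over admissible $\eta_*$ completes the proof of \req{eq:IC_rev_KL_plus_app}.
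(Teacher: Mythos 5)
Your treatment of the $\alpha\to 1^-$ direction (sandwich plus Fenchel--Rockafellar with $\Lambda^P$) is essentially the paper's argument and is correct. However, the $\alpha\to 1^+$ direction has genuine problems.

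First, your opening premise that $\alpha\mapsto R_\alpha(P\|\eta)$ is non-decreasing on $(0,\infty)$ is false in the normalization of this paper. For example, on $\{0,1\}$ with $P=\delta_0$ and $\eta=\tfrac12\delta_0+\tfrac12\delta_1$ one computes $R_\alpha(P\|\eta)=\tfrac{\log 2}{\alpha}$, which is strictly decreasing. What is non-decreasing is $\alpha R_\alpha(P\|\eta)$ (this is precisely Lemma \ref{lemma:R_IC_nondec_app}, and is the quantity the paper works with). This invalidates the monotonicity steps in your $\alpha\to 1^+$ lower bound as literally written, though some of them could be patched via $R_\alpha\le\beta R_\beta$ for $1<\alpha<\beta$.

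The more serious gap is the admissibility of the weak limit $\eta_*$. You argue that ``for each $\beta>1$ and all large $n$ with $\alpha_n\ge\beta$'' one has $R_\beta(P\|\eta_{*,\alpha_n})\le R_{\alpha_n}(P\|\eta_{*,\alpha_n})$, then invoke lower semicontinuity. But since $\alpha_n\searrow 1$, for any fixed $\beta>1$ the condition $\alpha_n\ge\beta$ fails for all sufficiently large $n$, so this set of indices is eventually empty and the argument is vacuous. The same issue afflicts the final step, where you take a ``fixed $\alpha'\in(1,\alpha_n)$'' and then send $\alpha'\to 1^+$: for any fixed $\alpha'>1$, eventually $\alpha_n<\alpha'$ and the required inequality reverses direction. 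Without admissibility of $\eta_*$, the lower semicontinuity route only yields $\liminf_{\alpha\to 1^+}R_\alpha^{\Gamma,IC}(P\|Q)\ge R(P\|\eta_*)+W^\Gamma(Q,\eta_*)\ge\inf_{\eta\in\mathcal{P}(X)}\{R(P\|\eta)+W^\Gamma(Q,\eta)\}$, the \emph{unrestricted} infimum, which is a priori smaller than the restricted infimum appearing in \req{eq:IC_rev_KL_plus_app}; the sandwich does not close.

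The paper's argument for $\alpha\to 1^+$ is substantially simpler and sidesteps all of this: using Lemma \ref{lemma:R_IC_nondec_app}, $\lim_{\alpha\to 1^+}\alpha R_\alpha^{\Gamma,IC}(P\|Q)=\inf_{\alpha>1}\inf_{\eta}\{\alpha R_\alpha(P\|\eta)+\alpha W^\Gamma(Q,\eta)\}$; swap the two infima; then for each fixed $\eta$ the inner quantity is non-decreasing in $\alpha$ so its infimum is its $\alpha\to 1^+$ limit, and the known limit $\lim_{\alpha\to 1^+}R_\alpha(P\|\eta)$ (finite exactly on the admissible $\eta$'s) produces the restricted infimum directly. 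No compactness, weak convergence, or minimizer-regularity argument is needed for this direction.
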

\begin{proof}
Lemma \ref{lemma:R_IC_nondec_app} implies $\alpha\mapsto\alpha R_\alpha^{\Gamma,IC}(P\|Q)$ is non-decreasing on $(1,\infty)$, therefore
\begin{align}
\lim_{\alpha\to 1^+}\alpha R_\alpha^{\Gamma,IC}(P\|Q)=&\inf_{\alpha>1}\alpha R_\alpha^{\Gamma,IC}(P\|Q)\\
=&\inf_{\alpha>1}\inf_{\eta\in\mathcal{P}(X)}\{\alpha R_\alpha(P\|\eta)+\alpha W^\Gamma(Q,\eta)\}\notag\\
=&\inf_{\eta\in\mathcal{P}(X)}\inf_{\alpha>1}\{\alpha R_\alpha(P\|\eta)+\alpha W^\Gamma(Q,\eta)\}\notag\\
=&\inf_{\eta\in\mathcal{P}(X)}\{\lim_{\alpha\to 1^+}\alpha R_\alpha(P\|\eta)+ W^\Gamma(Q,\eta)\}\notag\,.
\end{align}
From \cite{van2014renyi} we have
\begin{align}
\lim_{\alpha\to 1^+} R_\alpha(P\|\eta)=\begin{cases}
R(P\|\eta) & \text{ if } \exists \beta>1, R_\beta(P\|\eta)<\infty\\
\infty &\text{ otherwise}
\end{cases}
\end{align}
and so we can conclude
\begin{align}
\lim_{\alpha\to 1^+} R_\alpha^{\Gamma,IC}(P\|Q)=\lim_{\alpha\to 1^+}\alpha R_\alpha^{\Gamma,IC}(P\|Q)=\inf_{\substack{\eta\in P(X):\\\exists \beta>1,R_\beta(P\|\eta)<\infty}}\{R(P\|\eta)+W^\Gamma(Q,\eta)\}\,.
\end{align}
This proves \req{eq:IC_rev_KL_plus_app}.

Now we compute the limit as $\alpha\to 1^-$. Note that the limit exists due to the fact that $\alpha\mapsto\alpha R_\alpha^{\Gamma,IC}(P\|Q)$ is non-decreasing. From the definition \req{eq:Gamma_renyi_def}, for all $\eta\in\mathcal{P}(X)$ we have
\begin{align}
\lim_{\alpha\to 1^-}R_\alpha^{\Gamma,IC}(P\|Q)\leq \lim_{\alpha\to 1^-}R_\alpha(P\|\eta)+W^\Gamma(Q,\eta)=R(P\|\eta)+W^\Gamma(Q,\eta)\,.
\end{align}
Here we used the fact that $\lim_{\alpha\to 1^-}R_\alpha(P\|\eta)=R(P\|\eta)$ (see \cite{van2014renyi}). Maximizing over $\eta$ then gives
\begin{align}
\lim_{\alpha\to 1^-}R_\alpha^{\Gamma,IC}(P\|Q)\leq \inf_{\eta\in\mathcal{P}(X)}\{R(P\|\eta)+W^\Gamma(Q,\eta)\}\,.
\end{align}
To prove the reverse inequality, use part 1 of Theorem \ref{thm:Renyi_inf_conv} to compute
\begin{align}
\lim_{\alpha\to 1^-}R_\alpha^{\Gamma,IC}(P\|Q)=&\lim_{\alpha\to 1^-}\alpha R_\alpha^{\Gamma,IC}(P\|Q)\\
=&\lim_{\alpha\to 1^-}\sup_{g\in \Gamma:g<0}\left\{\alpha\int gdQ +\frac{\alpha}{\alpha-1}\log\int |g|^{(\alpha-1)/\alpha} dP+\log\alpha +1\right\}\notag\\
\geq &\int gdQ +\lim_{\alpha\to 1^-}\frac{\alpha}{\alpha-1}\log\int |g|^{(\alpha-1)/\alpha} dP +1\notag\\
=&\int gdQ +\frac{d}{dy}|_{y=0}\log\int e^{y\log|g|} dP +1\notag\\
=&\int gdQ +\int \log|g|dP+1\notag
\end{align}
for all $g\in \Gamma$, $g<0$. Therefore, maximizing over $g$ gives
\begin{align}\label{eq:limit_1_minus_lb}
\lim_{\alpha\to 1^-}R_\alpha^{\Gamma,IC}(P\|Q)\geq&\sup_{g\in\Gamma:g<0}\left\{\int gdQ +\int \log|g|dP\right\}+1\,.
\end{align}
We now use Fenchel-Rockafellar duality (Theorem 4.4.3 in \cite{borwein2006techniques}) to compute the dual variational representation of the right hand side of \req{eq:limit_1_minus_lb}. Define $F,G:C(X)\to (-\infty,\infty]$ by $F[g]=\infty1_{g\not<0}-\int \log|g|dP$ and $G[g]=\infty1_{g\not\in\Gamma}-E_Q[g]$.  It is stratightforward to show that $F$ and $G$ are convex, $F[-1]<\infty$, $G[-1]<\infty$, and $F$ is continuous at $-1$. Therefore
\begin{align}
\inf_{g\in C(X)}\{F[g]+G[g]\}=\sup_{\eta\in E^*}\{-F^*(-\eta)-G^*(\eta)\}\,,
\end{align}
i.e.
\begin{align}
\sup_{g\in \Gamma:g<0}\{E_Q[g]+\int\log|g|dP\}=\inf_{\eta\in M(X)}\{F^*(\eta)+W^\Gamma(Q,\eta)\}\,,
\end{align}
where $F^*(\eta)=\sup_{g\in C(X):g<0}\{\int gd\eta+\int \log|g|dP\}$.  Now we show the infimum can be restricted to $\eta\in\mathcal{P}(X)$:  If $\eta(X)\neq 1$ then by taking $g=\pm n$ we find
\begin{align}
W^\Gamma(Q,\eta)\geq  n |Q(X)-\eta(X)|\to \infty
\end{align}
as $n\to\infty$.  Therefore $W^\Gamma(Q,\eta)=\infty$ if $\eta(X)\neq 1$. 

Now suppose $\eta\in M(X)$ is not positive.  Take a measurable set $A$ with $\eta(A)<0$. By Lusin's theorem, for all $\epsilon>0$ there exists a closed set $E_\epsilon\subset X$ and a continuous function $g_\epsilon\in C(X)$ such that $|\eta|(E_\epsilon^c)<\epsilon$, $0\leq g_\epsilon\leq 1$, and $g_\epsilon|_{E_\epsilon}=1_A$.  Define $g_{n,\epsilon}=-ng_\epsilon-1$, $n\in\mathbb{Z}^+$.  Then $g_{n,\epsilon}\in\{g\in C(X):g<0\}$, hence
\begin{align}
F^*(\eta)\geq& \int g_{n,\epsilon}d\eta+\int \log|g_{n,\epsilon}|dP\\
=& \int -ng_\epsilon-1d\eta+\int \log(ng_\epsilon+1)dP\notag\\
\geq & n(|\eta(A)|-\int (g_\epsilon-1_A)1_{E_\epsilon^c}d\eta)-\eta(X)\notag\\
\geq & n(|\eta(A)|-\epsilon)-\eta(X)\,.\notag
\end{align}
Letting $\epsilon=|\eta(A)|/2$ and taking $n\to\infty$ gives  $F^*(\eta)=\infty$.  Therefore we  conclude 
\begin{align}
\inf_{\eta\in M(X)}\{F^*(\eta)+W^\Gamma(Q,\eta)\}=\inf_{\eta\in \mathcal{P}(X)}\{F^*(\eta)+W^\Gamma(Q,\eta)\}\,.
\end{align}
To evaluate $F^*(\eta)$ for $\eta\in\mathcal{P}(X)$ we make a change of variables $g=-\exp(h-1)$, $h\in C(X)$ to obtain
\begin{align}
F^*(\eta)=&\sup_{h\in C(X)}\{\int hdP-\int e^{h-1}d\eta\}-1=R(P\|\eta)-1\,.
\end{align}
Here we used the Legendre-transform variational representation of the KL divergence; see equation (1) in \cite{9737725} with $f(x)=x\log(x)$.  Combining these results we obtain
\begin{align}
\inf_{\eta\in\mathcal{P}(X)}\{R(P\|\eta)+W^\Gamma(Q,\eta)\}\geq& \lim_{\alpha\to 1^-}R_\alpha^{\Gamma,IC}(P\|Q)\\
\geq&\sup_{g\in\Gamma:g<0}\left\{\int gdQ +\int \log|g|dP\right\}+1\notag\\
=&\inf_{\eta\in M(X)}\{F^*(\eta)+W^\Gamma(Q,\eta)\}+1\notag\\
=&\inf_{\eta\in \mathcal{P}(X)}\{R(P\|\eta)+W^\Gamma(Q,\eta)\}\notag\,.
\end{align}
This completes the proof.
\end{proof}

Now we prove Theorem \ref{thm:alpha_infinity_limit}, regarding the $\alpha\to\infty$ limit of the IC-$\Gamma$-R{\'e}nyi divergences.
\begin{theorem}\label{thm:alpha_infinity_limit_app} Let $\Gamma\subset C(X)$ be admissible and $P,Q\in\mathcal{P}(X)$. Then
\begin{align}
    \lim_{\alpha\to\infty}\alpha R_\alpha^{\Gamma/\alpha,IC}(P\|Q)=&\inf_{\eta\in P(X)}\{D_\infty(P\|\eta)+W^\Gamma(Q,\eta)\}\\
    =&\sup_{g\in \Gamma:g<0}\left\{\int gdQ+\log\int|g|dP\right\}+1\,.
\end{align}
\end{theorem}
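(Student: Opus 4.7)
The plan is to mirror the two-step structure of the proof of Theorem \ref{thm:Renyi_inf_conv}(1), adapted to the $\alpha\to\infty$ setting. First, I will rewrite $\alpha R_\alpha^{\Gamma/\alpha,IC}(P\|Q)$ using the dual variational formula of Theorem \ref{thm:Renyi_inf_conv}(1), then pass to the limit in $\alpha$ to identify it with the claimed supremum formula, and finally apply Fenchel--Rockafellar duality (with a new convex map $\Lambda_\infty^P$ replacing $\Lambda_\alpha^P$) to convert that supremum into the infimal-convolution form involving $D_\infty$.

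Concretely, Theorem \ref{thm:Renyi_inf_conv}(1) applied to $R_\alpha^{\Gamma/\alpha,IC}$, together with the change of variables $g\mapsto g/\alpha$ inside the supremum, gives
\[
\alpha R_\alpha^{\Gamma/\alpha,IC}(P\|Q)=\sup_{g\in\Gamma:g<0}\left\{\int g\,dQ+\tfrac{\alpha}{\alpha-1}\log\!\int|g|^{(\alpha-1)/\alpha}\,dP\right\}+1.
\]
To take $\alpha\to\infty$, I would combine two complementary bounds on $\tfrac{\alpha}{\alpha-1}\log\!\int|g|^{(\alpha-1)/\alpha}\,dP$: Jensen's inequality applied to the concave map $t\mapsto t^{(\alpha-1)/\alpha}$ yields a uniform upper bound by $\log\!\int|g|\,dP$ valid for every $\alpha>1$, while dominated convergence (applicable since $g\in C(X)$ with $g<0$ on the compact space $X$ is bounded and bounded away from zero) provides the matching pointwise limit for each such $g$. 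Together with the existence of the limit from Lemma \ref{lemma:R_IC_nondec_app}, this yields the second claimed equality.

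For the first equality I would set $\Lambda_\infty^P[g]=\infty\cdot 1_{g\not<0}-(\log\!\int|g|\,dP+1)1_{g<0}$, verify that it is convex on $C(X)$ (using concavity of $-\log$ and linearity in $|g|$) and continuous on the open set $\{g<0\}$, and note that Theorem \ref{thm:worst_case_regret_var} identifies $(\Lambda_\infty^P)^*[\eta]=D_\infty(P\|\eta)$ for $\eta\in\mathcal{P}(X)$. Applying Fenchel--Rockafellar duality with $F=\Lambda_\infty^P$ and $G[g]=\infty\cdot 1_{g\not\in\Gamma}-E_Q[g]$ (both finite at $g\equiv-1\in\Gamma$, with $F$ continuous there) yields
\[
\sup_{g\in\Gamma:g<0}\left\{\int g\,dQ+\log\!\int|g|\,dP\right\}+1=\inf_{\eta\in M(X)}\{(\Lambda_\infty^P)^*[\eta]+W^\Gamma(Q,\eta)\}.
\]
Restricting the infimum to $\mathcal{P}(X)$ then proceeds as in Theorem \ref{thm:Renyi_inf_conv}(1): constant test functions in $\Gamma$ force $\eta(X)=1$, and a Lusin approximation of $-n1_A-1$ by continuous functions shows $(\Lambda_\infty^P)^*[\eta]=\infty$ for any non-positive $\eta\in M(X)$.

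The main obstacle I expect is the sup--limit interchange in the limit-taking step: without the uniform Jensen upper bound only the $\liminf$ direction comes for free, and swapping the limit with the supremum would otherwise require an additional compactness argument. Once the supremum identity is in place, the Fenchel--Rockafellar step is largely routine given the analogous treatment of $\Lambda_\alpha^P$, but the Lusin argument for non-positive $\eta$ must be rechecked: in the finite-$\alpha$ case the extra $\tfrac{1}{\alpha-1}\log\!\int|ng_\epsilon+1|^{(\alpha-1)/\alpha}\,dP$ term was handled using the sign of $\alpha-1$, whereas here one simply uses $\log\!\int|ng_\epsilon+1|\,dP\geq 0$, which is cleaner.
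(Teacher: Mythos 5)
Your proposal is correct and proves both claimed equalities, but it handles the limit step by a genuinely different mechanism than the paper. The paper's argument is a sandwich: it first uses the primal definition $\alpha R_\alpha^{\Gamma/\alpha,IC}(P\|Q)=\inf_{\eta}\{\alpha R_\alpha(P\|\eta)+W^\Gamma(Q,\eta)\}$ together with monotonicity and $\alpha R_\alpha(P\|\eta)\nearrow D_\infty(P\|\eta)$ to obtain the upper bound $\lim_\alpha\leq\inf_{\eta\in\mathcal{P}(X)}\{D_\infty(P\|\eta)+W^\Gamma(Q,\eta)\}$; it then obtains the lower bound $\lim_\alpha\geq\sup_{g\in\Gamma:g<0}\{\int g\,dQ+\log\int|g|\,dP\}+1$ from the dual formula and dominated convergence; and it closes the chain by a single Fenchel--Rockafellar application identifying the two ends. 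You instead observe that Jensen's inequality for the concave map $t\mapsto t^{(\alpha-1)/\alpha}$ gives the uniform-in-$\alpha$ bound $\tfrac{\alpha}{\alpha-1}\log\int|g|^{(\alpha-1)/\alpha}\,dP\leq\log\int|g|\,dP$, which (paired with the same dominated-convergence lower bound) establishes $\lim_\alpha\alpha R_\alpha^{\Gamma/\alpha,IC}(P\|Q)=\sup_{g\in\Gamma:g<0}\{\int g\,dQ+\log\int|g|\,dP\}+1$ entirely on the dual side, without touching the primal $\inf_\eta$ formulation; Fenchel--Rockafellar is then invoked once to produce the infimal-convolution identity. Your route buys a slightly more self-contained limit step: the $\limsup/\liminf$ sandwich from Jensen and dominated convergence already forces the limit to exist, so Lemma \ref{lemma:R_IC_nondec_app} becomes a convenience rather than a necessity, and you never need to pass through the monotone limit $\alpha R_\alpha\to D_\infty$ of the classical R\'enyi divergences. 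The Fenchel--Rockafellar step (with your $\Lambda_\infty^P$, the restriction to $\mathcal{P}(X)$ via constants and the Lusin construction, and the identification $(\Lambda_\infty^P)^*[\eta]=D_\infty(P\|\eta)$ via Theorem \ref{thm:worst_case_regret_var}) is the same as the paper's, up to whether the $+1$ is absorbed into $\Lambda_\infty^P$ or carried externally; your remark that the Lusin bound is cleaner here, since $|ng_\epsilon+1|\geq 1$ gives $\log\int|ng_\epsilon+1|\,dP\geq 0$ without any case split on $\alpha$, is also accurate.
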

\begin{proof}
First note that
\begin{align}
\alpha R_\alpha^{\Gamma/\alpha,IC}(P\|Q)=\inf_{\eta\in\mathcal{P}(X)}\{\alpha R_\alpha(P\|\eta)+W^\Gamma(Q,\eta)\}
\end{align}
is nondecreasing in $\alpha$, therefore for  $\eta\in\mathcal{P}(X)$ we have
\begin{align}
\lim_{\alpha\to\infty}\alpha R_\alpha^{\Gamma/\alpha,IC}(P\|Q)=&\sup_{\alpha>1}\alpha R_\alpha^{\Gamma/\alpha,IC}(P\|Q)\\
\leq&\sup_{\alpha>1}\{\alpha R_\alpha(P\|\eta)+W^\Gamma(Q,\eta)\}\notag\\
=&D_\infty(P\|\eta)+W^\Gamma(Q,\eta)\,.\notag
\end{align}
 Maximizing over $\eta$ gives the upper bound
\begin{align}
\lim_{\alpha\to\infty}\alpha R_\alpha^{\Gamma/\alpha,IC}(P\|Q)\leq  \inf_{\eta\in\mathcal{P}(X)}\{D_\infty(P\|\eta)+W^\Gamma(Q,\eta)\}\,.
\end{align}

To prove the reverse inequality, use the variational formula \req{eq:Renyi_inf_conv} to write
\begin{align}
\alpha R_\alpha^{\Gamma/\alpha,IC}(P\|Q)=&\alpha\sup_{g\in \Gamma:g<0}\left\{\int g/\alpha dQ +\frac{1}{\alpha-1}\log\int |g/\alpha|^{(\alpha-1)/\alpha} dP\right\}+\log\alpha +1\\
=&\sup_{g\in \Gamma:g<0}\left\{\int g dQ +\frac{\alpha}{\alpha-1}\log \int |g|^{(\alpha-1)/\alpha} dP\right\}+1\,.\notag
\end{align}
Therefore, for all $g\in\Gamma$, $g<0$ we can use the dominated convergence theorem to compute
\begin{align}
\lim_{\alpha\to\infty}\alpha R_\alpha^{\Gamma/\alpha,IC}(P\|Q)\geq&\int g dQ +\lim_{\alpha\to\infty}\frac{\alpha}{\alpha-1}\log \int |g|^{(\alpha-1)/\alpha} dP+1\\
=&\int g dQ +\log\int|g|dP+1\,.\notag
\end{align}
Maximizing over $g$ then gives
\begin{align}\label{eq:D_infty_Gamma_lb}
\lim_{\alpha\to\infty}\alpha R_\alpha^{\Gamma/\alpha,IC}(P\|Q)\geq&\sup_{g\in\Gamma:g<0}\left\{\int g dQ +\log\int|g|dP\right\}+1\,.
\end{align}
Next we use the Fenchel-Rockafellar duality to derive a dual formulation of the right hand side of \req{eq:D_infty_Gamma_lb}.  Define $G,F:C(X)\to(-\infty,\infty]$, $G[g]=\infty 1_{g\not\in \Gamma}-E_Q[g]$, $F[g]=\infty1_{g\not< 0}-\log\int|g|dP$.  It is straightforward to prove that $G,F$ are convex and $G[-1]<\infty, F[-1]<\infty$ and $F$ is continuous at $-1$.  Therefore Fenchel-Rockafellar duality implies
\begin{align}
\inf_{g\in C(X)}\{F[g]+G[g]\}=\sup_{\eta\in C(X)^*}\{-F^*[-\eta]-G^*[\eta]\}\,,
\end{align}
i.e.
\begin{align}
\sup_{g\in \Gamma:g<0}\left\{E_Q[g]+\log\int|g|dP\right\}=\inf_{\eta\in M(X)}\{F^*[\eta]+W^\Gamma(Q,\eta)\}\,,
\end{align}
where $F^*[\eta]=\sup_{g\in C(X):g<0}\{\int gd\eta+\log\int|g|dP\}$. We now prove that the infimum over $M(X)$ can be restricted to $\mathcal{P}(X)$. First suppose $\eta(X)\neq 1$. Then, because $\Gamma$ contains the constant functions,  we have
\begin{align}
W^\Gamma(Q,\eta)\geq\pm n(1-\eta(X))\to \infty
\end{align}
as $n\to\infty$ for appropriate choice of sign.  Therefore $W^\Gamma(Q,\eta)=\infty$ when $\eta(X)\neq 1$. 

Now suppose $\eta\in M(X)$ is not positive.  Take a measurable set $A$ with $\eta(A)<0$. By Lusin's theorem, for all $\epsilon>0$ there exists a closed set $E_\epsilon\subset X$ and a continuous function $g_\epsilon\in C(X)$ such that $|\eta|(E_\epsilon^c)<\epsilon$, $0\leq g_\epsilon\leq 1$, and $g_\epsilon|_{E_\epsilon}=1_A$.  Define $g_{n,\epsilon}=-ng_\epsilon-1$, $n\in\mathbb{Z}^+$.  Then $g_{n,\epsilon}\in\{g\in C(X):g<0\}$, hence
\begin{align}
F^*[\eta]\geq& \int g_{n,\epsilon}d\eta+\log\int |g_{n,\epsilon}|dP\\
=&n(|\eta(A)| -\int(g_\epsilon-1_A) 1_{E_\epsilon^c}d\eta) -\eta(X)+\log(n\int g_\epsilon dP+1)\notag\\
\geq&n(|\eta(A)| -\epsilon) -\eta(X)\,.\notag
\end{align}
Letting $\epsilon=|\eta(A)|/2$ and then taking $n\to\infty$ we see that $F^*[\eta]=\infty$ when $\eta$ is not positive. Together  these results imply
\begin{align}
\inf_{\eta\in M(X)}\{F^*[\eta]+W^\Gamma(Q,\eta)\}=\inf_{\eta\in \mathcal{P}(X)}\{F^*[\eta]+W^\Gamma(Q,\eta)\}\,.
\end{align}
Finally, using Theorem \ref{thm:worst_case_regret_var} we see that
\begin{align}
F^*[\eta]+1=\sup_{g\in C(X):g<0}\{\int gd\eta+\log\int|g|dP\}+1=D_\infty(P\|\eta)
\end{align}
for all $\eta\in\mathcal{P}(X)$. Combining these results gives
\begin{align}
\lim_{\alpha\to\infty}\alpha R_\alpha^{\Gamma/\alpha,IC}(P\|Q)\geq&\sup_{g\in\Gamma:g<0}\left\{\int g dQ +\log\int|g|dP\right\}+1\\
=&\inf_{\eta\in M(X)}\{F^*[\eta]+W^\Gamma(Q,\eta)\}+1\notag\\
=&\inf_{\eta\in \mathcal{P}(X)}\{D_\infty(P\|\eta)+W^\Gamma(Q,\eta)\}\geq\lim_{\alpha\to\infty}\alpha R_\alpha^{\Gamma/\alpha,IC}(P\|Q)\,.\notag
\end{align}
This completes the proof.
\end{proof}

Finally, we prove Theorem \ref{thm:IC_data_processing}, the data-processing inequality for the IC-$\Gamma$-R{\'e}nyi divergences. See the paragraph above Theorem \ref{thm:IC_data_processing} for definitions of the notation.
\begin{theorem}[Data Processing Inequality]\label{thm:IC_data_processing_app}
Let $\alpha\in(0,1)\cup(1,\infty)$, $Q,P\in\mathcal{P}(X)$, and $K$ be a probability kernel from $X$ to $Y$ such that $K[g]\in C(X)$ for all $g\in C(X,Y)$.  
\begin{enumerate}
    \item  If  $\Gamma\subset C(Y)$ is admissible then
    \begin{align}\label{eq:data_proc1_app}
        R_\alpha^{\Gamma,IC}\left(K[P]\|K[Q]\right)\leq R_\alpha^{K[\Gamma],IC}(P\|Q)\,.
    \end{align}
    \item  If  $\Gamma\subset C(X\times Y)$ is admissible then
    \begin{align}\label{eq:data_proc2_app}
        R_\alpha^{\Gamma,IC}\left(P\otimes K\|Q\otimes K\right)\leq R_\alpha^{K[\Gamma],IC}(P\|Q)\,.
        \end{align}
\end{enumerate}
\end{theorem}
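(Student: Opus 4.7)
The plan is to apply the dual variational representation \req{eq:Renyi_inf_conv} to both sides of each inequality and show, test-function by test-function, that the left-hand objective at $g$ is dominated by the right-hand objective at the pushed-forward test function $K[g]$. The two ingredients are Fubini-type identities relating integration against $K[P]$ (resp.\ $P \otimes K$) to iterated integration, and a pointwise application of Jensen's inequality to each probability measure $K_x$ applied to the map $t \mapsto t^{(\alpha-1)/\alpha}$.

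Before beginning, I would check that $K[\Gamma]$ satisfies the admissibility hypotheses of Theorem \ref{thm:Renyi_inf_conv} as a subset of $C(X)$: the stated assumption on $K$ gives $K[g] \in C(X)$ for every $g \in C(X \times Y)$; linearity of the kernel together with convexity of $\Gamma$ yields convexity of $K[\Gamma]$; and since each $K_x$ is a probability measure, $K$ sends each constant function to itself, so $K[\Gamma]$ contains the constants. Moreover, when $g < 0$ one has $K[g] < 0$ pointwise and $|K[g]| = K[|g|]$, which is the structural identity needed below.

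For part 1, fix any $g \in \Gamma \subset C(Y)$ with $g < 0$, viewed as a function on $X \times Y$ that is independent of $x$. The Fubini identities read
\begin{align*}
\int g \, dK[Q] = \int K[g] \, dQ, \qquad \int |g|^{(\alpha-1)/\alpha} \, dK[P] = \int K\bigl[|g|^{(\alpha-1)/\alpha}\bigr] \, dP.
\end{align*}
The heart of the argument is the pointwise Jensen inequality for each $K_x$: when $\alpha > 1$ the exponent $(\alpha-1)/\alpha$ lies in $(0,1)$, making $t \mapsto t^{(\alpha-1)/\alpha}$ concave and yielding $K[|g|^{(\alpha-1)/\alpha}](x) \leq |K[g](x)|^{(\alpha-1)/\alpha}$; when $\alpha \in (0,1)$ the exponent is negative, so the same map is convex and the reverse pointwise inequality holds. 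Since $1/(\alpha-1)$ has the matching sign in each case, integrating against $P$ and taking the logarithm gives
\begin{align*}
\frac{1}{\alpha-1}\log \int |g|^{(\alpha-1)/\alpha} \, dK[P] \leq \frac{1}{\alpha-1}\log \int |K[g]|^{(\alpha-1)/\alpha} \, dP
\end{align*}
in both regimes. Adding the identity for the linear term and the common constant $\alpha^{-1}(\log\alpha+1)$ recovers on the right exactly the variational objective for $R_\alpha^{K[\Gamma],IC}(P\|Q)$ evaluated at $K[g]$. Since $K[g] \in K[\Gamma]$ with $K[g]<0$, taking the supremum over $g$ on the left and enlarging to the full supremum over $\{\tilde g \in K[\Gamma] : \tilde g < 0\}$ on the right yields \req{eq:data_proc1_app}.

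For part 2 the argument is essentially identical, using instead the general Fubini identities $\int g \, d(Q \otimes K) = \int K[g] \, dQ$ and $\int |g|^{(\alpha-1)/\alpha} \, d(P \otimes K) = \int K[|g|^{(\alpha-1)/\alpha}] \, dP$. The Jensen step is applied slice-by-slice in $x$ and goes through verbatim whether or not $g$ depends on $x$. The main obstacle, though it is only bookkeeping, is to keep the direction of Jensen's inequality aligned with the sign of $1/(\alpha-1)$ across the two regimes $\alpha > 1$ and $\alpha \in (0,1)$, and to confirm at each step that $K[g]$ is indeed an admissible strictly negative element of $K[\Gamma]$ so that it lies in the feasible set on the right.
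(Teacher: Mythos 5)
Your argument is correct and follows essentially the same route as the paper's proof: apply the dual variational representation \req{eq:Renyi_inf_conv} to both sides, use Fubini to match the linear terms, apply Jensen's inequality for $t\mapsto t^{(\alpha-1)/\alpha}$ pointwise in $x$ (with concavity for $\alpha>1$, convexity for $\alpha\in(0,1)$), and note that the sign of $1/(\alpha-1)$ compensates so the bound goes the right way in both regimes. Your additional remarks — verifying admissibility of $K[\Gamma]$ and observing that $g<0$ gives $K[g]<0$ with $|K[g]|=K[|g|]$ — are correct details that the paper states without elaboration.
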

\begin{proof}
It is straightforward to show that admissiblility of $\Gamma$ implies admissibility of $K[\Gamma]$.  Hence we can write
\begin{align}\label{eq:data_proc_tmp}
R_\alpha^{K[\Gamma],IC}(P\|Q)=&\sup_{\tilde g\in K[\Gamma]:\tilde g<0}\left\{\int \tilde g dQ +\frac{1}{\alpha-1}\log\int |\tilde g|^{(\alpha-1)/\alpha} dP\right\}+\alpha^{-1}(\log\alpha +1)\\
\geq &\sup_{ g\in \Gamma: g<0}\left\{\int  K[g] dQ +\frac{1}{\alpha-1}\log\int |K[ g]|^{(\alpha-1)/\alpha} dP\right\}+\alpha^{-1}(\log\alpha +1)\,.\notag
\end{align}
Using Jensen's inequality we can derive
\begin{align}
    &|\int g(y)K_x(dy)|^{(\alpha-1)/\alpha}\leq \int |g(y)|^{(\alpha-1)/\alpha}K_x(dy)\,,\,\,\,\alpha\in(0,1)\,,\label{eq:data_proc_tmp2}\\
       &|\int g(y)K_x(dy)|^{(\alpha-1)/\alpha}\geq \int |g(y)|^{(\alpha-1)/\alpha}K_x(dy)\,,\,\,\,\alpha>1\,.\label{eq:data_proc_tmp3}
       \end{align}
       Combining \req{eq:data_proc_tmp} - \req{eq:data_proc_tmp3} with the monotonicity of $y\mapsto\frac{1}{\alpha-1}\log(y)$ we arrive at \req{eq:data_proc1_app}. The proof of \req{eq:data_proc2_app} is similar.
\end{proof}

\subsection{Further remarks regarding the proof of Theorem \ref{thm:Renyi_inf_conv}}\label{sec:dual_var_proof}
 The Fenchel-Rockafellar Theorem applies under two different sets of assumptions:  the first assumes both mappings are lower semicontinuous (LSC)  while the second applies when one mapping is continuous at a point where both are finite. The mapping $\Lambda_\alpha^P$, as defined by \req{eq:Lambda_def_app} and appearing in  \req{eq:Renyi_LT}, is {\bf not} LSC but it is continuous on its domain, hence we used the second version of Fenchel-Rockafellar in our proof of Theorem \ref{thm:Renyi_inf_conv}. For $\alpha>1$ one could  alternatively redefine $\Lambda_\alpha^P$ along the boundary of $\{g<0\}$ to make it LSC while still maintaining the relation \req{eq:Renyi_LT} and thereby utilize the first version of Fenchel-Rockafellar.  This alternative approach is also amenable to extending the theorem to non-compact spaces, using the methods from \cite{Dupuis:Mao,JMLR:v23:21-0100}. However, these methods do not apply to  $\alpha\in(0,1)$. With this in mind, in order to provide a simple unified treatment of all $\alpha\in(0,1)\cup(1,\infty)$ we structured our proof around the second version of the Fenchel-Rockafellar Theorem.
 
 Despite the fact that $\Lambda_\alpha^P$  is not LSC, the Fenchel-Rockafellar Theorem does imply that convex duality holds at all points of continuity in the domain, i.e., one has
\begin{align}
\Lambda_\alpha^P[g]=\sup_{\eta\in M(X)}\{\int gd\eta-R_\alpha(P\|\eta)\}\,\, \text{ for all $g<0$}\,,
\end{align}
but this duality formula doesn't necessarily hold if $g\not<0$. Here,  $R_\alpha(P\|\eta)$ for general $\eta\in M(X)$  is defined via the variational formula
\begin{align}
    R_\alpha(P\|\eta)\coloneqq (\Lambda_\alpha^P)^*[\eta]= &\sup_{g\in C(X)}\{\int gd\eta-\Lambda_\alpha^P[g]\}
\end{align}
    and one can rewrite this in terms of the classical R{\'e}nyi divergence as follows
\begin{align}
      R_\alpha(P\|\eta)=&\begin{cases}
    \infty \text{ if } \eta\not\geq 0 \text{ or }\eta=0\,,\\
    R_\alpha(P\|\frac{\eta}{\|\eta\|})-\frac{1}{\alpha}\log\|\eta\| \text{ if $\eta$ is a nonzero positive measure.}    \end{cases}
\end{align}

\section*{Acknowledgments}
The research of J.B., M.K. and L.R.-B. was partially supported by the Air Force Office of Scientific Research (AFOSR) under the grant FA9550-21-1-0354. The research of M. K. and L.R.-B. was partially supported by the National Science Foundation (NSF) under the grants DMS-2008970 and TRIPODS CISE-1934846. The research of P.D. was partially supported by the 
NSF under the grant DMS-1904992 and by the AFOSR under the grant FA-9550-21-1-0354. The work of Y.P. was partially supported by the Hellenic Foundation for Research and Innovation (HFRI) through the “Second Call for HFRI Research Projects to support Faculty Members and Researchers” under Project 4753. This work was performed in part using high performance computing equipment obtained under a grant from the Collaborative R\&D Fund managed by the Massachusetts Technology Collaborative.

\appendix
\section{Variance of R{\'e}nyi estimators}\label{app:var_example}
Here we compare the DV-R{\'e}nyi and CC-R{\'e}nyi estimators on the   Gaussian test problem from Section \ref{sec:var_example}, except in lower dimensions (1-D and 100-D).  Qualitatively, the behavior is similar.  In particular, it is striking that the DV-R{\'e}nyi estimator performs extremely poorly even in the 1-D case (see Figure \ref{fig:var_example_dim1}) while the CC-R{\'e}nyi estimator has much lower variance and MSE when the separation between the distributions becomes larger (i.e., as $\mu_q$ increases).

\begin{figure}[ht]
\begin{minipage}[b]{0.43\linewidth}
  \centering
\includegraphics[scale=.50]{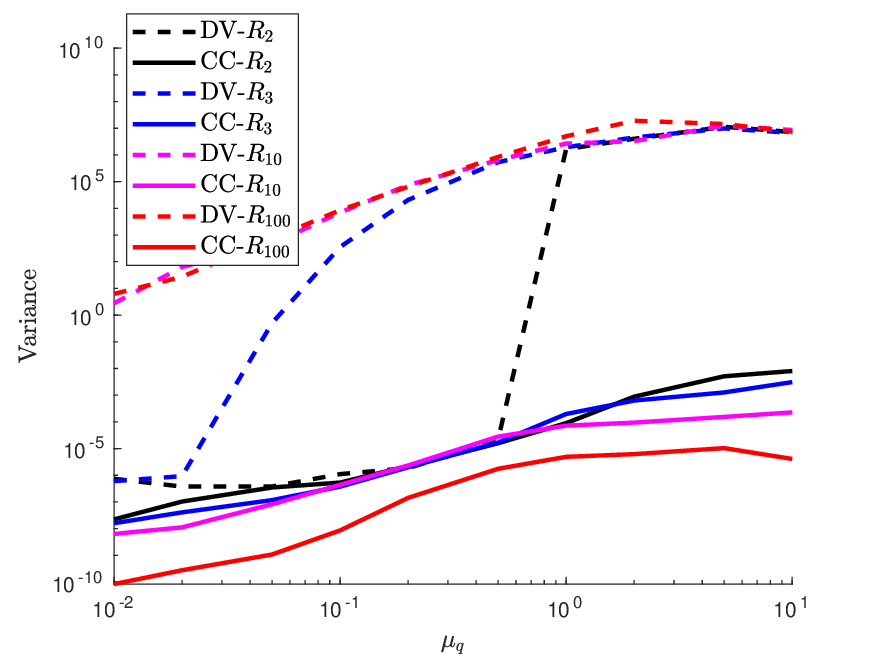} \end{minipage}
\hspace{0.5cm}
\begin{minipage}[b]{0.43\linewidth}
  \centering
\includegraphics[scale=.50]{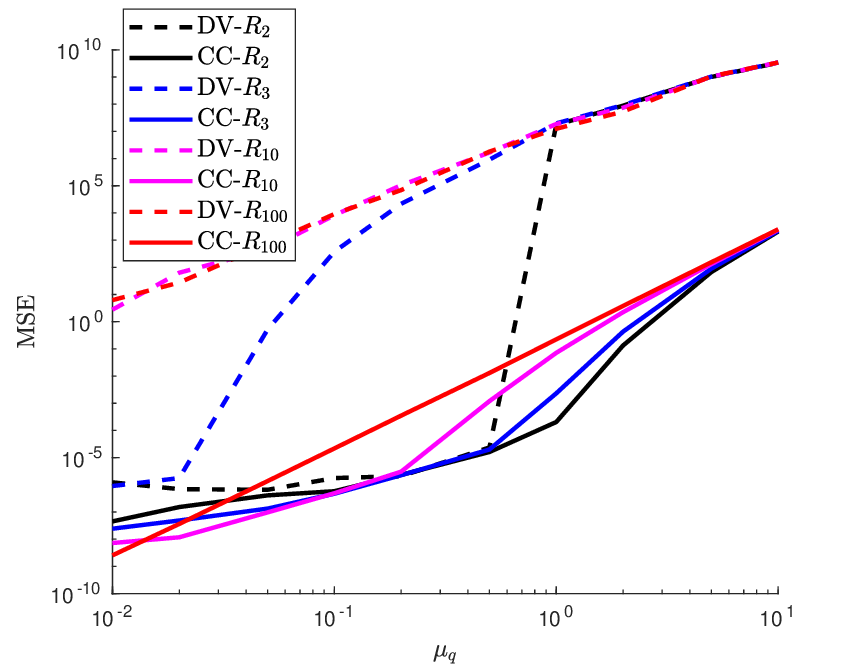} \end{minipage}
\caption{Variance and MSE of estimators of the classical R{\'e}nyi divergence between 1-dimensional Gaussians. DV-$R_\alpha$ refers to R{\'e}nyi divergence estimators built using \req{eq:Renyi_var} while CC-$R_\alpha$ refers to estimators built using our new variational representation \req{eq:Renyi_LT_var_Mb_main}. We used a NN with one fully connected layer of 64 nodes, ReLU activations, and a poly-softplus final layer (for CC-R{\'e}nyi). We trained for 10000 epochs with a minibatch size of 500. The variance and MSE were computing using data from 50 independent runs. Note that the CC-R{\'e}nyi estimator has significantly  reduced variance and MSE compared to the DV-R{\'e}nyi estimator, even when $\alpha$ is large.}\label{fig:var_example_dim1}
\end{figure}

\begin{figure}[ht]
\begin{minipage}[b]{0.43\linewidth}
  \centering
\includegraphics[scale=.50]{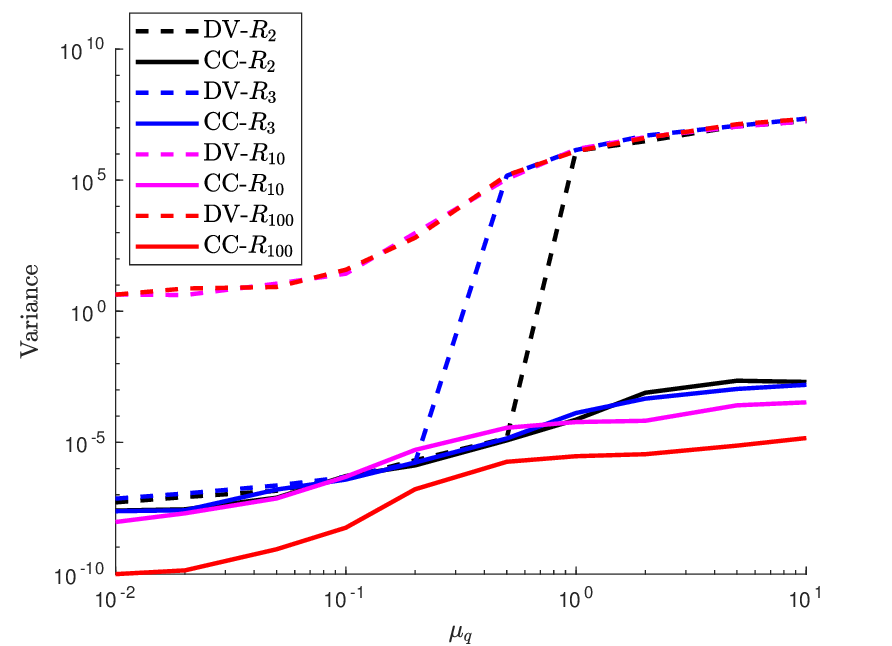} \end{minipage}
\hspace{0.5cm}
\begin{minipage}[b]{0.43\linewidth}
  \centering
\includegraphics[scale=.50]{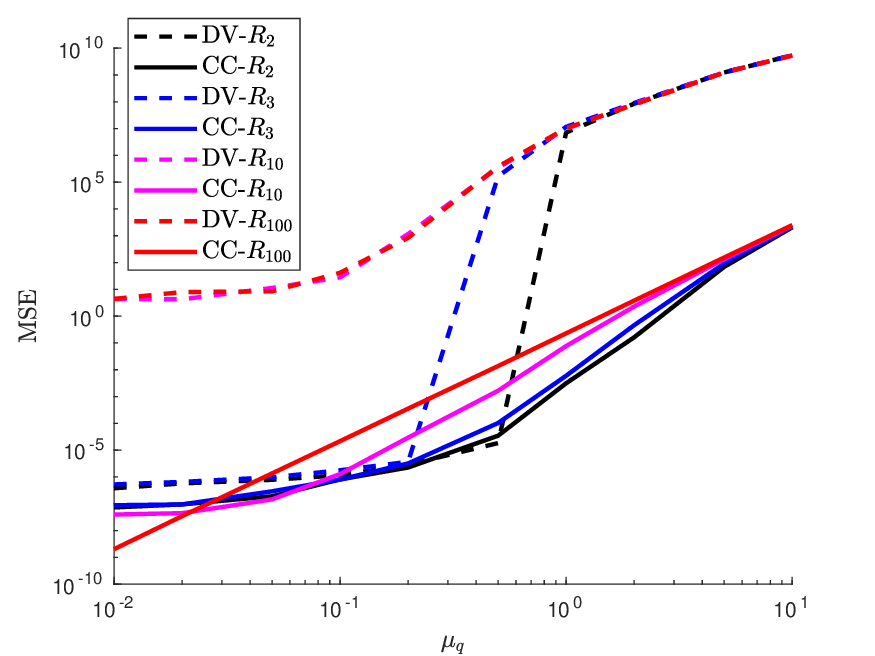} \end{minipage}
\caption{Variance and MSE of estimators of the classical R{\'e}nyi divergence between 100-dimensional Gaussians. DV-$R_\alpha$ refers to R{\'e}nyi divergence estimators built using \req{eq:Renyi_var} while CC-$R_\alpha$ refers to estimators built using our new variational representation \req{eq:Renyi_LT_var_Mb_main}. We used a NN with one fully connected layer of 64 nodes, ReLU activations, and a poly-softplus final layer (for CC-R{\'e}nyi). We trained for 10000 epochs with a minibatch size of 500. The variance and MSE were computing using data from 50 independent runs. Again, the CC-R{\'e}nyi estimator has significantly  reduced variance and MSE compared to the DV-R{\'e}nyi estimator, even when $\alpha$ is large.}\label{fig:var_example_dim100}
\end{figure}

\end{document}